\newtheorem{thm}{Theorem}[section]
\newtheorem{lem}[thm]{Lemma}
\newtheorem{prop}[thm]{Proposition}
\newtheorem{defn}{Definition}
\newtheorem{cor}[thm]{Corollary}
\newtheorem{assume}{Assumption}
\numberwithin{equation}{section}
\newcommand*\samethanks[1][\value{footnote}]{\footnotemark[#1]}
\DeclareMathOperator*{\Tr}{\mathrm{tr}}
\DeclareMathOperator*{\polylog}{polylog}
\newcommand{\R}{\ensuremath{\mathbb{R}}}
\newcommand{\norm}[1]{\lVert #1 \rVert}
\newcommand{\bignorm}[1]{\left\lVert #1 \right\rVert}
\newcommand{\ip}[2]{\ensuremath{\langle #1, #2 \rangle}}
\newcommand{\bigip}[2]{\left\langle #1, #2 \right\rangle}
\newcommand{\Cov}{\mathrm{Cov}}
\newcommand{\E}{\mathbb{E}}
\newcommand{\abs}[1]{\ensuremath{| #1 |}}
\newcommand{\bigabs}[1]{\ensuremath{\left| #1 \right|}}
\newcommand{\ceil}[1]{\lceil #1 \rceil}
\newcommand{\bigceil}[1]{\left\lceil #1 \right\rceil}
\newcommand{\ind}{\mathbf{1}}
\newcommand{\svec}{\mathrm{svec}}
\newcommand{\smat}{\mathrm{smat}}
\renewcommand{\Pr}{\mathbb{P}}
\newcommand{\T}{\mathsf{T}}
\newcommand{\smin}{\sigma_{\mathrm{min}}}
\newcommand{\calN}{\mathcal{N}}
\newcommand{\calE}{\mathcal{E}}
\newcommand{\calF}{\mathcal{F}}
\newcommand{\calS}{\mathcal{S}}
\newcommand{\calD}{\mathcal{D}}
\newcommand{\calO}{\mathcal{O}}
\newcommand{\cvectwo}[2]{\begin{bmatrix} #1 \\ #2 \end{bmatrix}}
\newcommand{\rvectwo}[2]{\begin{bmatrix} #1 & #2 \end{bmatrix}}
\newcommand{\bmattwo}[4]{\begin{bmatrix} #1 & #2 \\ #3 & #4 \end{bmatrix}}
\newcommand{\Ah}{\widehat{A}}
\newcommand{\Bh}{\widehat{B}}
\newcommand{\Ph}{\widehat{P}}
\newcommand{\Kh}{\widehat{K}}
\newcommand{\dlyap}{\mathsf{dlyap}}
\newcommand{\Avg}{\mathsf{Avg}}
\newcommand{\Otilde}{\widetilde{O}}
\newcommand{\Keval}{K_{\mathrm{eval}}}
\newcommand{\Kplay}{K_{\mathrm{play}}}
\newcommand{\Lplay}{L_{\mathrm{play}}}
\newcommand{\qh}{\widehat{q}}
\newcommand{\Qh}{\widehat{Q}}
\newcommand{\Qhat}{\widehat{Q}}
\newcommand{\Tmult}{T_{\mathrm{mult}}}
\newcommand{\Kbar}{\overline{K}}
\newcommand{\Vbar}{\overline{V}}
\newcommand{\rhobar}{\overline{\rho}}
\newcommand{\sigmabar}{\overline{\sigma}}
\newcommand{\varepsilonbar}{\overline{\varepsilon}}
\newcommand{\Creq}{C_{\mathrm{req}}}
\newcommand{\Cerr}{C_{\mathrm{err}}}
\newcommand{\Kmax}{K_{\mathrm{max}}}
\newcommand{\Vmax}{V_{\mathrm{max}}}
\newcommand{\vertiii}[1]{{\left\vert\kern-0.25ex\left\vert\kern-0.25ex\left\vert #1 
    \right\vert\kern-0.25ex\right\vert\kern-0.25ex\right\vert}}
\begin{document}

\title{Finite-time Analysis of Approximate Policy Iteration for the Linear Quadratic Regulator}

\author{Karl Krauth\thanks{Both authors contributed equally to this work.}~, Stephen Tu\samethanks[1]~, and Benjamin Recht \\
University of California, Berkeley}
\maketitle


\begin{abstract}
We study the sample complexity of approximate policy iteration (PI) for the 
Linear Quadratic Regulator (LQR), building on a recent line of work using
LQR as a testbed to understand the limits of reinforcement learning (RL)
algorithms on continuous control tasks.
Our analysis quantifies the tension between policy improvement and
policy evaluation, and suggests that policy evaluation is the
dominant factor in terms of sample complexity.
Specifically, we show that to obtain a controller that is within
$\varepsilon$ of the optimal LQR controller,
each step of policy evaluation requires at most $(n+d)^3/\varepsilon^2$ samples,
where $n$ is the dimension of the state vector and $d$ is the dimension of the input vector. 
On the other hand, only $\log(1/\varepsilon)$ policy improvement steps suffice,
resulting in an overall sample complexity of $(n+d)^3 \varepsilon^{-2} \log(1/\varepsilon)$.
We furthermore build on our analysis and construct a simple adaptive procedure based on $\varepsilon$-greedy exploration which relies
on approximate PI
as a sub-routine and obtains $T^{2/3}$ regret, improving upon a recent
result of \citet{abbasi18}.
\end{abstract}

\section{Introduction}

With the recent successes of reinforcement learning (RL) on continuous
control tasks, there has been a renewed interest in understanding the
sample complexity of RL methods.
A recent line of work
has focused on the Linear Quadratic Regulator (LQR)
as a testbed to understand the behavior and trade-offs of various RL algorithms in the continuous state and action space setting.
These results can be broadly grouped into two categories: 
(1) the study of \emph{model-based} methods which use data to build an estimate
of the transition dynamics, and (2) \emph{model-free} methods
which directly estimate the optimal feedback controller from data without building a dynamics model as an intermediate step.
Much of the recent progress in LQR has focused on the model-based side,
with an analysis of robust control from \citet{dean17}
and certainty equivalence control by \citet{fiechter97} and \citet{mania19}.
These techniques have also been extended to the online, adaptive setting
\cite{abbasi11,dean18,cohen19,abeille18,ouyang17}.
On the other hand, for classic model-free RL algorithms
such as Q-learning, SARSA, and approximate policy iteration (PI),
our understanding is much less complete despite the fact that these algorithms
are well understood in the tabular (finite state and action space) setting.
Indeed, most of the model-free analysis for LQR \cite{fazel18,malik19,tu18b} 
has focused exclusively on derivative-free random search methods.

In this paper, we extend our understanding of model-free algorithms for LQR by studying the 
performance of approximate PI on LQR, which is a 
classic approximate dynamic programming algorithm. Approximate PI
is a model-free algorithm which iteratively uses trajectory data to estimate
the state-value function associated to the current policy (via e.g.~temporal difference learning), and then
uses this estimate to greedily improve the policy. A key issue in analyzing approximate PI
is to understand the trade-off between the number of policy improvement iterations, and 
the amount of data to collect for each policy evaluation phase.
Our analysis quantifies this trade-off, showing that if 
least-squares temporal difference learning (LSTD-Q) \cite{boyan99,lagoudakis03} is used for policy evaluation, then
a trajectory of length $\Otilde((n+d)^3/\varepsilon^2)$ for each inner step
of policy evaluation combined with $\calO(\log(1/\varepsilon))$ outer steps of policy improvement 
suffices to learn a controller that has $\varepsilon$-error from the optimal controller.
This yields an overall sample complexity of $\calO( (n+d)^3 \varepsilon^{-2}\log(1/\varepsilon))$.
Prior to our work, the only known guarantee for approximate PI on LQR was the
asymptotic consistency result of \citet{bradtke94} in the setting of no process noise.

We also extend our analysis of approximate PI to the online, adaptive LQR
setting popularized by \citet{abbasi11}. By using a greedy exploration scheme
similar to \citet{dean18} and \citet{mania19}, we prove a $\Otilde(T^{2/3})$ regret bound for
a simple adaptive policy improvement algorithm. While the $T^{2/3}$ rate
is sub-optimal compared to the $T^{1/2}$ regret from model-based methods
\cite{abbasi11,cohen19,mania19}, our analysis improves the $\Otilde(T^{2/3+\varepsilon})$
regret (for $T \geq C^{1/\varepsilon}$) from the model-free Follow the Leader (FTL) algorithm of \citet{abbasi18}. To the best of our knowledge, we give the best regret guarantee known for
a model-free algorithm. We leave open the question of whether or not a model-free
algorithm can achieve optimal $T^{1/2}$ regret.


\section{Main Results}

In this paper, we consider the following linear dynamical system:
\begin{align}
	x_{t+1} = A x_t + B u_t + w_t \:, \:\: w_t \sim \calN(0, \sigma_w^2 I) \:, \:\: x_0 \sim \calN(0, \Sigma_0) \:. \label{eq:dynamics}
\end{align}
We let $n$ denote the dimension of the state $x_t$
and $d$ denote the dimension of the input $u_t$. For simplicity we assume that $d \leq n$,
e.g. the system is under-actuated.
We fix two positive definite cost matrices $(S, R)$, and 
consider the infinite horizon average-cost Linear Quadratic Regulator (LQR):
\begin{align}
	J_\star := \min_{\{u_t(\cdot)\}} \lim_{T \to \infty} \E\left[ \frac{1}{T} \sum_{t=1}^{T} x_t^\T S x_t + u_t^\T R u_t \right] \:\: \text{subject to} \:\: \eqref{eq:dynamics} \:. \label{eq:lqr_problem}
\end{align}
We assume the dynamics matrices $(A,B)$ are unknown to us, and our method of interaction
with \eqref{eq:dynamics} is to choose an input sequence $\{u_t\}$ and observe the resulting
states $\{x_t\}$.

We study the solution to \eqref{eq:lqr_problem} using \emph{least-squares policy iteration (LSPI)},
a well-known approximate dynamic programming method in RL introduced by \citet{lagoudakis03}. 
The study
of approximate PI on LQR dates back to the Ph.D. thesis of Bradtke~\cite{bradtke94},
where he showed that for \emph{noiseless} LQR (when $w_t = 0$ for all $t$),
the approximate PI algorithm is asymptotically consistent.
In this paper we expand on this result and quantify non-asymptotic rates for approximate PI on LQR.

\paragraph{Notation.}
For a positive scalar $x > 0$, we let $x_+ = \max\{1, x\}$.
A square matrix $L$ is called stable if $\rho(L) < 1$ where
$\rho(\cdot)$ denotes the spectral radius of $L$. For a symmetric matrix $M \in \R^{n \times n}$, we let
$\dlyap(L, M)$ denote the unique solution to the
discrete Lyapunov equation $P = L^\T P L + M$.
We also let $\svec(M) \in \R^{n(n+1)/2}$ denote
the vectorized version of the upper triangular part of $M$
so that $\norm{M}_F^2 = \ip{\svec(M)}{\svec(M)}$.
Finally, $\smat(\cdot)$ denotes the inverse of $\svec(\cdot)$, so that
$\smat(\svec(M)) = M$.

\subsection{Least-Squares Temporal Difference Learning (LSTD-Q)}
\label{sec:results:lstdq}

The first component towards an understanding of approximate PI is to understand least-squares temporal difference learning (LSTD-Q) for $Q$-functions, which is the fundamental building block of LSPI. Given a policy $\Keval$ which stabilizes $(A,B)$, the goal of LSTD-Q is to estimate the parameters of the $Q$-function associated to $\Keval$. Bellman's equation for infinite-horizon average cost MDPs (c.f.~\citet{bertsekas07}) states that the (relative) $Q$-function associated to a policy $\pi$ satisfies the following fixed-point equation:
\begin{align}
	\lambda + Q(x, u) = c(x, u) + \E_{x' \sim p(\cdot|x,u)}[ Q(x', \pi(x')) ] \:. \label{eq:bellman_q}
\end{align}
Here, $\lambda \in \R$ is a free parameter chosen so that the fixed-point equation holds.
LSTD-Q operates under the \emph{linear architecture} assumption, which states that
the $Q$-function can be described as $Q(x, u) = q^\T \phi(x, u)$, for a known (possibly non-linear)
feature map $\phi(x, u)$. It is well known that LQR satisfies the linear architecture assumption, since we have:
\begin{align*}
	Q(x, u) &= \svec(Q)^\T \svec\left( \cvectwo{x}{u}\cvectwo{x}{u}^\T \right)  \:, \:\: Q = \bmattwo{S}{0}{0}{R} + \cvectwo{A^\T}{B^\T} V \rvectwo{A}{B} \:, \\
	V &= \dlyap(A + B\Keval, S + \Keval^\T R \Keval) \:, \:\: \lambda = \bigip{Q}{\sigma_w^2 \cvectwo{I}{\Keval}\cvectwo{I}{\Keval}^\T} \:.
\end{align*}
Here, we slightly abuse notation and let $Q$ denote the $Q$-function and also the matrix parameterizing the $Q$-function. Now suppose that a trajectory $\{(x_t, u_t, x_{t+1})\}_{t=1}^{T}$ is collected.
Note that LSTD-Q is an \emph{off-policy} method (unlike the closely related LSTD estimator for value functions), and therefore the inputs $u_t$ can come from any sequence that provides sufficient excitation
for learning. In particular, it does \emph{not} have to come from the policy $\Keval$. In this
paper, we will consider inputs of the form:
\begin{align}
	u_t = \Kplay x_t + \eta_t \:, \:\: \eta_t \sim \calN(0, \sigma_\eta^2 I) \:,
\end{align}
where $\Kplay$ is a stabilizing controller for $(A,B)$. Once again we emphasize that
$\Kplay \neq \Keval$ in general. The injected noise $\eta_t$ is needed in order to provide
sufficient excitation for learning.
In order to describe the LSTD-Q estimator, we define the following quantities
which play a key role throughout the paper:
\begin{align*}
	\phi_t &:= \phi(x_t, u_t) \:, \:\: \psi_t := \phi(x_t, \Keval x_t) \:, \\
	 f &:= \svec\left(\sigma_w^2 \cvectwo{I}{\Keval}\cvectwo{I}{\Keval}^\T\right) \:, \:\: c_t := x_t^\T S x_t + u_t^\T R u_t \:.
\end{align*}
The LSTD-Q estimator estimates $q$ via:
\begin{align}
	\qh := \left( \sum_{t=1}^{T} \phi_t (\phi_t - \psi_{t+1} + f)^\T \right)^{\dag} \sum_{t=1}^{T} \phi_t c_t \:. \label{eq:lstdq_estimator}
\end{align}
Here, $(\cdot)^{\dag}$ denotes the Moore-Penrose pseudo-inverse. Our first result
establishes a non-asymptotic bound on the quality of the estimator $\qh$, measured
in terms of $\norm{\qh-q}$. Before we state our result, we introduce a key definition
that we will use extensively.
\begin{defn}
\label{def:tau_rho_stable}
Let $L$ be a square matrix.
Let $\tau \geq 1$ and $\rho \in (0, 1)$. We say that $L$ is $(\tau, \rho)$-stable
if
\begin{align*}
	\norm{L^k} \leq \tau \rho^k \:, \:\: k=0, 1, 2, ... \:.
\end{align*}
\end{defn}
While stability of a matrix is an asymptotic notion,
Definition~\ref{def:tau_rho_stable} quantifies the degree of stability by
characterizing the transient response of the powers of a matrix by the parameter $\tau$.
It is closely related to the notion of \emph{strong stability} from \citet{cohen19}.

With Definition~\ref{def:tau_rho_stable} in place, we state our first result
for LSTD-Q.
\begin{thm}
\label{thm:lstd_q_estimation}
Fix a $\delta \in (0, 1)$. Let policies $\Kplay$ and $\Keval$ stabilize $(A,B)$,
and assume that both $A+B\Kplay$ and $A+B\Keval$ are $(\tau,\rho)$-stable. Let the initial state $x_0 \sim \calN(0, \Sigma_0)$
and consider the inputs $u_t = \Kplay x_t + \eta_t$ with $\eta_t \sim \calN(0, \sigma_\eta^2 I)$.
For simplicity, assume that $\sigma_\eta \leq \sigma_w$. Let $P_\infty$ denote
the steady-state covariance of the trajectory $\{x_t\}$:
\begin{align}
	P_\infty = \dlyap((A+B\Kplay)^\T, \sigma_w^2 I + \sigma_\eta^2 BB^\T) \:.
\end{align}
Define the proxy variance $\sigmabar^2$ by:
\begin{align}
	\sigmabar^2 := \tau^2\rho^4\norm{\Sigma_0} + \norm{P_\infty} + \sigma_\eta^2 \norm{B}^2 \:. \label{eq:proxy_variance}
\end{align}
Suppose that $T$ satisfies:
\begin{align}
    T \geq \Otilde(1) \max\bigg\{ (n+d)^2, \frac{\tau^4}{\rho^4(1-\rho^2)^2} \frac{(n+d)^4}{\sigma_\eta^4} \sigma_w^2 \sigmabar^2 \norm{\Kplay}_+^4 \norm{\Keval}_+^8  (\norm{A}^4+\norm{B}^4)_+
    \bigg\} \:. \label{eq:lstdq_T_requirement}
\end{align}
Then we have with probability at least $1-\delta$,
\begin{align}
    \norm{\qh - q} \leq \Otilde(1) \frac{\tau^2}{\rho^2(1-\rho^2)} \frac{(n+d)}{\sigma_\eta^2\sqrt{T}}\sigma_w \sigmabar \norm{\Kplay}_+^2 \norm{\Keval}_+^4 (\norm{A}^2+\norm{B}^2)_+\norm{Q^{\Keval}}_F \:. \label{eq:lstdq_T_bound}
\end{align}
Here the $\Otilde(1)$ hides $\polylog( n, \tau, \norm{\Sigma_0}, \norm{P_\infty}, \norm{\Kplay}, T/\delta, 1/\sigma_\eta )$ factors.
\end{thm}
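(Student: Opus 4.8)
The plan is to recognize the LSTD-Q estimator \eqref{eq:lstdq_estimator} as an instrumental-variables estimator (with $\phi_t$ playing the role of instrument) and to reduce $\qh - q$ to a ratio of a martingale ``numerator'' against a Gram-matrix ``denominator,'' each controlled separately. First I would establish the key Bellman-residual identity. Writing $z_t = \cvectwo{x_t}{u_t}$ so that $\phi_t = \svec(z_t z_t^\T)$, and letting $\F_t$ be the $\sigma$-algebra generated by the trajectory through time $t$, the transition $x_{t+1} = (A+B\Kplay)x_t + B\eta_t + w_t$ gives $\E[\psi_{t+1}\mid\F_t] = \bar\psi_{t+1} + f$, where $\bar\psi_{t+1} = \svec(\bar z_{t+1}\bar z_{t+1}^\T)$ is the noiseless-transition feature and the constant $f$ exactly accounts for the $\sigma_w^2 I$ contribution of $w_t$. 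Using the identity $\lambda = q^\T f$ together with the fixed point \eqref{eq:bellman_q}, the cost collapses to $c_t = q^\T(\phi_t - \E[\psi_{t+1}\mid\F_t] + f)$, so that with the martingale difference $\varepsilon_{t+1} := \psi_{t+1} - \E[\psi_{t+1}\mid\F_t]$ one gets $c_t = q^\T(\phi_t - \psi_{t+1} + f) + q^\T\varepsilon_{t+1}$. Substituting into \eqref{eq:lstdq_estimator} and assuming $\widehat\Sigma := \sum_t\phi_t(\phi_t - \psi_{t+1}+f)^\T$ has full rank yields the exact error formula
\begin{align*}
	\qh - q = \widehat\Sigma^{\dag}\sum_{t=1}^{T}\phi_t\,(q^\T\varepsilon_{t+1}) \:,
\end{align*}
and hence $\norm{\qh - q} \leq \smin(\widehat\Sigma)^{-1}\bignorm{\sum_t\phi_t(q^\T\varepsilon_{t+1})}$.

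For the numerator, note that $q^\T\varepsilon_{t+1}$ is a centered quadratic form in the Gaussian increment $w_t$ (plus a $w_t$-linear cross term with $\bar z_{t+1}$), hence conditionally sub-exponential with scale set by $\sigma_w$ and $\norm{q} = \norm{Q^{\Keval}}_F$. The summand $\phi_t(q^\T\varepsilon_{t+1})$ is therefore a vector-valued martingale-difference sequence, and I would bound $\bignorm{\sum_t\phi_t(q^\T\varepsilon_{t+1})}$ by a self-normalized, Freedman-type concentration inequality whose predictable quadratic variation is controlled by $\sum_t\phi_t\phi_t^\T$. The proxy variance $\sigmabar$ from \eqref{eq:proxy_variance} enters here as a uniform bound on the typical size of $\phi_t$: the $\tau^2\rho^4\norm{\Sigma_0}$ term captures the decaying transient from the initial condition, $\norm{P_\infty}$ the stationary magnitude, and $\sigma_\eta^2\norm{B}^2$ the excitation, with $(\tau,\rho)$-stability of $A + B\Kplay$ used to bound all trajectory moments uniformly in $t$.

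The denominator is where the structure of LQR is exploited, and I expect it to be the main obstacle. Because the eval action is deterministic, $\bar z_{t+1} = \cvectwo{I}{\Keval}\rvectwo{A}{B} z_t$, so there is a \emph{linear} operator $\calL$ on $\svec$-space — the lift of $M \mapsto \cvectwo{I}{\Keval}\rvectwo{A}{B}\,M\,\big(\cvectwo{I}{\Keval}\rvectwo{A}{B}\big)^\T$ — with $\bar\psi_{t+1} = \calL\phi_t$. Consequently $\widehat\Sigma = \big(\sum_t\phi_t\phi_t^\T\big)(I-\calL)^\T - \sum_t\phi_t\varepsilon_{t+1}^\T$, and I would lower bound $\smin(\widehat\Sigma)$ by $\smin(I-\calL)\,\lambda_{\min}\big(\sum_t\phi_t\phi_t^\T\big)$ minus the operator norm of the martingale remainder $\sum_t\phi_t\varepsilon_{t+1}^\T$ (bounded as in the numerator step). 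Since the nonzero eigenvalues of $\cvectwo{I}{\Keval}\rvectwo{A}{B}$ coincide with those of $A+B\Keval$, $(\tau,\rho)$-stability of $A+B\Keval$ makes $\calL$ a contraction, guarantees $I-\calL$ is invertible, and controls $\smin(I-\calL)$ in terms of $\tau,\rho,\norm{\Keval}$ — the source of the $\norm{\Keval}_+$ and $\frac{\tau^2}{\rho^2(1-\rho^2)}$ factors in \eqref{eq:lstdq_T_bound}.

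The genuinely hard part is the remaining covariance lower bound $\lambda_{\min}\big(\sum_t\phi_t\phi_t^\T\big) \gtrsim T\sigma_\eta^4$: the features $\phi_t$ are \emph{quadratic} in a correlated Gaussian process, so their empirical second-moment matrix is a sum of dependent, heavy-tailed (sub-exponential, not sub-Gaussian) rank-one terms. I would address this by first showing the stationary second-moment operator $\E[\phi_\infty\phi_\infty^\T]$ is well-conditioned — the injected excitation $\sigma_\eta$ forces a strictly positive smallest eigenvalue, scaling like $\sigma_\eta^4$ precisely because the features are quadratic, which explains the $\sigma_\eta^{-4}$ in \eqref{eq:lstdq_T_requirement} and $\sigma_\eta^{-2}$ in \eqref{eq:lstdq_T_bound} — and then transferring this to the empirical matrix by a one-sided small-ball / truncation argument that tolerates the sub-exponential tails, again invoking $(\tau,\rho)$-stability to bound mixing and the higher moments of the quadratic covariates. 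Finally, combining the numerator bound with $\smin(\widehat\Sigma) \gtrsim T\sigma_\eta^4$, and imposing the burn-in $T \gtrsim (n+d)^2$ together with the threshold \eqref{eq:lstdq_T_requirement} so that the martingale remainder is dominated by the leading Gram term, produces the stated $1/\sqrt{T}$ rate \eqref{eq:lstdq_T_bound}.
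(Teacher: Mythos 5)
Your proposal is correct and follows essentially the same route as the paper: the identity $c_t = q^\T(\phi_t - \E[\psi_{t+1}\mid\F_t] + f)$ and the factorization of the Bellman-residual Gram matrix through $I - \calL$ (the paper's $I - L\otimes_s L$) reproduce the paper's basic inequality, and the three ingredients you isolate --- a small-ball lower bound $\lambda_{\min}(\sum_t\phi_t\phi_t^\T)\gtrsim\sigma_\eta^4 T$, a self-normalized martingale bound for $\sum_t\phi_t(q^\T\varepsilon_{t+1})$ (with truncation to handle the sub-exponential increments), and a $(\tau,\rho)$-stability bound on $\smin(I-\calL)$ via the associated Lyapunov/Neumann series --- are exactly the paper's Propositions on the BMSB condition and $\smin(\Phi)$, its self-normalized martingale lemma, and its $\dlyap$ norm bound. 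The only cosmetic difference is that you lower-bound $\smin(\widehat\Sigma)$ directly by subtracting the martingale remainder, whereas the paper normalizes by $(\Phi^\T\Phi)^{1/2}$ and imposes the equivalent condition that the normalized cross term be at most $1/2$.
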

Theorem~\ref{thm:lstd_q_estimation} states that:
\begin{align*}
T \leq \Otilde\left( (n+d)^4, \frac{1}{\sigma_\eta^4} \frac{(n+d)^3}{\varepsilon^2} \right)
\end{align*}
timesteps are sufficient to achieve error $\norm{\qh - q} \leq \varepsilon$ w.h.p.
Several remarks are in order. First, while the $(n+d)^4$ burn-in is likely sub-optimal,
the $(n+d)^3/\varepsilon^2$ dependence is sharp as shown by
the asymptotic results of \citet{tu18b}. Second, the $1/\sigma_\eta^4$ dependence on the injected excitation noise will be important when we study the online, adaptive setting in Section~\ref{sec:results:adaptive}.
We leave improving the polynomial dependence of the burn-in period to future work.

The proof of Theorem~\ref{thm:lstd_q_estimation} appears in
Section~\ref{sec:lstdq} and rests on top of several recent advances.
First, we build off the work of \citet{abbasi18} to derive a new basic inequality for LSTD-Q
which serves as a starting point for the analysis. Next, we combine the small-ball techniques of
\citet{simchowitz18} with the self-normalized martingale inequalities of \citet{abbasi11b}.
While an analysis of LSTD-Q is presented in \citet{abbasi18} (which builds on the analysis for LSTD from \citet{tu18a}), a direct application of their
result yields a $1/\sigma_\eta^8$ dependence; the use of self-normalized inequalities is necessary
in order to reduce this dependence to $1/\sigma_\eta^4$.

\subsection{Least-Squares Policy Iteration (LSPI)}
\label{sec:results:lspi_offline}

With Theorem~\ref{thm:lstd_q_estimation} in place, we are ready to present the main results
for LSPI. 
We describe two versions of LSPI in Algorithm~\ref{alg:lspi_offline_v1} and
Algorithm~\ref{alg:lspi_offline_v2}.

\begin{minipage}[t]{0.46\textwidth}
  \begin{algorithm}[H]
    \caption{$\mathsf{LSPIv1}$ for LQR}
    \begin{algorithmic}[1]
        \Require $K_0$: initial stabilizing controller,
        	\Statex ~~~~~~$N$: number of policy iterations,
        	\Statex ~~~~~~$T$: length of rollout,
        	\Statex ~~~~~~$\sigma_\eta^2$: exploration variance,
        	\Statex ~~~~~~$\mu$: lower eigenvalue bound.
        \State{Collect $\calD = \{ (x_k, u_k, x_{k+1}) \}_{k=1}^{T}$ with input $u_k = K_0 x_k + \eta_k$, $\eta_k \sim \calN(0, \sigma_\eta^2 I)$.}
        \For{$t = 0, ..., N-1$}
            \State{$\Qh_{t} = \mathsf{Proj}_{\mu}(\mathsf{LSTDQ}(\calD, K_t))$.}
            \State $K_{t+1} = G(\Qh_{t})$. [See \eqref{eq:T_def}.]
        \EndFor
        \State{{\bf return} $K_N$.}
    \end{algorithmic}
    \label{alg:lspi_offline_v1}
  \end{algorithm}
    \hspace{1cm}
\end{minipage}\quad
\begin{minipage}[t]{0.46\textwidth}
  \begin{algorithm}[H]
    \caption{$\mathsf{LSPIv2}$ for LQR}
    \begin{algorithmic}[1]
        \Require $K_0$: initial stabilizing controller,
        	\Statex ~~~~~~$N$: number of policy iterations,
        	\Statex ~~~~~~$T$: length of rollout,
        	\Statex ~~~~~~$\sigma_\eta^2$: exploration variance,
        	\Statex ~~~~~~$\mu$: lower eigenvalue bound.
        \For{$t = 0, ..., N-1$}
            \State \begin{varwidth}[t]{\linewidth}
            Collect $\calD_t = \{(x_k^{(t)}, u_k^{(t)}, x_{k+1}^{(t)})\}_{k=1}^{T}$, \par
            $u_k^{(t)} = K_0 x_k^{(t)} + \eta_k^{(t)}$,$\eta_k^{(t)} \sim \calN(0, \sigma_\eta^2 I)$.
            \end{varwidth} 
            \State{$\Qh_{t} = \mathsf{Proj}_{\mu}(\mathsf{LSTDQ}(\calD, K_t))$.}
            \State{$K_{t+1} = G(\Qh_{t})$.}
        \EndFor
        \State{{\bf return} $K_N$.}
    \end{algorithmic}
    \label{alg:lspi_offline_v2}
  \end{algorithm}
\end{minipage}

In Algorithms \ref{alg:lspi_offline_v1} and \ref{alg:lspi_offline_v2},
$\mathsf{Proj}_{\mu}(\cdot) = \arg\min_{X=X^\T : X \succeq \mu \cdot I} \norm{X - \cdot}_F$ is the
Euclidean projection onto the set of symmetric matrices
lower bounded by $\mu \cdot I$.
Furthermore, the map $G(\cdot)$ takes an $(n+d)\times (n+d)$ positive definite matrix and returns
a $d \times n$ matrix:
\begin{align}
	G\left( \bmattwo{Q_{11}}{Q_{12}}{Q_{12}^\T}{Q_{22}} \right) = - Q_{22}^{-1} Q_{12}^\T \:. \label{eq:T_def}
\end{align}

Algorithm~\ref{alg:lspi_offline_v1} corresponds to the 
version presented in \citet{lagoudakis03}, where all the data $\calD$ is collected up front
and is re-used in every iteration of LSTD-Q.
Algorithm~\ref{alg:lspi_offline_v2} is the one we will analyze in this paper,
where new data is collected for every iteration of LSTD-Q.
The modification made in Algorithm~\ref{alg:lspi_offline_v2} simplifies the analysis
by allowing the controller $K_t$ to be independent of the data $\calD_t$ in LSTD-Q.
We remark that this does \emph{not} require the system to be reset after
every iteration of LSTD-Q. We leave analyzing Algorithm~\ref{alg:lspi_offline_v1} to future work.

Before we state our main finite-sample guarantee for Algorithm~\ref{alg:lspi_offline_v2},
we review the notion of a (relative) value-function. 
Similarly to (relative) $Q$-functions,
the infinite horizon average-cost Bellman equation
states that the (relative) value function $V$ associated to a policy $\pi$ satisfies
the fixed-point equation:
\begin{align}
	\lambda + V(x) = c(x, \pi(x)) + \E_{x' \sim p(\cdot|x,\pi(x))}[V(x')] \:.
\end{align}
For a stabilizing policy $K$, it is well known that for LQR the value function $V(x) = x^\T V x$
with
\begin{align*}
	V = \dlyap(A+BK, S + K^\T R K) \:, \:\: \lambda = \ip{\sigma_w^2 I}{V} \:.
\end{align*}
Once again as we did for $Q$-functions, we slightly abuse notation and let $V$ denote
the value function and the matrix that parameterizes the value function.
Our main result for Algorithm~\ref{alg:lspi_offline_v2} appears in the following
theorem.
For simplicity, we will assume that $\norm{S} \geq 1$ and $\norm{R} \geq 1$.
\begin{thm}
\label{thm:lspi_estimation}
Fix a $\delta \in (0, 1)$. Let the initial policy $K_0$ input to Algorithm~\ref{alg:lspi_offline_v2}
stabilize $(A,B)$.
Suppose the initial state $x_0 \sim \calN(0, \Sigma_0)$
and that the excitation noise satisfies $\sigma_\eta \leq \sigma_w$.
Recall that the steady-state covariance of the trajectory $\{x_t\}$ is
\begin{align*}
	P_\infty = \dlyap((A+BK_0)^\T, \sigma_w^2 I + \sigma_\eta^2 BB^\T) \:.
\end{align*}
Let $V_0$ denote the value function associated to the initial policy $K_0$, and
$V_\star$ denote the value function associated to the optimal policy $K_\star$
for the LQR problem \eqref{eq:lqr_problem}.
Define the variables $\mu, L$ as:
\begin{align*}
    \mu &:= \min\{\lambda_{\min}(S), \lambda_{\min}(R)\} \:, \\
    L &:= \max\{\norm{S},\norm{R}\} + 2(\norm{A}^2+\norm{B}^2 + 1)\norm{V_0}_+ \:.
\end{align*}
Fix an $\varepsilon > 0$ that satisfies:
\begin{align}
    \varepsilon \leq 5 \left(\frac{L}{\mu}\right)^2 \min\left\{ 1, \frac{2\log(\norm{V_0}/\lambda_{\min}(V_\star))}{e}, \frac{\norm{V_\star}^2}{8\mu^2\log(\norm{V_0}/\lambda_{\min}(V_\star))} \right\}\:. \label{eq:lspi_Creq}
\end{align}
Suppose we run Algorithm~\ref{alg:lspi_offline_v2} for $N := N_0+1$ policy improvement iterations
where
\begin{align}
	N_0 := \bigceil{ (1+L/\mu) \log\left(\frac{2\log(\norm{V_0}/\lambda_{\min}(V_\star))}{\varepsilon}\right) } \:,
\end{align}
and we set the rollout length $T$ to satisfy:
\begin{align}
T \geq \Otilde(1) \max\bigg\{ &(n+d)^2, \nonumber \\
	&\frac{L^2}{(1-\mu/L)^2} \left(\frac{L}{\mu}\right)^{17} \frac{(n+d)^4}{\sigma_\eta^4} \sigma_w^2 (\norm{\Sigma_0} + \norm{P_\infty} + \sigma_\eta^2 \norm{B}^2), \nonumber \\
	&\frac{1}{\varepsilon^2} \frac{L^4}{(1-\mu/L)^2}\left( \frac{L}{\mu} \right)^{42} \frac{(n+d)^3}{\sigma_\eta^4}   \sigma_w^2 ( \norm{\Sigma_0} + \norm{P_\infty} + \sigma_\eta^2\norm{B}^2  ) \bigg\} \:. \label{eq:lspi_Cerr}
\end{align}
Then with probability $1-\delta$, we have that each policy $K_t$ for $t=1, ..., N$ stabilizes $(A,B)$ and
furthermore:
\begin{align*}
    \norm{K_N - K_\star} \leq \varepsilon \:.
\end{align*}
Here the $\Otilde(1)$ hides $\polylog(n,\tau,\norm{\Sigma_0},\norm{P_\infty},L/\mu,T/\delta, N_0, 1/\sigma_\eta)$ factors.
\end{thm}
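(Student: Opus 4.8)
The plan is to analyze Algorithm~\ref{alg:lspi_offline_v2} as a \emph{perturbation} of exact policy iteration for LQR, tracking how the value function $V_t$ of each iterate $K_t$ approaches $V_\star$, and then converting the resulting value-function bound into the claimed controller bound at the final step. The backbone is the fact that the idealized version of the algorithm---in which every $\mathsf{LSTDQ}$ call returns the exact $Q$-function $q$---is exactly policy iteration for LQR, i.e.\ Kleinman's Newton iteration on the algebraic Riccati equation, for which $V_\star \preceq V_{t+1} \preceq V_t$ and $V_t \to V_\star$. First I would establish a single-step contraction for this exact iteration in an appropriate \emph{relative} potential $\Phi_t$ (of the form $\log$ of a multiplicative gap between $V_t$ and $V_\star$), showing $\Phi_{t+1} \leq (1-\mu/L)\,\Phi_t$ with $\mu,L$ as defined in the theorem. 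Since $\Phi_0 \sim \log(\norm{V_0}/\lambda_{\min}(V_\star))$, unrolling this contraction alone accounts for the $N_0 \sim (1+L/\mu)\log(\log(\norm{V_0}/\lambda_{\min}(V_\star))/\varepsilon)$ outer iterations.

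Second, I would prove a robustness lemma for a single \emph{approximate} policy-improvement step. The projection $\mathsf{Proj}_\mu$ guarantees $\Qh_{t,22} \succeq \mu I$, so the greedy map $G$ of \eqref{eq:T_def} is well-defined and Lipschitz with constant $O(1/\mu)$ in the estimation error $\norm{\qh_t - q_t}$; combined with standard perturbation bounds for the operator $\dlyap(\cdot,\cdot)$ (valid as long as the perturbed controller remains stabilizing), this controls how far the realized update $K_{t+1} = G(\mathsf{Proj}_\mu(\Qh_t))$ and its value $V_{t+1}$ drift from the exact update $\widetilde{K}_{t+1} = G(Q_t)$, $\widetilde{V}_{t+1}$. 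Feeding this into the exact contraction yields a perturbed recursion of the schematic form
\begin{align*}
\Phi_{t+1} \leq (1-\mu/L)\,\Phi_t + C\,\norm{\qh_t - q_t} \:,
\end{align*}
where $C$ is polynomial in $L/\mu$ and the system norms.

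Third, I would close the argument by induction over $t = 0,\dots,N$, maintaining simultaneously three invariants: (i) $K_t$ stabilizes $(A,B)$; (ii) $A+BK_t$ is $(\tau,\rho)$-stable and $\norm{V_t}\leq\Vbar$ for a uniform bound $\Vbar$ depending only on $\norm{V_0}$ and $L/\mu$; and (iii) the recursion above. Invariant (ii) is precisely what licenses applying Theorem~\ref{thm:lstd_q_estimation} at step $t$ with constants that do not degrade across iterations, giving $\norm{\qh_t - q_t} \leq \epsilon_{\mathrm{lstd}}$ controlled by the rollout length \eqref{eq:lspi_Cerr}; a union bound over the $N$ steps supplies the overall $1-\delta$ confidence. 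Unrolling gives $\Phi_N \leq (1-\mu/L)^N \Phi_0 + (L/\mu)\,C\,\epsilon_{\mathrm{lstd}}$, so the choice of $N_0$ drives the transient term below $O(\varepsilon)$ and the choice of $T$ drives the steady-state term below $O(\varepsilon)$. Finally, since $K_N = G(\mathsf{Proj}_\mu(\Qh_{N-1}))$, I would bound $\norm{K_N - K_\star} \leq \norm{G(Q_{N-1}) - K_\star} + \norm{G(\mathsf{Proj}_\mu(\Qh_{N-1})) - G(Q_{N-1})}$, where the first term is small because $K_{N-1}$ has near-optimal value and the second is the Lipschitz image of the final LSTD-Q error.

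The main obstacle is the self-referential nature of the induction: applying Theorem~\ref{thm:lstd_q_estimation} at iteration $t$ requires $A+BK_t$ to be $(\tau,\rho)$-stable with bounded parameters, yet $K_t$ is itself a random object built from the previous noisy estimates. The crux is therefore the robustness lemma of the second step, which must guarantee not merely that $K_{t+1}$ is \emph{close} to the exact update, but that a sufficiently small $Q$-function error keeps $K_{t+1}$ \emph{stabilizing} and keeps $\norm{V_{t+1}}$ below the same uniform bound $\Vbar$, so that invariant~(ii) is genuinely preserved and the induction closes. Quantifying ``sufficiently small''---and in particular tracking how the admissible per-step error and the constant $C$ scale as powers of the conditioning $L/\mu$ through the chained $\dlyap$ and Riccati perturbations---is the delicate bookkeeping that produces the large polynomial exponents appearing in \eqref{eq:lspi_Cerr}.
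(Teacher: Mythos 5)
Your proposal follows essentially the same route as the paper: the log-of-multiplicative-gap potential you describe is precisely the Thompson-type metric $\delta_\infty(V_t,V_\star)=\norm{\log(V_\star^{-1/2}V_tV_\star^{-1/2})}$ used in Proposition~\ref{prop:PI_LQR}, your perturbed recursion $\Phi_{t+1}\leq(1-\mu/L)\Phi_t+C\norm{\qh_t-q_t}$ is the paper's inequality \eqref{eq:one_step_delta_inequality} obtained via the Lipschitzness of $G$ and the $\dlyap$ perturbation bound, and your closing induction (preserving stabilizability, $(\tau,\rho)$-stability, and a uniform bound on $\norm{V_t}$ so that Theorem~\ref{thm:lstd_q_estimation} applies with non-degrading constants, followed by a union bound and a final triangle inequality through $G$) is exactly the structure of Proposition~\ref{prop:api_lqr} and Corollaries~\ref{cor:api_lqr}--\ref{cor:api_lqr_K}. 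You also correctly identify the crux---that the per-step robustness must preserve the stability invariants to make the induction self-consistent---so the proposal is sound and matches the paper's argument.
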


Theorem~\ref{thm:lspi_estimation}
states roughly that $T \cdot N \leq \Otilde(\frac{(n+d)^3}{\varepsilon^2} \log(1/\varepsilon))$ samples are sufficient for LSPI
to recover a controller $K$ that is within $\varepsilon$ of the optimal $K_\star$.
That is, only $\log(1/\varepsilon)$ iterations of policy improvement are necessary,
and furthermore more iterations of policy improvement do not necessary help
due to the inherent statistical noise of estimating the $Q$-function for every policy $K_t$.
We note that the polynomial factor in $L/\mu$ is by no means optimal and was deliberately made
quite conservative in order to simplify the presentation of the bound. A sharper bound
can be recovered from our analysis techniques at the expense of a less concise expression.

It is worth taking a moment to compare Theorem~\ref{thm:lspi_estimation} to classical results
in the RL literature regarding approximate policy iteration.
For example, a well known result (c.f.~Theorem 7.1 of \citet{lagoudakis03}) states that
if LSTD-Q is able to return $Q$-function estimates with error $L_\infty$ bounded by $\varepsilon$
at every iteration,
then letting $\Qh_t$ denote the approximate $Q$-function at the $t$-th iteration of LSPI:
\begin{align*}
	\limsup_{t \to \infty} \:\norm{\Qh_t - Q_\star}_\infty \leq \frac{2\gamma \varepsilon}{(1-\gamma)^2} \:.
\end{align*}
Here, $\gamma$ is the discount factor of the MDP. Theorem~\ref{thm:lspi_estimation} is qualitatively similar
to this result in that we show roughly that $\varepsilon$ error in the $Q$-function estimate
translates to $\varepsilon$ error in the estimated policy. However, there are several fundamental
differences. First, our analysis does not rely on discounting to show contraction of the Bellman
operator. Instead, we use the $(\tau,\rho)$-stability of closed loop system to achieve this effect.
Second, our analysis does not rely on $L_\infty$ bounds on the estimated $Q$-function, which
are generally not possible to achieve with LQR since the $Q$-function is a quadratic function
and the states and inputs are not uniformly bounded. And finally, our analysis is non-asymptotic.

The proof of Theorem~\ref{thm:lspi_estimation} is given in Section~\ref{sec:lspi}, and
combines the estimation guarantee of
Theorem~\ref{thm:lstd_q_estimation} with a new analysis of policy iteration for LQR, which we
believe is of independent interest.
Our new policy iteration analysis combines the work of \citet{bertsekas17} on policy iteration in
infinite horizon average cost MDPs
with the contraction theory of \citet{lee08} for non-linear matrix equations.

\subsection{LSPI for Adaptive LQR}
\label{sec:results:adaptive}

We now turn our attention to the online, adaptive LQR problem as studied in \citet{abbasi11}.
In the adaptive LQR problem, the quantity of interest is the \emph{regret}, defined as:
\begin{align}
	\mathsf{Regret}(T) := \sum_{t=1}^{T} x_t^\T S x_t + u_t^\T R u_t - T \cdot J_\star \:. \label{eq:regret_def}
\end{align}
Here, the algorithm is penalized for the cost incurred from learning the optimal policy $K_\star$,
and must balance exploration (to better learn the optimal policy) versus exploitation (to reduce cost).
As mentioned previously, there are several known algorithms which achieve
$\Otilde(\sqrt{T})$ regret \cite{abbasi11,abeille18,ouyang17,cohen19,mania19}.
However, these algorithms operate in a \emph{model-based} manner, using the collected data to
build a confidence interval around the true dynamics $(A,B)$. On the other hand, the performance
of adaptive algorithms which are \emph{model-free} is less well understood. We use the results
of the previous section to give an adaptive model-free algorithm for LQR which achieves $\Otilde(T^{2/3})$
regret, which improves upon the $\Otilde(T^{2/3 + \varepsilon})$ regret (for $T \geq C^{1/\varepsilon}$)  achieved by the adaptive model-free algorithm of \citet{abbasi18}.
Our adaptive algorithm based on LSPI is shown in Algorithm~\ref{alg:lspi_online}.

\begin{center}
    \begin{algorithm}[htb]
    \caption{Online Adaptive Model-free LQR Algorithm}
    \begin{algorithmic}[1]
        \Require{Initial stabilizing controller $K^{(0)}$, number of epochs $E$, epoch multiplier $\Tmult$, lower eigenvalue bound $\mu$.}
        \For{$i = 0, ..., E-1$}
            \State{Set $T_i = \Tmult 2^i$.}
            \State{Set $\sigma_{\eta,i}^2 = \sigma_w^2\left(\frac{1}{2^i}\right)^{1/3}$.}
            \State{Set $K^{(i+1)} = \mathsf{LSPIv2}(K_0{=}K^{(i)}, N{=}\Otilde((i+1)\Gamma_\star/\mu), T{=}T_i, \sigma_\eta^2{=}\sigma_{\eta,i}^2)$.}
        \EndFor
    \end{algorithmic}
    \label{alg:lspi_online}
    \end{algorithm}
\end{center}

Using an analysis technique similar to that in \citet{dean18}, we
prove the following $\Otilde(T^{2/3})$ regret bound for Algorithm~\ref{alg:lspi_online}.
\begin{thm}
\label{thm:lspi_regret}
Fix a $\delta \in (0, 1)$.
Let the initial feedback $K^{(0)}$ stabilize $(A,B)$ and let $V^{(0)}$ denote its associated
value function.
Also let $K_\star$ denote the optimal LQR controller and let $V_\star$ denote the optimal value function.
Let $\Gamma_\star = 1 + \max\{\norm{A},\norm{B},\norm{V^{(0)}}, \norm{V_\star}, \norm{K^{(0)}}, \norm{K_\star},\norm{Q},\norm{R}\}$.
Suppose that $\Tmult$ is set to:
\begin{align*}
	\Tmult \geq \Otilde(1) \mathrm{poly}(\Gamma_\star,  n, d, 1/\lambda_{\min}(S)) \:.
\end{align*}
and suppose $\mu$ is set to $\mu = \min\{\lambda_{\min}(S), \lambda_{\min}(R)\}$.
With probability at least $1-\delta$, we have that the regret of Algorithm~\ref{alg:lspi_online}
satisfies:
\begin{align*}
	\mathsf{Regret}(T) \leq \Otilde(1) \mathrm{poly}(\Gamma_\star, n, d, 1/\lambda_{\min}(S)) T^{2/3} \:.
\end{align*}
\end{thm}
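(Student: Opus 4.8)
The plan is to run an epoch-based (``doubling'') regret argument in the style of \citet{dean18}, using Theorem~\ref{thm:lspi_estimation} as a per-epoch estimation oracle and exploiting the local quadratic growth of the LQR cost around $\Kstar$. Throughout epoch $i$ the algorithm plays $u_t = K^{(i)} x_t + \eta_t$ with $\eta_t \sim \calN(0, \sigma_{\eta,i}^2 I)$ while $\mathsf{LSPIv2}$ produces $K^{(i+1)}$, so the regret splits across epochs as $\mathsf{Regret}(T) = \sum_i \sum_{t \in \text{epoch } i}(x_t^\T S x_t + u_t^\T R u_t) - T J_\star$. The first move is to fix, for each epoch, a target accuracy $\varepsilon_i$ obtained by solving the $\varepsilon$-dependent term of \eqref{eq:lspi_Cerr} for the prescribed $T_i = \Tmult 2^i$ and $\sigma_{\eta,i}^2 = \sigma_w^2 2^{-i/3}$; this gives $\varepsilon_i^2 = \Otilde(1)\frac{(n+d)^3}{T_i \sigma_{\eta,i}^4} = \Otilde(1)\frac{(n+d)^3}{\Tmult \sigma_w^4}\, 2^{-i/3}$, which decays geometrically.

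Next I would set up an induction over epochs establishing that, on a single high-probability event, every $K^{(i)}$ stabilizes $(A,B)$ and satisfies $\norm{K^{(i)} - \Kstar} \leq \varepsilon_{i-1}$. To invoke Theorem~\ref{thm:lspi_estimation} at epoch $i$ one must check that $T_i$ also dominates the two burn-in terms of \eqref{eq:lspi_Cerr}; since the second burn-in term scales like $\sigma_{\eta,i}^{-4} = \sigma_w^{-4} 2^{2i/3}$ while $T_i$ grows like $2^i$, the hypothesis $\Tmult \geq \Otilde(1)\,\mathrm{poly}(\Gamma_\star,n,d,1/\lambda_{\min}(S))$ suffices uniformly in $i$. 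Because $\varepsilon_i$ shrinks geometrically we have $\log(1/\varepsilon_i) = \Otilde(i)$, which is exactly why the inner iteration count is taken to be $N = \Otilde((i+1)\Gamma_\star/\mu)$; this also confines every iterate to a fixed ball around $\Kstar$ on which $A+BK$ is uniformly $(\tau,\rho)$-stable, so the constants of Theorem~\ref{thm:lspi_estimation} are uniform across epochs and the per-epoch failure probabilities can be union-bounded (allocating $\delta/2^{i+1}$ to epoch $i$ over the $E = \Otilde(\log T)$ epochs).

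I would then bound the cost incurred in each epoch. The expected per-step steady-state cost of playing $K^{(i)}$ with excitation $\sigma_{\eta,i}$ equals $J(K^{(i)}) + \sigma_{\eta,i}^2\, \ip{R + B^\T V^{(i)} B}{I}$, where $V^{(i)}$ is the value matrix of $K^{(i)}$: the first term is controlled by local quadratic growth $J(K^{(i)}) - J_\star \leq \Otilde(1)\norm{K^{(i)} - \Kstar}^2 \leq \Otilde(1)\varepsilon_{i-1}^2$ (a standard consequence of $\nabla J(\Kstar)=0$ and smoothness of the LQR cost), while the second is the explicit price of exploration. To pass from these steady-state expectations to the realized empirical cost I would invoke concentration of quadratic forms along the linear-Gaussian trajectory: $x_t$ is a Gaussian process, $\sum_t x_t^\T S x_t$ concentrates around its mean via a Hanson--Wright / self-normalized martingale bound, and the initial-condition transient at each epoch boundary contributes only an $\Otilde(1)$ additive term because $A+BK^{(i)}$ is $(\tau,\rho)$-stable, so no system reset is required.

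Finally I would sum over epochs. Using $\varepsilon_{i-1}^2 = \Otilde(1)\,2^{-i/3}(n+d)^3/(\Tmult\sigma_w^4)$ gives $\sum_i T_i\,\varepsilon_{i-1}^2 = \Otilde(1)\frac{(n+d)^3}{\sigma_w^4}\sum_i 2^{2i/3}$, while the exploration term gives $\sum_i T_i \sigma_{\eta,i}^2 = \Tmult\sigma_w^2 \sum_i 2^{2i/3}$; both geometric sums are dominated by their last term $2^{2E/3} = \Otilde((T/\Tmult)^{2/3})$, using $T = \sum_i T_i = \Theta(\Tmult 2^E)$. This is where the schedule $\sigma_{\eta,i}^2 \propto 2^{-i/3}$ earns its keep: writing $\sigma_{\eta,i}^2 \propto 2^{-i\alpha}$, the estimation cost per epoch is $T_i \varepsilon_i^2 \propto 2^{2i\alpha}$ and the exploration cost is $T_i\sigma_{\eta,i}^2 \propto 2^{i(1-\alpha)}$, and balancing forces $\alpha = 1/3$ with common exponent $2/3$. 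Collecting the $\mathrm{poly}(\Gamma_\star,n,d,1/\lambda_{\min}(S))$ prefactors and the $\Otilde(\log T)$ union-bound loss yields $\mathsf{Regret}(T) \leq \Otilde(1)\,\mathrm{poly}(\Gamma_\star,n,d,1/\lambda_{\min}(S))\,T^{2/3}$. I expect the main obstacle to be the inductive step: because Theorem~\ref{thm:lspi_estimation} is conditional and the controller $K^{(i)}$ itself drives the data-generating process in the next epoch, one must keep every iterate inside a good set with uniform $(\tau,\rho)$ parameters on one global high-probability event, and simultaneously control the cross-epoch trajectory concentration without resets — the analytically heaviest piece of the argument.
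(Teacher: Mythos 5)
Your proposal is correct and follows essentially the same route as the paper: an epoch-wise induction keeping every $K^{(i)}$ in a uniformly $(\tau,\rho)$-stable neighborhood of $\Kstar$, the per-epoch accuracy $\varepsilon_i^2 \lesssim 1/(T_{i-1}\sigma_{\eta,i}^4)$ extracted from Theorem~\ref{thm:lspi_estimation} by implicit inversion, the cost decomposition into a quadratic-suboptimality term (Lemma~\ref{lem:fast_rate}), an exploration term, and a Hanson--Wright fluctuation term (Lemma~\ref{lem:instant_cost_to_expectation}), and the geometric balancing that fixes the $2^{-i/3}$ noise schedule and the $T^{2/3}$ rate. The only difference is organizational: the paper first proves the regret bound for an abstract $\mathsf{EstimateK}$ oracle with general exponent $\alpha$ (Proposition~\ref{prop:regret_proof}) and then specializes to LSPI with $\alpha=2$, whereas you instantiate $\alpha=2$ directly.
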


The proof of Theorem~\ref{thm:lspi_regret} appears in Section~\ref{sec:adaptive}.
We note that the regret scaling as $T^{2/3}$ in Theorem~\ref{thm:lspi_regret} is
due to the $1/\sigma_\eta^4$ dependence from LSTD-Q (c.f.~\eqref{eq:lstdq_T_bound}).
As mentioned previously, the existing LSTD-Q analysis from \citet{abbasi18}
yields a $1/\sigma_\eta^8$ dependence in LSTD-Q; using this $1/\sigma_\eta^8$ dependence in the analysis of
Algorithm~\ref{alg:lspi_online} would translate into $T^{4/5}$ regret.

\section{Related Work}

For model-based methods,
in the offline setting \citet{fiechter97} provided the first
PAC-learning bound for infinite horizon \emph{discounted} LQR using certainty equivalence (nominal) control.
Later, \citet{dean17} use tools from robust control to analyze a 
robust synthesis method for infinite horizon \emph{average cost} LQR, which is applicable in regimes of moderate
uncertainty when nominal control fails. \citet{mania19} show that 
certainty equivalence control actually provides a fast $\calO(\varepsilon^2)$ rate
of sub-optimality where $\varepsilon$ is the size of the parameter error,
unlike the $\calO(\varepsilon)$ sub-optimality guarantee of \cite{fiechter97,dean17}.
For the online adaptive setting, \cite{abbasi11,ibrahimi12,abeille18,mania19,cohen19} give $\Otilde(\sqrt{T})$ regret algorithms.
A key component of model-based algorithms is being able to quantify a 
confidence interval for the parameter estimate,
for which several recent works~\cite{simchowitz18,faradonbeh18,sarkar19} provide non-asymptotic
results.

Turning to model-free methods,
\citet{tu18a} study the behavior
of least-squares temporal difference (LSTD) for learning the \emph{discounted value} function
associated to a stabilizing policy. They evaluate the LSPI algorithm studied in this paper empirically, but do not provide any analysis.
In terms of policy optimization, most of the work has focused on
derivative-free random search methods \cite{fazel18,malik19}. 
\citet{tu18b} study a special family of LQR instances and 
characterize the asymptotic behavior of both model-based certainty equivalent control
versus policy gradients (REINFORCE), showing that policy gradients has
polynomially worse sample complexity.
Most related to our work is \citet{abbasi18}, who analyze a model-free
algorithm for adaptive LQR based on ideas from online convex optimization. 
LSTD-Q is a sub-routine of their algorithm, and their analysis
incurs a sub-optimal $1/\sigma_{\eta}^8$ dependence on the injected exploration noise, 
which we improve to $1/\sigma_{\eta}^4$ using self-normalized martingale inequalities~\cite{abbasi11b}. This improvement allows us to use a simple greedy exploration strategy
to obtain $T^{2/3}$ regret.
Finally, as mentioned earlier, the Ph.D. thesis of \citet{bradtke94} presents
an asymptotic consistency argument for approximate PI for discounted LQR in the noiseless
setting (i.e. $w_t = 0$ for all $t$). 

For the general function approximation setting in RL,
\citet{antos08} and \citet{lazaric12} analyze variants of LSPI for discounted MDPs where
the state space is compact and the action space finite. 
In \citet{lazaric12}, the policy is greedily updated
via an update operator that requires access to the underlying dynamics (and is therefore not implementable).
\citet{farahmand16} extend the results of \citet{lazaric12} to when the function
spaces considered are reproducing kernel Hilbert spaces.
\citet{zou19} give a finite-time analysis of both Q-learning and SARSA,
combining the asymptotic analysis of \citet{melo08} with the
finite-time analysis of TD-learning from \citet{bhandari18}. We note that checking
the required assumptions to apply the results of \citet{zou19} is non-trivial
(c.f.~Section 3.1, \cite{melo08}). We are un-aware of any 
non-asymptotic analysis of LSPI in the \emph{average cost} setting, which is
more difficult as the Bellman operator is no longer a contraction.

Finally, we remark that our LSPI analysis relies on understanding exact policy iteration
for LQR, which is closely related to the fixed-point Riccati recurrence (value iteration). 
An elegant analysis for value iteration is given by \citet{lincoln06}.
Recently, \citet{fazel18} show that exact policy iteration is a special case of 
Gauss-Newton and prove linear convergence results.
Our analysis, on the other hand, is based on combining the fixed-point theory from \citet{lee08}
with recent work on policy iteration for average cost problems from \citet{bertsekas17}. 

\def \basefigwidth{0.49}
\def \singlefigwidth{0.6}

\section{Experiments}
\label{sec:experiments}

In this section, we evaluate LSPI in both the non-adaptive offline setting
(Section~\ref{sec:results:lspi_offline}) as well as the adaptive online setting
(Section~\ref{sec:results:adaptive}).  Section~\ref{sec:app:experiments}
contains more details about both the algorithms we compare to as well as our
experimental methodology.

We first look at the performance of LSPI in the non-adaptive, offline setting.
Here, we compare LSPI to other popular model-free methods, and
the model-based certainty equivalence (nominal) controller (c.f.~\cite{mania19}).
For model-free, we look at policy gradients (REINFORCE) (c.f.~\cite{williams92})
and derivative-free optimization (c.f.~\cite{mania18,malik19,nesterov17}).
We consider the LQR instance $(A,B,S,R)$ with
\begin{align*}
A = \begin{bmatrix} 0.95 & 0.01 & 0 \\
	0.01 & 0.95 & 0.01 \\
	0 & 0.01 & 0.95
	\end{bmatrix} \:, \:\:
B = \begin{bmatrix}
	1 & 0.1 \\
	0 & 0.1 \\
	0 & 0.1
	\end{bmatrix} \:,\:\:
S = I_3 \:, \:\: R = I_2 \:.
\end{align*}
We choose an LQR problem where the $A$ matrix is stable, since
the model-free methods we consider need to be seeded with an initial
stabilizing controller; using a stable $A$ allows us to start
at $K_0 = 0_{2 \times 3}$.
We fix the process noise $\sigma_w = 1$.
The model-based nominal method learns $(A, B)$ using
least-squares,
exciting the system with Gaussian inputs $u_t$ with variance $\sigma_u = 1$.

For policy gradients and derivative-free optimization, we use the
projected stochastic gradient descent (SGD) method with a constant step size $\mu$ as
the optimization procedure.
For policy iteration,
we evaluate both $\mathsf{LSPIv1}$ (Algorithm~\ref{alg:lspi_offline_v1})
and $\mathsf{LSPIv2}$ (Algorithm~\ref{alg:lspi_offline_v2}).
For every iteration of LSTD-Q, we project
the resulting $Q$-function parameter matrix onto the
set $\{ Q : Q \succeq \gamma I \}$
with $\gamma = \min\{\lambda_{\min}(S), \lambda_{\min}(R) \}$.
For $\mathsf{LSPIv1}$, we choose
$N = 15$ by picking the $N \in [5, 10, 15]$ which results in the best performance after
$T = 10^6$ timesteps.
For $\mathsf{LSPIv2}$,
we set $(N, T) = (3, 333333)$
which yields the lowest cost
over the grid $N \in [1, 2, 3, 4, 5, 6, 7]$ and $T$ such that $NT = 10^{6}$.

\begin{figure}[h!]
\centering
\includegraphics[width=0.65\columnwidth]{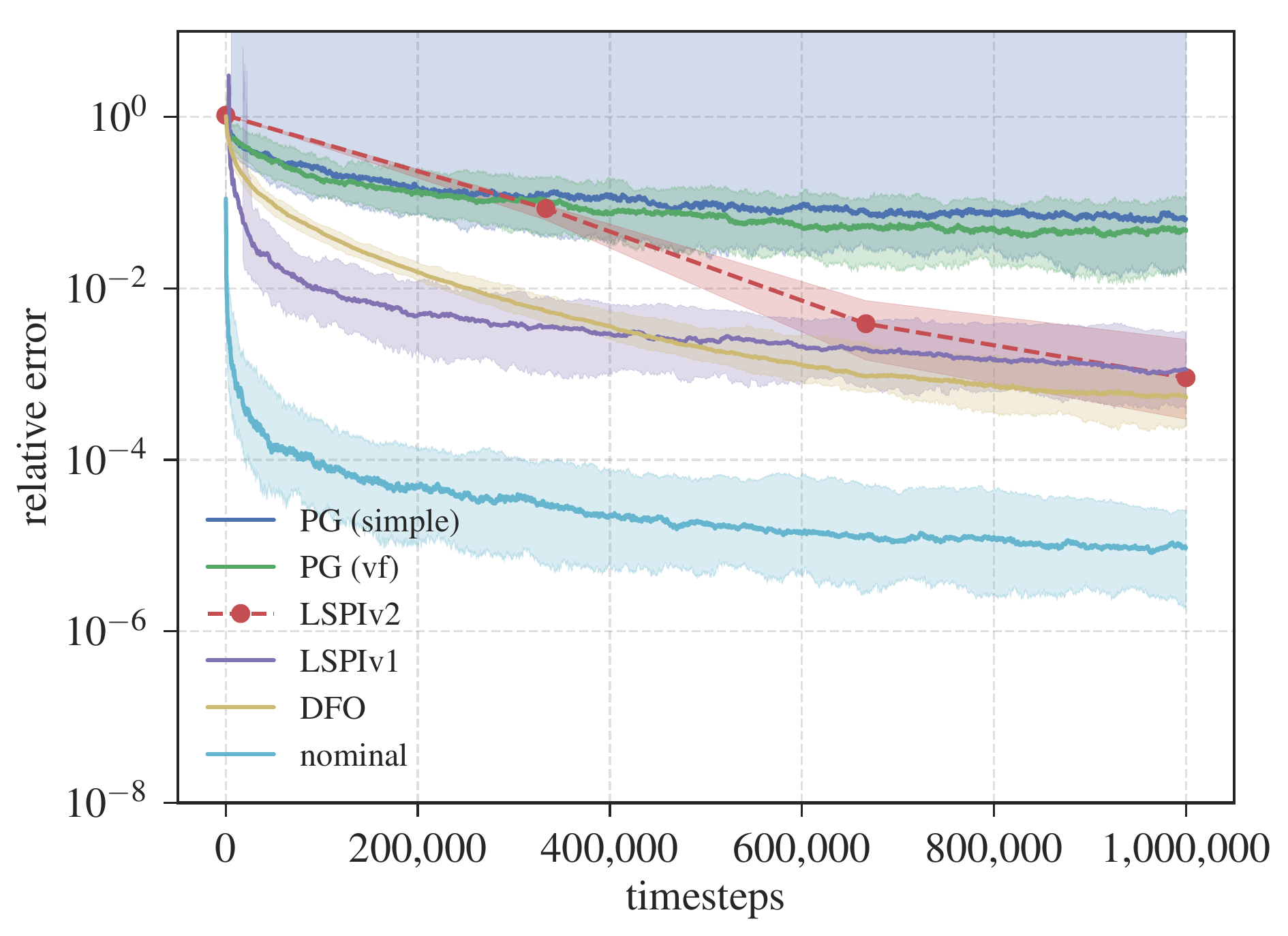}
\caption{Plot of non-adaptive performance.
The shaded regions represent the lower 10th and upper 90th percentile over $100$ trials, and
the solid line represents the median performance.
Here, PG (simple) is policy gradients with the simple baseline,
PG (vf) is policy gradients with the value function baseline,
LSPIv2 is Algorithm~\ref{alg:lspi_offline_v2},
LSPIv1 is Algorithm~\ref{alg:lspi_offline_v1},
and DFO is derivative-free optimization.}
\label{fig:offline}
\end{figure}

Figure~\ref{fig:offline} contains the results of our non-adaptive evaluation.
In Figure~\ref{fig:offline},
we plot the relative error $(J(\Kh) - J_\star)/J_\star$ versus the number of timesteps.
We see that the model-based certainty equivalence (nominal) method is more sample efficient than the other model-free methods considered.
We also see that the value function baseline is able to dramatically reduce the variance
of the policy gradient estimator compared to the simple baseline.
The DFO method performs the best out of all the model-free methods considered on this example
after $10^6$ timesteps, although the performance of policy iteration is
comparable.

Next, we compare the performance of LSPI in the adaptive setting. We compare LSPI against the model-free linear quadratic control (MFLQ) algorithm of \citet{abbasi18},
the certainty equivalence (nominal) controller (c.f.~\cite{dean18}), and the optimal controller.
We set the process noise $\sigma_w = 1$, and 
consider the example of \citet{dean17}:
\begin{align*}
A = \begin{bmatrix} 1.01 & 0.01 & 0 \\
	0.01 & 1.01 & 0.01 \\
	0 & 0.01 & 1.01
	\end{bmatrix} \:, \:\:
B = I_3 \:, \:\: S = 10 \cdot I_3 \:, \:\: R = I_3 \:.
\end{align*}

\begin{figure}[h!]
\centering
\begin{subfigure}[b]{\basefigwidth\textwidth}
\caption{\small Regret Comparison}
\centerline{\includegraphics[width=\columnwidth]{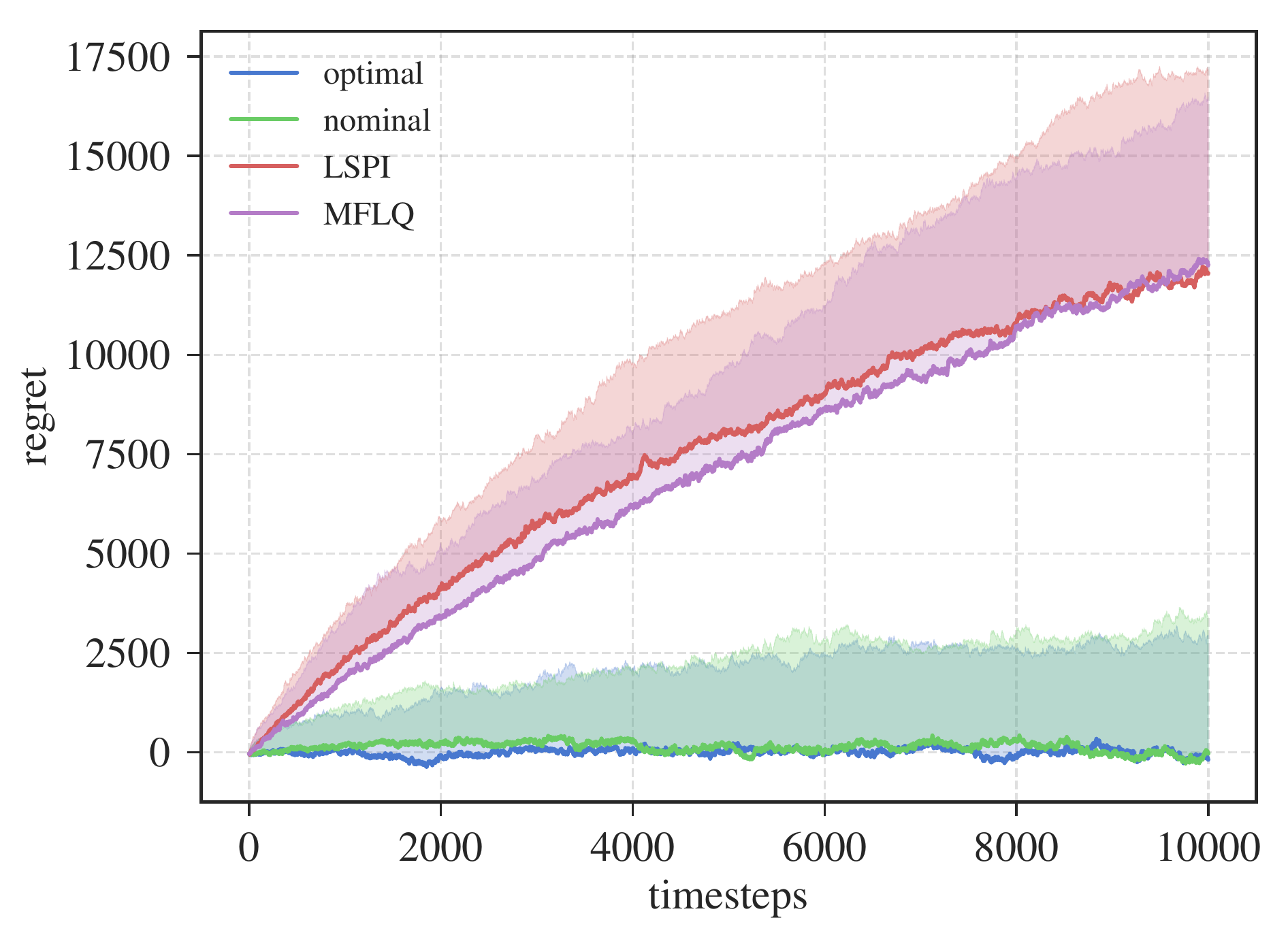}}
\label{fig:regret_comparison}
\end{subfigure}
\begin{subfigure}[b]{\basefigwidth\textwidth}
\caption{\small Cost Comparison}
\centerline{\includegraphics[width=\columnwidth]{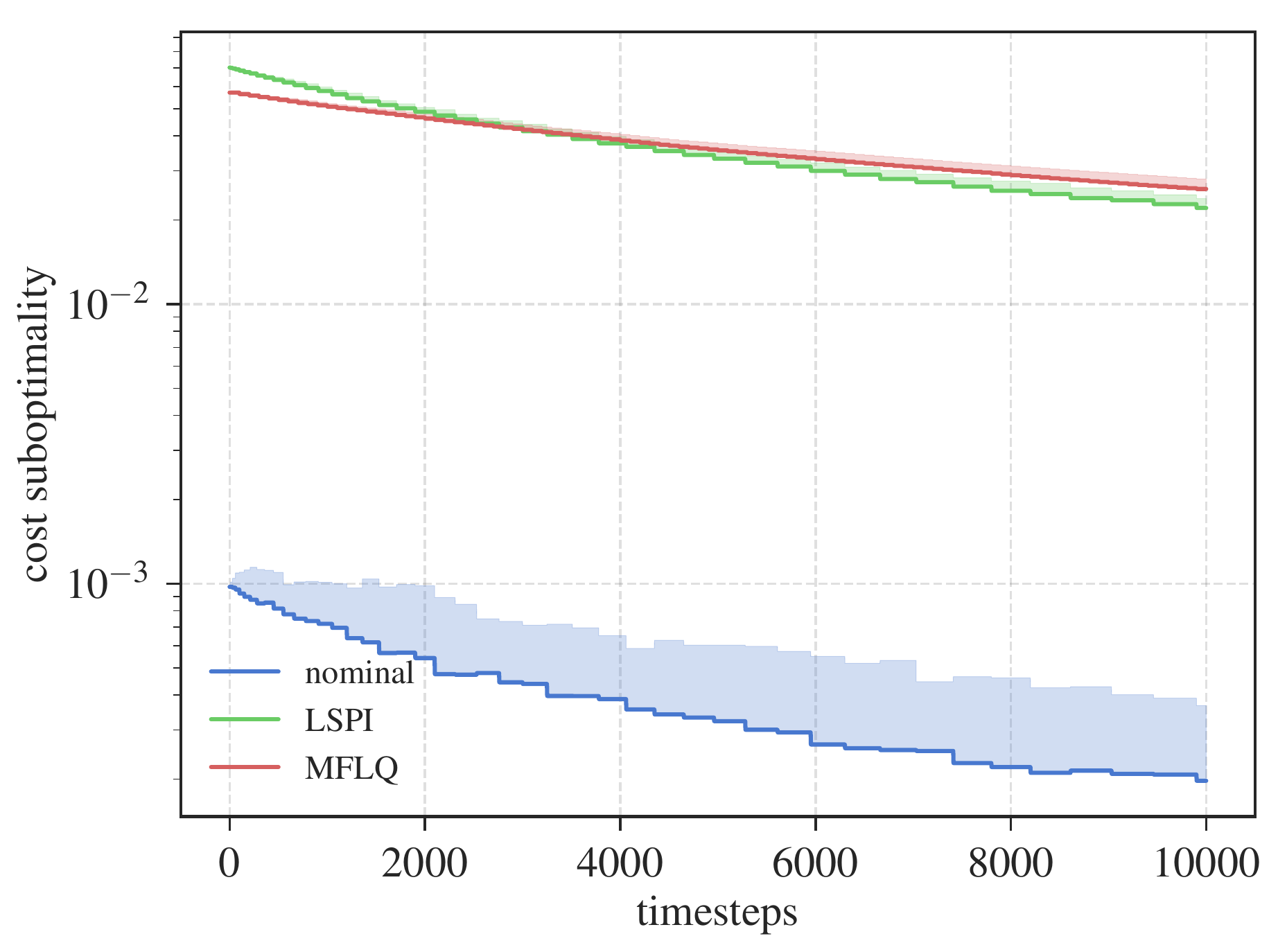}}
\label{fig:cost_comparison}
\end{subfigure}
\caption{
Plot of adaptive performance.
The shaded regions represent the median to upper 90th percentile over $100$ trials.
Here, LSPI is Algorithm~\ref{alg:lspi_online}
using LSPIv1, MFLQ is from \citet{abbasi18}, nominal is
the $\varepsilon$-greedy adaptive certainty equivalent controller (c.f.~\cite{dean18}),
and optimal has access to the true dynamics.
\textbf{(a)} Plot of regret versus time.
\textbf{(b)} Plot of the cost sub-optimality versus time.}
\label{fig:adaptive}
\end{figure}

Figure~\ref{fig:adaptive} shows the results of these experiments.
In Figure~\ref{fig:regret_comparison}, we plot the regret (c.f. Equation~\ref{eq:regret_def}) versus
the number of timesteps.
We see that LSPI and MFLQ both perform similarly with MFLQ slightly outperforming LSPI. We also note that the model-based nominal controller performs significantly better than both LSPI and MFLQ,
which is consistent with the experiments of \citet{abbasi18}.
In Figure~\ref{fig:cost_comparison}, we plot the relative cost $(J(\Kh) - J_\star)/J_\star$ versus
the number of timesteps. This quantity represents the sub-optimality incurred if further exploration ceases and the current controller is played indefinitely. Here, we see again that LSPI and MFLQ are both
comparable, but both are outperformed by nominal control.


\section{Conclusion}

We studied the sample complexity of approximate PI on LQR, showing that order $(n+d)^3 \varepsilon^{-2} \log(1/\varepsilon)$ samples
are sufficient to estimate a controller that is within $\varepsilon$ of the optimal.
We also show how to turn this offline method into an adaptive LQR method with
$T^{2/3}$ regret.
Several questions remain open with our work. The first is if
policy iteration is able to achieve $T^{1/2}$ regret,
which is possible with other model-based methods.
The second is whether or not model-free methods provide advantages
in situations of partial observability for LQ control.
Finally, an asymptotic analysis of LSPI, in the spirit of \citet{tu18b},
is of interest in order to clarify which parts of our analysis are sub-optimal
due to the techniques we use versus are inherent in the algorithm. 

\section*{Acknowledgments}
We thank the authors of \citet{abbasi18} for providing us with an implementation of their model-free LQ algorithm.
ST is supported by a Google PhD fellowship.
This work is generously supported in part by ONR awards N00014-17-1-2191, N00014-17-1-2401, and N00014-18-1-2833, the DARPA Assured Autonomy (FA8750-18-C-0101) and Lagrange (W911NF-16-1-0552) programs, a Siemens Futuremakers Fellowship, and an Amazon AWS AI Research Award.

{
\small
\bibliography{paper}
}

\appendix


\section{Analysis for LSTD-Q}
\label{sec:lstdq}

We fix a trajectory $\{(x_t, u_t, x_{t+1})\}_{t=1}^{T}$.
Recall that we are interested in finding the $Q$ function for a given policy $\Keval$,
and we have defined the vectors:
\begin{align*}
	\phi_t &= \phi(x_t, u_t) \:, \:\: \psi_t = \phi(x_t, \Keval x_t) \:, \\
    f &= \svec\left(\sigma_w^2 \cvectwo{I}{\Keval}\cvectwo{I}{\Keval}^\T\right) \:, \:\: c_t = x_t^\T S x_t + u_t^\T R u_t \:.
\end{align*}
Also recall that the input sequence $u_t$ being played is given by $u_t = \Kplay x_t + \eta_t$,
with $\eta_t \sim \calN(0, \sigma_\eta^2 I)$. Both policies $\Keval$ and $\Kplay$ are assumed to stabilize
$(A,B)$. Because of stability, we have that $P_t$ converges to a limit $P_\infty = \dlyap((A+B\Kplay)^\T,\sigma_w^2 I + \sigma_\eta^2 BB^\T)$,
where $P_t$ is:
\begin{align*}
    P_t := \sum_{k=0}^{t-1} (A+B\Kplay)^k (\sigma_w^2 I + \sigma_\eta BB^\T) ((A+B\Kplay)^\T)^k \:.
\end{align*}
The covariance of $x_t$ for $t \geq 1$ is:
\begin{align*}
    \Cov(x_t) = \Sigma_t := P_t + (A+B\Kplay)^t \Sigma_0 ((A+B\Kplay)^\T)^t \:.
\end{align*}
We define the following data matrices:
\begin{align*}
    \Phi = \begin{bmatrix}
        -\phi_1^\T- \\
        \vdots \\
        -\phi_T^\T-
    \end{bmatrix} \:, \:\:
    \Psi_+ =  \begin{bmatrix}
        -\psi_2^\T- \\
        \vdots \\
        -\psi_{T+1}^\T-
    \end{bmatrix} \:, \:\:
    c = (c_1, ..., c_T)^\T \:, \:\:
    F = \begin{bmatrix}
        -f^\T- \\
        \vdots \\
        -f^\T-
    \end{bmatrix} \:.
\end{align*}
With this notation, the LSTD-Q estimator is:
\begin{align*}
    \qh = \left(\Phi^\T (\Phi - \Psi_+ + F)\right)^{\dag} \Phi^\T c \:.
\end{align*}
Next, let $\Xi$ be the matrix:
\begin{align*}
    \Xi = \begin{bmatrix}
        -\E[\phi(x_2, \Keval x_2)|x_1, u_1]^\T- \\
        \vdots \\
        -\E[\phi(x_{T+1}, \Keval x_{T+1})|x_T, u_T]^\T-
    \end{bmatrix}  \:.
\end{align*}
For what follows, we let the notation $\otimes_s$ denote the
\emph{symmetric} Kronecker product. See \citet{schacke13} for more details.
The following lemma gives us a starting point for analysis.
It is based on Lemma 4.1 of \citet{abbasi18}.
Recall that $q = \svec(Q)$ and $Q$ is the matrix which parameterizes
the $Q$-function for $\Keval$.
\begin{lem}[Lemma 4.1, \cite{abbasi18}]
\label{lem:basic_inequality}
Let $L := \cvectwo{I}{\Keval} \rvectwo{A}{B}$.
Suppose that $\Phi$ has full column rank, and that 
\begin{align*}
	\frac{\norm{(\Phi^\T \Phi)^{-1/2} \Phi^\T(\Xi-\Psi_+)}}{\smin(\Phi) \smin(I - L \otimes_s L)} \leq 1/2 \:.
\end{align*}
Then we have:
\begin{align}
    \norm{\qh - q} \leq 2 \frac{\norm{(\Phi^\T \Phi)^{-1/2}\Phi^\T(\Xi-\Psi_+)q}}{\smin(\Phi) \smin(I - L \otimes_s L)} \:. \label{eq:main_eq}
\end{align}
\end{lem}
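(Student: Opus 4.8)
The plan is to reduce the estimation error to a deterministic invertible-perturbation bound, where the ``signal'' is the population normal equations and the ``noise'' is the term $\Xi - \Psi_+$. First I would rewrite the Bellman fixed-point equation \eqref{eq:bellman_q} in the linear architecture. Since $Q(x,u) = q^\T \phi(x,u)$ and $\lambda = \ip{Q}{\sigma_w^2 \cvectwo{I}{\Keval}\cvectwo{I}{\Keval}^\T} = q^\T f$, evaluating the equation along the trajectory and taking conditional expectations yields $c_t = q^\T(\phi_t - \E[\psi_{t+1}\mid x_t,u_t] + f)$, i.e.\ in matrix form $c = (\Phi - \Xi + F)q$. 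Left-multiplying by $\Phi^\T$ gives $\Phi^\T c = \overline{A}\,q$ with $\overline{A} := \Phi^\T(\Phi - \Xi + F)$, so the estimator satisfies $\qh = \widehat{A}^{\dag}\,\overline{A}\,q$, where $\widehat{A} := \Phi^\T(\Phi - \Psi_+ + F)$ is the empirical matrix actually inverted by LSTD-Q.

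The key structural step, which I expect to be the main obstacle, is to simplify $\overline{A}$. Using $x_{t+1} = \rvectwo{A}{B}\cvectwo{x_t}{u_t} + w_t$ with $\E[w_t w_t^\T] = \sigma_w^2 I$, a direct computation of $\E[\psi_{t+1}\mid x_t,u_t]$ together with the symmetric Kronecker identity $\svec(LML^\T) = (L\otimes_s L)\svec(M)$ for $L = \cvectwo{I}{\Keval}\rvectwo{A}{B}$ gives $\E[\psi_{t+1}\mid x_t,u_t] = (L\otimes_s L)\phi_t + f$. Stacking over $t$, this reads $\Xi = \Phi(L\otimes_s L)^\T + F$, and hence $\overline{A} = \Phi^\T\Phi\,(I - (L\otimes_s L)^\T)$. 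This factorization is exactly what produces the $\smin(I - L\otimes_s L)$ factor in the statement: using $(L\otimes_s L)^\T = L^\T\otimes_s L^\T$ so that $\smin(I - (L\otimes_s L)^\T) = \smin(I - L\otimes_s L)$, we get $\smin(\overline{A}) \ge \smin(\Phi)^2\,\smin(I - L\otimes_s L)$. Full column rank of $\Phi$ and invertibility of $I - L\otimes_s L$ (guaranteed whenever the right-hand side of the hypothesis is finite, and ultimately from $\rho(A+B\Keval)<1$) then make $\overline{A}$ invertible. Getting the Kronecker conventions right here is the delicate part; once this identity is in hand the rest is routine.

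Finally I would run a Neumann-series perturbation argument. Writing $\Delta := \widehat{A} - \overline{A} = \Phi^\T(\Xi - \Psi_+)$ and factoring $\overline{A}^{-1}\Delta = (I - (L\otimes_s L)^\T)^{-1}(\Phi^\T\Phi)^{-1}\Phi^\T(\Xi - \Psi_+)$, the submultiplicative estimate $\norm{(\Phi^\T\Phi)^{-1}\Phi^\T(\cdot)} \le \norm{(\Phi^\T\Phi)^{-1/2}}\,\norm{(\Phi^\T\Phi)^{-1/2}\Phi^\T(\cdot)}$ with $\norm{(\Phi^\T\Phi)^{-1/2}} = 1/\smin(\Phi)$ shows the hypothesis is precisely $\norm{\overline{A}^{-1}\Delta} \le 1/2 < 1$. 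Therefore $\widehat{A} = \overline{A}(I + \overline{A}^{-1}\Delta)$ is invertible, $\widehat{A}^{\dag} = \widehat{A}^{-1}$, and $\qh - q = \widehat{A}^{-1}\overline{A}q - q = \widehat{A}^{-1}(\overline{A} - \widehat{A})q = -\widehat{A}^{-1}\Delta q$. Writing $\widehat{A}^{-1}\Delta q = (I + \overline{A}^{-1}\Delta)^{-1}\overline{A}^{-1}\Delta q$, applying $\norm{(I + \overline{A}^{-1}\Delta)^{-1}} \le (1 - 1/2)^{-1} = 2$, and bounding $\norm{\overline{A}^{-1}\Delta q}$ by the same factorization as above yields exactly the claimed inequality \eqref{eq:main_eq}.
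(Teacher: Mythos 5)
Your proposal is correct and follows essentially the same route as the paper's proof: the same Bellman identity $c = (\Phi - \Xi + F)q$, the same symmetric-Kronecker factorization $\Phi - \Xi + F = \Phi(I - L\otimes_s L)^\T$, and the same use of the $1/2$ hypothesis to absorb the perturbation term. The only difference is presentational --- the paper absorbs the error via a triangle-inequality self-bounding step on $\norm{q-\qh}$, while you invert $\widehat{A} = \overline{A}(I + \overline{A}^{-1}\Delta)$ with a Neumann series; both yield the identical factor of $2$, and your version has the minor merit of making explicit that the matrix being pseudo-inverted is actually invertible under the stated hypothesis.
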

\begin{proof}
By the Bellman equation \eqref{eq:bellman_q}, we have the identity:
\begin{align*}
    \Phi q = c + (\Xi - F) q
\end{align*}
By the definition of $\qh$, we have the identity:
\begin{align*}
    \Phi \qh = P_{\Phi}( c + (\Psi_+ - F) \qh) \:,
\end{align*}
where $P_\Phi = \Phi (\Phi^\T \Phi)^{-1} \Phi^\T$ is the orthogonal projector onto the columns of $\Phi$.
Combining these two identities gives us:
\begin{align*}
    P_\Phi(\Phi - \Xi + F) (q - \qh) = P_\Phi (\Xi - \Psi_+) \qh \:.
\end{align*}
Next, the $i$-th row of
$\Phi - \Xi + F$ is:
\begin{align*}
    &\svec\left(  \cvectwo{x_i}{u_i} \cvectwo{x_i}{u_i}^\T - \E\left[ \cvectwo{I}{\Keval} \tilde{x} \tilde{x}^\T\cvectwo{I}{\Keval}^\T \:\bigg|\: x_i, u_i\right] + \sigma_w^2 \cvectwo{I}{\Keval}\cvectwo{I}{\Keval}^\T \right) \\
    &\qquad= \svec\left( \cvectwo{x_i}{u_i} \cvectwo{x_i}{u_i}^\T -  L \cvectwo{x_i}{u_i} \cvectwo{x_i}{u_i}^\T L^\T   \right) \\
    &\qquad= (I - L \otimes_s L ) \phi(x_i, u_i) \:.
\end{align*}
Therefore, $\Phi - \Xi + F = \Phi (I - L \otimes_s L)^\T$.
Combining with the above identity:
\begin{align*}
    \Phi (I - L \otimes_s L)^\T (q - \qh) = P_\Phi (\Xi - \Psi_+) \qh \:.
\end{align*}
Because $\Phi$ has full column rank, this identity implies that:
\begin{align*}
    (I - L \otimes_s L)^\T (q - \qh) = (\Phi^\T \Phi)^{-1} \Phi^\T (\Xi - \Psi_+) \qh \:.
\end{align*}
Using the inequalities:
\begin{align*}
    \norm{(I - L \otimes_s L)^\T (q - \qh)} &\geq \smin((I - L \otimes_s L)) \norm{q-\qh} \:, \\
    (\Phi^\T \Phi)^{-1} \Phi^\T (\Xi - \Psi_+) \qh &\leq \frac{\norm{ (\Phi^\T \Phi)^{-1/2} \Phi^\T(\Xi - \Psi_+) \qh}}{\lambda_{\min}( (\Phi^\T \Phi)^{-1/2} )} = \frac{\norm{ (\Phi^\T \Phi)^{-1/2} \Phi^\T(\Xi - \Psi_+) \qh}}{\smin(\Phi)} \:,
\end{align*}
we obtain:
\begin{align*}
    \norm{q-\qh} \leq \frac{\norm{ (\Phi^\T \Phi)^{-1/2} \Phi^\T(\Xi - \Psi_+) \qh}}{\smin(\Phi) \smin(I - L \otimes_s L)} \:.
\end{align*}
Next, let $\Delta = q - \qh$. By triangle inequality:
\begin{align*}
    \norm{\Delta} \leq \frac{\norm{ (\Phi^\T \Phi)^{-1/2} \Phi^\T(\Xi - \Psi_+)}\norm{\Delta}}{\smin(\Phi) \smin(I - L \otimes_s L)} + \frac{\norm{ (\Phi^\T \Phi)^{-1/2} \Phi^\T(\Xi - \Psi_+)q}}{\smin(\Phi) \smin(I - L \otimes_s L)} \:.
\end{align*}
The claim now follows.
\end{proof}

In order to apply Lemma~\ref{lem:basic_inequality}, we first
bound the minimum singular value $\smin(\Phi)$. We do this using the small-ball
argument of \citet{simchowitz18}.
\begin{defn}[Definition 2.1, \cite{simchowitz18}]
\label{def:bmsb}
Let $\{Z_t\}$ be a real-valued stochastic process that is adapted to $\{\calF_t\}$.
The process $\{Z_t\}$ satisfies the $(k,\nu,p)$ block martingale small-ball (BMSB) condition if
for any $j\geq 0$ we have that:
\begin{align*}
	\frac{1}{k} \sum_{i=1}^{k} \Pr( \abs{Z_{j+i}} \geq \nu | \calF_j ) \geq p \:\: \text{a.s.} 
\end{align*}
\end{defn}
With the block martingale small-ball definition in place, we
now show that the process $\ip{\phi_t}{y}$ satisfies this condition for any fixed unit vector $y$.
\begin{prop}
\label{prop:bmsb}
Given an arbitrary vector $y \in \calS^{n+d-1}$, 
define the process $Z_{t} := \langle \phi_t, y \rangle$,
the filtration $\mathcal{F}_t := \sigma(\{u_i, w_{i - 1}\}_{i=0}^t)$, 
and matrix $C := \begin{bmatrix}I & 0\\ \Kplay & I\end{bmatrix}\begin{bmatrix}\sigma_w I & 0\\ 0 & \sigma_\eta I\end{bmatrix}$. Then $(Z_t)_{t\ge 1}$ satisfies the $(1, \smin^2(C), 1/324)$ block martingale small-ball (BMSB) condition from Definition~\ref{def:bmsb}. 
That is, almost surely, we have:
\[\Pr(|Z_{t+1}|\ge \smin^2(C)|\mathcal{F}_t) \ge 1/324.\]
\end{prop}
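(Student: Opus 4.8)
The plan is to condition on $\calF_t$, reduce $Z_{t+1}$ to a fixed quadratic form in fresh Gaussian noise, and then prove anti-concentration of that quadratic form.

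First I would unpack $Z_{t+1}$. Writing $y = \svec(Y)$ for the symmetric matrix $Y$ with $\norm{Y}_F = \norm{y} = 1$, the identity $\ip{\svec(M)}{\svec(N)} = \ip{M}{N}$ for symmetric $M,N$ gives $Z_{t+1} = z_{t+1}^\T Y z_{t+1}$, where $z_{t+1} = \cvectwo{x_{t+1}}{u_{t+1}}$. Substituting $u_{t+1} = \Kplay x_{t+1} + \eta_{t+1}$ and $x_{t+1} = A x_t + B u_t + w_t$, and noting that $x_t$ and $u_t$ (hence $v := \cvectwo{I}{\Kplay}(A x_t + B u_t)$) are $\calF_t$-measurable while $w_t$ and $\eta_{t+1}$ are independent of $\calF_t$, I obtain
\[
	z_{t+1} = v + \bmattwo{I}{0}{\Kplay}{I}\cvectwo{w_t}{\eta_{t+1}} = v + C g \:,
\]
where $g \sim \Normal(0, I_{n+d})$ is independent of $\calF_t$ (using $\cvectwo{w_t}{\eta_{t+1}} = \diag(\sigma_w I, \sigma_\eta I)\, g$ and the definition of $C$). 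Conditionally on $\calF_t$, then, $Z_{t+1} = (v + C g)^\T Y (v + C g)$ is a fixed degree-two polynomial in the standard Gaussian vector $g$; from here on all probabilities and expectations are understood to be conditional on $\calF_t$, so $v$ is fixed and $g \sim \Normal(0,I)$.

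Because $Y$ is sign-indefinite, $Z_{t+1}$ need not be nonnegative, so Paley--Zygmund cannot be applied to $Z_{t+1}$ directly; instead I would apply it to the nonnegative variable $W := Z_{t+1}^2$, giving for $\theta \in [0,1]$
\[
	\Pr\left( Z_{t+1}^2 > \theta\, \E[Z_{t+1}^2] \right) \geq (1-\theta)^2 \frac{(\E[Z_{t+1}^2])^2}{\E[Z_{t+1}^4]} \:.
\]
This reduces the claim to a lower bound on the second moment and an upper bound on the fourth moment. For the second moment, setting $M := C^\T Y C$ and $b := C^\T Y v$ and using the Gaussian identities $\E[g^\T M g] = \Tr(M)$, $\E[(g^\T M g)^2] = (\Tr M)^2 + 2\norm{M}_F^2$, $\E[(b^\T g)^2] = \norm{b}^2$, together with the vanishing of the odd cross moment, I would expand
\[
	\E[Z_{t+1}^2] = (v^\T Y v + \Tr M)^2 + 4\norm{b}^2 + 2\norm{M}_F^2 \geq 2\norm{C^\T Y C}_F^2 \:,
\]
and then invoke $\norm{C^\T Y C}_F \geq \smin(C)^2 \norm{Y}_F = \smin^2(C)$ (the inequality $\norm{AXB}_F \geq \smin(A)\smin(B)\norm{X}_F$ with $A = C^\T$, $B = C$) to conclude $\E[Z_{t+1}^2] \geq 2\,\smin^4(C)$. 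For the fourth moment, Gaussian hypercontractivity for a degree-two polynomial gives $\norm{Z_{t+1}}_{L^4} \leq 3\norm{Z_{t+1}}_{L^2}$, i.e. $\E[Z_{t+1}^4] \leq 81\,(\E[Z_{t+1}^2])^2$.

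Combining these with $\theta = 1/2$ yields $\Pr(Z_{t+1}^2 > \tfrac12 \E[Z_{t+1}^2]) \geq (1/2)^2/81 = 1/324$, and since $\tfrac12 \E[Z_{t+1}^2] \geq \smin^4(C)$ the event on the left is contained in $\{ Z_{t+1}^2 \geq \smin^4(C) \} = \{ \abs{Z_{t+1}} \geq \smin^2(C) \}$, which is exactly the stated BMSB bound. I expect the main obstacle to be the anti-concentration step itself: the indefiniteness of $Y$ rules out a one-line Paley--Zygmund argument and forces the detour through $Z_{t+1}^2$ with hypercontractive fourth-moment control, and matching the precise constant $1/324$ relies on the sharp hypercontractivity constant combined with the factor-two slack built into the second-moment lower bound $\E[Z_{t+1}^2] \geq 2\,\smin^4(C)$.
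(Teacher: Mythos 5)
Your proposal is correct and follows essentially the same route as the paper's proof: the same decomposition $z_{t+1}=v+Cg$, Paley--Zygmund applied to $Z_{t+1}^2$ with the degree-two Gaussian hypercontractivity bound $\E[Z_{t+1}^4]\leq 81(\E[Z_{t+1}^2])^2$, the second-moment lower bound $\E[Z_{t+1}^2]\geq 2\norm{C^\T Y C}_F^2\geq 2\smin^4(C)$, and $\theta=1/2$. The only cosmetic differences are that you expand the second moment in full and drop terms where the paper lower-bounds $\E[Z^2]$ by the variance, and that you use $\norm{C^\T Y C}_F\geq\smin^2(C)\norm{Y}_F$ directly where the paper phrases the same inequality via the Kronecker product.
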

\begin{proof}
Let $Y := \smat(y)$ and $\mu_t := A x_t + B u_t$.
We have that:
\begin{align*}
    \cvectwo{x_{t+1}}{u_{t+1}} = \cvectwo{I}{\Kplay} \mu_t + \bmattwo{I}{0}{\Kplay}{I} \cvectwo{w_t}{\eta_{t+1}} \:.
\end{align*}
Therefore:
\begin{align*}
    \ip{\phi_{t+1}}{y} &= \cvectwo{x_{t+1}}{u_{t+1}}^\T Y\cvectwo{x_{t+1}}{u_{t+1}} \\
    &= \left( \cvectwo{I}{\Kplay} \mu_t + \bmattwo{I}{0}{\Kplay}{I} \cvectwo{w_t}{\eta_{t+1}} \right)^\T Y\left( \cvectwo{I}{\Kplay} \mu_t + \bmattwo{I}{0}{\Kplay}{I} \cvectwo{w_t}{\eta_{t+1}} \right) \:,
\end{align*}
which is clearly a Gaussian polynomial of degree $2$ given $\mathcal{F}_t$. Hence by Gaussian hyper-contractivity results (see e.g. \cite{bogachev15}), we have that almost surely:
\[\E[|Z_{t+1}|^4|\mathcal{F}_t] \le 81\E[|Z_{t+1}|^2|\mathcal{F}_t]^2.\]
Hence we can invoke the Paley-Zygmund inequality to conclude that for any $\theta \in (0, 1)$, almost surely we have:
\[\Pr(|Z_{t+1}| \ge \sqrt{\theta \E[|Z_{t+1}|^2|\mathcal{F}_t]}|\mathcal{F}_t) \ge (1 - \theta)^2\frac{\E[|Z_{t+1}|^2|\mathcal{F}_t]^2}{\E[|Z_{t+1}|^4|\mathcal{F}_t]} \ge \frac{(1 - \theta)^2}{81}.\]

We now state an useful proposition.
\begin{prop}
\label{prop:quad_fourth_moment}
Let $\mu, C, Y$ be fixed and $g \sim \calN(0, I)$. We have that:
\begin{align*}
    \E[ ((\mu+Cg)^\T Y (\mu+Cg))^2 ] \geq 2 \norm{C^\T Y C}_F^2 \:.
\end{align*}
\end{prop}
\begin{proof}
Let $Z := (\mu+Cg)^\T Y (\mu+Cg)$.
We know that $\E[Z^2] \geq \E[ (Z-\E[Z])^2 ]$.
A quick computation yields that $\E[Z] = \mu^\T Y \mu + \Tr(C^\T Y C)$.
Hence
\begin{align*}
    Z - \E[Z] = g^\T C^\T Y C g - \Tr(C^\T Y C) + 2 \mu^\T Y C g \:.
\end{align*}
Therefore,
\begin{align*}
    \E[(Z-\E[Z])^2] \geq \E[ (g^\T C^\T Y C g - \Tr(C^\T Y C))^2 ] = 2 \norm{C^\T Y C}_F^2 \:.
\end{align*}
\end{proof}
Invoking Proposition~\ref{prop:quad_fourth_moment} and using 
basic properties of the Kronecker product, we have that:
\begin{align*}
    \E[ Z_{t+1}^2 | \calF_t ] \geq 2 \norm{C^\T Y C}_F^2 = 2 \norm{(C^\T \otimes C^\T) y}^2 \geq 2 \smin^2(C^\T \otimes C^\T) = 2 \smin^4(C) \:.
\end{align*}

The claim now follows by setting $\theta=1/2$.
\end{proof}

With the BMSB bound in place, we can now utilize Proposition 2.5 of \citet{simchowitz18}
to obtain the following lower bound on the minimum singular value $\smin(\Phi)$.
\begin{prop}\label{prop:singularbound}
Fix $\delta \in (0, 1)$. Suppose that $\sigma_\eta \leq \sigma_w$,
and that $T$ exceeds:
\begin{align}
    T \geq 324^2 \cdot 8 \left( (n+d)^2 \log\left(1+ \frac{20736\sqrt{3}}{\sqrt{\delta}} \frac{ (1+\norm{\Kplay}^2)^2 (\tau^2 \rho^2 n \norm{\Sigma_0} + \Tr(P_\infty))  }{\sigma_\eta^2}  \right) + \log(2/\delta) \right) \:. \label{eq:T_req}
\end{align}
Suppose also that $A+B\Kplay$ is $(\tau,\rho)$-stable.
Then we have with probability at least $1-\delta$,
\begin{align*}
    \smin(\Phi) \geq \frac{\sigma_\eta^2}{1296\sqrt{8}} \frac{1}{1+\norm{\Kplay}^2} \sqrt{T} \:.
\end{align*}
We also have with probability at least $1-\delta$,
\begin{align*}
    \norm{\Phi^\T \Phi} \leq \frac{12T}{\delta} (1+\norm{\Kplay}^2)^2 (\tau^2\rho^2n \norm{\Sigma_0} + \Tr(P_\infty))^2 \:.
\end{align*}
\end{prop}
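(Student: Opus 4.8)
The statement bundles two high-probability claims: a lower bound on $\smin(\Phi)$ and an upper bound on $\norm{\Phi^\T\Phi}$. The plan is to establish the upper bound \emph{first}, because it serves double duty: it is one of the two asserted inequalities, and the resulting high-probability matrix domination $\sum_{t=1}^T \phi_t\phi_t^\T \preceq \bar\Gamma$ is precisely the ingredient required to invoke Proposition 2.5 of \citet{simchowitz18} for the lower bound. The other hypothesis of that proposition—the block martingale small-ball condition—is already supplied by Proposition~\ref{prop:bmsb}, which certifies that each scalar process $\ip{\phi_t}{y}$ obeys the $(1, \smin^2(C), 1/324)$ BMSB condition. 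Throughout, the feature vectors $\phi_t$ live in the $\svec$-space of dimension $(n+d)(n+d+1)/2 = \calO((n+d)^2)$, and it is this ambient dimension that drives the $(n+d)^2$ burn-in appearing in \eqref{eq:T_req}.

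For the upper bound, write $z_t := \cvectwo{x_t}{u_t}$, so that $\phi_t = \svec(z_t z_t^\T)$ and $\norm{\phi_t} = \norm{z_t z_t^\T}_F = \norm{z_t}^2$; hence $\norm{\Phi^\T\Phi} = \norm{\sum_t \phi_t\phi_t^\T} \le \sum_t \norm{\phi_t}^2 = \sum_t \norm{z_t}^4$. Since $x_t$ and $\eta_t$ are independent zero-mean Gaussians, $z_t$ is a zero-mean Gaussian with covariance $\Sigma_{z_t} = \cvectwo{I}{\Kplay}\Sigma_t\rvectwo{I}{\Kplay^\T} + \sigma_\eta^2\bmattwo{0}{0}{0}{I}$, and the standard Gaussian fourth-moment identity gives $\E\norm{z_t}^4 \le 3(\Tr\Sigma_{z_t})^2$. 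I would then bound $\Tr\Sigma_{z_t} \le (1+\norm{\Kplay}^2)\Tr\Sigma_t + \sigma_\eta^2 d$ and control $\Tr\Sigma_t$ through $(\tau,\rho)$-stability, namely $\Tr\Sigma_t \le \Tr P_\infty + \tau^2\rho^{2t}\Tr\Sigma_0 \le \tau^2\rho^2 n\norm{\Sigma_0} + \Tr P_\infty$ for $t\ge 1$. Summing over $t$ and applying Markov's inequality—which is exactly what produces the $1/\delta$ rather than $\log(1/\delta)$ dependence—yields the claimed bound, and I take $\bar\Gamma$ to be this scalar bound times the identity.

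For the lower bound, I would feed the BMSB parameters $\nu=\smin^2(C)$, $p=1/324$, $k=1$ together with $\bar\Gamma$ into Proposition 2.5 of \citet{simchowitz18}, whose conclusion gives $\lambda_{\min}(\Phi^\T\Phi) \ge T\nu^2 p^2/8 = T\smin^4(C)/(8\cdot 324^2)$. Its burn-in requirement is a $\log\det(\bar\Gamma\,\underline\Gamma^{-1})$ term with $\underline\Gamma \propto \smin^4(C)\,I \propto \sigma_\eta^4 I$, which is what reproduces both the logarithmic factor and the ratio $(1+\norm{\Kplay}^2)^2(\tau^2\rho^2 n\norm{\Sigma_0}+\Tr P_\infty)/\sigma_\eta^2$ inside \eqref{eq:T_req}. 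It then remains to lower-bound $\smin(C)$: factoring $C = \bmattwo{I}{0}{\Kplay}{I}\bmattwo{\sigma_w I}{0}{0}{\sigma_\eta I}$, using the submultiplicativity $\smin(C)\ge\smin\bmattwo{I}{0}{\Kplay}{I}\smin\bmattwo{\sigma_w I}{0}{0}{\sigma_\eta I}$, the assumption $\sigma_\eta\le\sigma_w$, and $\smin\bmattwo{I}{0}{\Kplay}{I} = 1/\norm{\bmattwo{I}{0}{-\Kplay}{I}}$, gives $\smin^2(C)\ge \sigma_\eta^2/(4(1+\norm{\Kplay}^2))$. Substituting and taking the square root produces exactly $\smin(\Phi)\ge \sigma_\eta^2\sqrt T/(1296\sqrt 8\,(1+\norm{\Kplay}^2))$.

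The main obstacle is the bookkeeping in the lower-bound step: correctly instantiating the abstract matrix inequality of Proposition 2.5, translating its $\log\det$ burn-in condition into the explicit closed form \eqref{eq:T_req} (including the appearance of $\sqrt\delta$, which comes from how the failure probability of the $\bar\Gamma$ domination is split against the small-ball event), and tracking absolute constants so that the final prefactors come out as stated. By contrast, the upper bound is a routine Gaussian-moment computation, and the BMSB input needed for the small-ball argument is already in hand from Proposition~\ref{prop:bmsb}.
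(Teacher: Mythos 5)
Your proposal is correct and follows essentially the same route as the paper: the upper bound on $\norm{\Phi^\T\Phi}$ via Markov's inequality applied to $\sum_t \E[\norm{z_t}^4] \leq 3\sum_t(\Tr \Sigma_{z_t})^2$ with the $(\tau,\rho)$-stability bound on $\Tr\Sigma_t$, and the lower bound on $\smin(\Phi)$ by combining the BMSB condition of Proposition~\ref{prop:bmsb} with the small-ball machinery of \citet{simchowitz18} and a lower bound on $\smin(C)$. The only cosmetic difference is that you invoke the packaged matrix-valued theorem, whereas the paper applies the scalar Proposition 2.5 to each element of an explicit $\varepsilon$-net of $\calS^{(n+d)(n+d+1)/2-1}$ and transfers to the full sphere using the operator-norm bound on $\Phi$ — which is exactly where the $\sqrt{\delta}$ and the covering exponent $(n+d)^2$ in \eqref{eq:T_req} arise, as you anticipated.
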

\begin{proof}
We first compute a crude upper bound on $\norm{\Phi}$ using Markov's inequality:
\begin{align*}
    \Pr( \norm{\Phi}^2 \geq t^2 ) = \frac{\E[ \lambda_{\max}(\Phi^\T \Phi) ]}{t^2} \leq \frac{\Tr(\E[\Phi^\T \Phi])}{t^2} \:.
\end{align*}
Now we upper bound $\E[\norm{\phi_t}^2]$.
Letting $z_t = (x_t, u_t)$, we have that $\E[\norm{\phi_t}^2] = \E[\norm{z_t}^4] \leq 3 (\E[\norm{z_t}^2])^2$.
We now bound $\E[ \norm{z_t}^2 ] \leq (1+\norm{\Kplay}^2) \Tr(\Sigma_t) + \sigma_\eta^2 d$, and therefore:
\begin{align*}
    \sqrt{\E[\norm{\phi_t}^2]} &\leq \sqrt{3} ((1+\norm{\Kplay}^2) \Tr(\Sigma_t) + \sigma_\eta^2 d) \\
    &\leq \sqrt{3}((1+\norm{\Kplay}^2)(\tau^2\rho^2 n \norm{\Sigma_0} + \Tr(P_\infty)) + \sigma_\eta^2 d) \\
    &\leq 2\sqrt{3} (1+\norm{\Kplay}^2)(\tau^2\rho^2 n \norm{\Sigma_0} + \Tr(P_\infty)) \:.
\end{align*}
Above, the last inequality holds because $\sigma_\eta^2 d \leq \sigma_w^2 n \leq \Tr(P_\infty)$.
Therefore, we have from Markov's inequality:
\begin{align*}
    \Pr\left(\norm{\Phi} \geq \frac{\sqrt{T}}{\sqrt{\delta}} 2\sqrt{3}  (1+\norm{\Kplay}^2) (\tau^2\rho^2 n \norm{\Sigma_0} + \Tr(P_\infty)) \right) \leq \delta \:.
\end{align*}
Fix an $\varepsilon > 0$, and let $\calN(\varepsilon)$ denote an $\varepsilon$-net of the unit sphere
$\calS^{(n+d)(n+d+1)/2-1}$.
Next, by Proposition~2.5 of \citet{simchowitz18} and a union bound over $\calN(\varepsilon)$:
\begin{align*}
    \Pr\left( \min_{v \in \calN(\varepsilon)} \norm{\Phi v} \geq \frac{\smin^2(C)}{324\sqrt{8}}\sqrt{T} \right) \geq 1 - (1+2/\varepsilon)^{(n+d)^2} e^{-\frac{T}{324^2 \cdot 8}} \:.
\end{align*}
Now set
\begin{align*}
    \varepsilon = \frac{\sqrt{\delta}}{5184\sqrt{3}} \frac{\smin^2(C)}{ (1+\norm{\Kplay}^2) (\tau^2\rho^2 n \norm{\Sigma_0} + \Tr(P_\infty))} \:,
\end{align*}
and observe that as long as $T$ exceeds:
\begin{align*}
    T \geq 324^2 \cdot 8 \left( (n+d)^2 \log\left(1+ \frac{10368 \sqrt{3}}{\sqrt{\delta}} \frac{(1+\norm{\Kplay}^2)(\tau^2\rho^2 n \norm{\Sigma_0} + \Tr(P_\infty))}{\smin^2(C)}  \right) + \log(2/\delta) \right) \:,
\end{align*}
we have that
$\Pr\left( \min_{v \in \calN(\varepsilon)} \norm{\Phi v} \geq \frac{\smin^2(C)}{324\sqrt{8}}\sqrt{T} \right) \geq 1-\delta/2$.
To conclude, observe that:
\begin{align*}
    \smin(\Phi) = \inf_{\norm{v}=1} \norm{\Phi v} \geq \min_{v \in \calN(\varepsilon)} \norm{\Phi v} - \norm{\Phi}\varepsilon \:,
\end{align*}
and union bound over the two events. 
To conclude the proof, note that Lemma F.6 in \citet{dean18}
yields that $\smin^2(C) \geq \frac{\sigma_\eta^2}{2} \frac{1}{1+\norm{\Kplay}^2}$ since $\sigma_\eta \leq \sigma_w$.
\end{proof}

We now turn our attention to upper bounding the self-normalized martingale terms:
\begin{align*}
	\norm{(\Phi^\T \Phi)^{-1} \Phi^\T (\Xi - \Psi_+)} \:\: \text{and} \:\:
	\norm{(\Phi^\T \Phi)^{-1} \Phi^\T (\Xi - \Psi_+)q} \:.
\end{align*}
Our main tool here will be the self-normalized tail bounds of \citet{abbasi11b}.
\begin{lem}[Corollary 1, \cite{abbasi11b}]
\label{lem:self_normalized}
Let $\{\calF_t\}$ be a filtration.
Let $\{x_t\}$ be a $\R^{d_1}$ process that is adapted to $\{ \calF_t \}$
and let $\{w_t\}$ be a $\R^{d_2}$ martingale difference sequence that is adapted to $\{ \calF_t \}$.
Let $V$ be a fixed positive definite $d_1 \times d_1$ matrix and define:
\begin{align*}
    \bar{V}_t = V + \sum_{s=1}^{t} x_sx_s^\T \:, \:\: S_t = \sum_{s=1}^{t} x_s w_{s+1}^\T \:.
\end{align*}
\begin{enumerate}[(a)]
\item 
Suppose for any fixed unit $h \in \R^{d_2}$ we have that $\ip{w_t}{h}$ is conditionally $R$-sub-Gaussian, that is:
\begin{align*}
    \forall \lambda \in \R, t \geq 1 \:, \:\: \E[ e^{\lambda \ip{w_{t+1}}{h}} | \calF_t ] \leq e^{\frac{\lambda^2 R^2}{2}} \:.
\end{align*}

We have that with probability at least $1-\delta$, for all $t \geq 1$,
\begin{align*}
    \norm{ \bar{V}_t^{-1/2} S_t }^2 \leq 8 R^2 \left(d_2 \log{5} + \log\left( \frac{\det(\bar{V}_t)^{1/2} \det(V)^{-1/2}}{\delta} \right) \right) \:.
\end{align*}
\item 
Now suppose that $\bar{\delta}$ satisfies the condition:
\begin{align*}
    \sum_{s=2}^{T+1} \Pr( \norm{w_{s}} > R ) \leq \bar{\delta} \:.
\end{align*}
Then with probability at least $1-\delta-\bar{\delta}$,
for all $1 \leq t \leq T$,
\begin{align*}
    \norm{ \bar{V}_t^{-1/2} S_t }^2 \leq 32 R^2 \left(d_2 \log{5} + \log\left( \frac{\det(\bar{V}_t)^{1/2} \det(V)^{-1/2}}{\delta} \right) \right) \:.
\end{align*}
\end{enumerate}
\end{lem}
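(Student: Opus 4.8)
The plan is to handle part (a) first by reducing the operator-norm control of the matrix-valued self-normalized quantity $\bar{V}_t^{-1/2} S_t$ to a family of scalar self-normalized bounds, one for each direction $h$. For a fixed unit vector $h \in \R^{d_2}$, the increments $\ip{w_{s+1}}{h}$ are conditionally $R$-sub-Gaussian by hypothesis, so $S_t h = \sum_{s=1}^{t} x_s \ip{w_{s+1}}{h}$ is an $\R^{d_1}$-valued martingale with sub-Gaussian increments adapted to $\{\calF_t\}$. This is exactly the setting for the classical \emph{method of mixtures} (pseudo-maximization), which I would use to obtain the per-direction estimate.

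Concretely, for each auxiliary vector $\lambda \in \R^{d_1}$ I would introduce the exponential process $M_t^{\lambda} := \exp\bigl(\ip{\lambda}{S_t h} - \tfrac{R^2}{2}\lambda^\T(\bar{V}_t - V)\lambda\bigr)$ and verify, by applying the conditional sub-Gaussian bound with coefficient $\ip{\lambda}{x_s}$, that it is a nonnegative supermartingale with $\E[M_t^{\lambda}] \le 1$. Averaging $M_t^{\lambda}$ against the Gaussian mixing density $\lambda \sim \Normal(0, R^{-2} V^{-1})$ yields a new nonnegative supermartingale $\bar{M}_t$ whose value, after completing the square in the Gaussian integral, equals $(\det V / \det \bar{V}_t)^{1/2}\exp\bigl(\tfrac{1}{2R^2}\norm{\bar{V}_t^{-1/2} S_t h}^2\bigr)$. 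Ville's maximal inequality for nonnegative supermartingales then gives, with probability at least $1-\delta$ and simultaneously for all $t$, the scalar bound $\norm{\bar{V}_t^{-1/2} S_t h}^2 \le 2R^2 \log\bigl(\det(\bar{V}_t)^{1/2}\det(V)^{-1/2}/\delta\bigr)$. To pass from a fixed direction to the operator norm, I would take a $1/2$-net $\calN$ of the sphere $\calS^{d_2-1}$, of cardinality at most $5^{d_2}$, union bound the scalar estimate over $\calN$ (replacing $\log(1/\delta)$ by $d_2\log 5 + \log(1/\delta)$), and invoke the net comparison $\norm{\bar{V}_t^{-1/2} S_t} \le 2 \max_{h \in \calN}\norm{\bar{V}_t^{-1/2} S_t h}$. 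Squaring introduces a factor $4$, which together with the $2R^2$ above produces the stated constant $8R^2$ and establishes part (a).

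For part (b) the idea is to dispense with the sub-Gaussian hypothesis by truncation. I would define the event $\Event := \{\norm{w_s} \le R \text{ for } 2 \le s \le T+1\}$, so that $\Pr(\Event^c) \le \bar{\delta}$ by the union bound and the given tail condition, and then form the truncated, recentered sequence $\tilde{w}_s := w_s\ind\{\norm{w_s}\le R\} - \E[w_s\ind\{\norm{w_s}\le R\} \mid \calF_{s-1}]$, which is a martingale difference bounded in norm by $2R$. By Hoeffding's lemma, each $\ip{\tilde{w}_s}{h}$ is then $2R$-sub-Gaussian, so part (a) applies to the truncated partial sums $\tilde{S}_t$ with $R$ replaced by $2R$, which accounts exactly for the worse constant $32R^2$.

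The step I expect to be the main obstacle is the transfer back to the original process. On $\Event$ the truncation is inactive, so the hard term $w_s\ind\{\norm{w_s}>R\}$ vanishes pathwise; but the recentering still leaves a bias $\sum_s x_s\,\E[w_{s+1}\ind\{\norm{w_{s+1}}>R\}\mid\calF_s]^\T$ separating $S_t$ from $\tilde{S}_t$. Controlling this bias — or arranging the truncation/coupling so that it cancels — so that the high-probability bound on $\tilde{S}_t$ genuinely certifies the same bound for $S_t$ on $\Event$, is the delicate part of the argument, and it is precisely what forces the two failure probabilities $\delta$ and $\bar{\delta}$ to enter additively in the final conclusion of the form $1-\delta-\bar{\delta}$.
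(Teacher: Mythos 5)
Your part (a) is essentially the paper's argument: a fixed-direction self-normalized bound of the form $\norm{\bar{V}_t^{-1/2} S_t h}^2 \leq 2R^2\log(\det(\bar{V}_t)^{1/2}\det(V)^{-1/2}/\delta)$, followed by a union bound over a $1/2$-net of $\calS^{d_2-1}$ of size $5^{d_2}$ and the comparison $\norm{\bar{V}_t^{-1/2}S_t} \leq 2\max_{h\in\calN(1/2)}\norm{\bar{V}_t^{-1/2}S_t h}$, giving the factor $8R^2$. The only difference is that the paper invokes Corollary 1 of \citet{abbasi11b} as a black box for the fixed-direction bound, while you re-derive it by the method of mixtures; since that is how the cited result is proved, this is the same route.

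Part (b) has a genuine gap, and you have correctly located it but not closed it. With the truncation $\tilde{w}_s = w_s\ind\{\norm{w_s}\leq R\} - \E[w_s\ind\{\norm{w_s}\leq R\}\mid\calF_{s-1}]$, the recentering term equals $\E[w_s\ind\{\norm{w_s}>R\}\mid\calF_{s-1}]$, which is a conditional expectation and therefore does \emph{not} vanish pathwise on the good event $\Event$; it leaves a bias $\sum_s x_s\,\E[w_{s+1}\ind\{\norm{w_{s+1}}>R\}\mid\calF_s]^\T$ separating $S_t$ from $\tilde{S}_t$. The lemma's hypothesis only controls the \emph{marginal} tail probabilities $\sum_s\Pr(\norm{w_s}>R)$, which gives no handle on these conditional truncated means, so the bias cannot be bounded by the stated assumptions along your route; the additive $\bar{\delta}$ in the conclusion accounts only for $\Pr(\Event^c)$, not for this term. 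The missing idea is to truncate with a \emph{predictable} indicator rather than recentering: the paper sets $\tau := \inf\{t \geq 1 : \norm{w_t} > R\}$ and $\tilde{w}_t := w_t\ind_{\tau\geq t}$. Since $\{\tau \geq t\}$ is $\calF_{t-1}$-measurable, $\E[\tilde{w}_t\mid\calF_{t-1}] = \ind_{\tau\geq t}\,\E[w_t\mid\calF_{t-1}] = 0$ exactly, so the truncated process is a martingale difference sequence with no recentering and hence no bias, and on the event $\{\tau > T+1\}$ one has $\tilde{w}_t = w_t$ for all relevant $t$, so the partial sums agree pathwise there. Part (a) applied to the truncated process (with $R$ replaced by $2R$, giving the $32R^2$) plus the union bound $\Pr(\tau \leq T+1) \leq \sum_{s}\Pr(\norm{w_s}>R)\leq\bar{\delta}$ then yields the claim. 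Without this stopping-time device (or some substitute assumption controlling the conditional truncated means), your part (b) does not go through.
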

\begin{proof}
Fix a unit $h \in \R^{d_2}$.
By Corollary 1 of \citet{abbasi11b}, we have with probability at least $1-\delta$,
\begin{align*}
    \norm{ \bar{V}_t^{-1/2} S_t h }^2 \leq 2 R^2 \log\left( \frac{\det(\bar{V}_t)^{1/2} \det(V)^{-1/2}}{\delta} \right) \:, \:\: 1 \leq t \leq T \:.
\end{align*}
A standard covering argument yields that:
\begin{align*}
    \norm{ \bar{V}_t^{-1/2} S_t }^2 \leq 4 \max_{h \in \calN(1/2)} \norm{ \bar{V}_t^{-1/2} S_t h }^2 \:.
\end{align*}
Union bounding over $\calN(1/2)$, we obtain that:
\begin{align*}
    \norm{ \bar{V}_t^{-1/2} S_t }^2 &\leq 8 R^2 \log\left( 5^{d_2} \frac{\det(\bar{V}_t)^{1/2} \det(V)^{-1/2}}{\delta} \right) \\
    &= 8R^2 \left(d_2 \log{5} + \log\left( \frac{\det(\bar{V}_t)^{1/2} \det(V)^{-1/2}}{\delta} \right) \right) \:.
\end{align*}
This yields (a).

For (b), we use a simple stopping time argument to handle truncation.
Define the stopping time $\tau := \inf\{ t \geq 1 : \norm{w_t} > R \}$
and the truncated process $\tilde{w}_t := w_t \ind_{\tau \geq t}$.
Because $\tau$ is a stopping time, this truncated process $\{\tilde{w}_t\}$
remains a martingale difference sequence.
Define $Z_t = \sum_{s=1}^{t} x_s \tilde{w}_{s+1}^\T$.
For any $\ell > 0$ we observe that:
\begin{align*}
    &\Pr( \exists 1 \leq t \leq T : \norm{ \bar{V}_t^{-1/2} S_t } > \ell ) \\
    &\leq \Pr( \{\exists 1 \leq t \leq T : \norm{ \bar{V}_t^{-1/2} S_t } > \ell \} \cap \{ \tau > T+1 \} ) + \Pr(\tau \leq T+1) \\
    &= \Pr( \{\exists 1 \leq t \leq T : \norm{ \bar{V}_t^{-1/2} Z_t } > \ell \} \cap \{ \tau > T+1 \} ) + \Pr(\tau \leq T+1) \\
    &\leq \Pr( \exists t \geq 1 : \norm{ \bar{V}_t^{-1/2} Z_t } > \ell ) + \Pr(\tau \leq T+1) \\
    &\leq \Pr( \exists t \geq 1 : \norm{ \bar{V}_t^{-1/2} Z_t } > \ell ) + \sum_{s=2}^{T+1} \Pr( \norm{w_s} > R ) \\
    &\leq \Pr( \exists t \geq 1 : \norm{ \bar{V}_t^{-1/2} Z_t } > \ell ) + \bar{\delta} \:.
\end{align*}
Now set $\ell = 32 R^2 \left(d_2 \log{5} + \log\left( \frac{\det(\bar{V}_t)^{1/2} \det(V)^{-1/2}}{\delta} \right) \right)$
and using the fact that a $R$ bounded random variable is $2R$-sub-Gaussian, the claim now follows
by another application of Corollary 1 from \cite{abbasi11b}.
\end{proof}

With Lemma~\ref{lem:self_normalized} in place, we are ready to bound the 
martingale difference terms.
\begin{prop}
\label{prop:mds_normalized}
Suppose the hypothesis of Proposition~\ref{prop:singularbound} hold.
With probability at least $1-\delta$,
\begin{align*}
    \norm{(\Phi^\T \Phi)^{-1/2} \Phi^\T (\Xi - \Psi_+) q} &\leq (n+d) \sigma_w\sqrt{ \tau^2\rho^4\norm{\Sigma_0} + \norm{P_\infty} + \sigma_\eta^2 \norm{B}^2}(1+\norm{\Keval}^2) \norm{Q}_F \\
    &\qquad \times \polylog( n, \tau, \norm{\Sigma_0}, \norm{P_\infty}, \norm{\Kplay}, T/\delta, 1/\sigma_\eta ) \:, \\
    \norm{(\Phi^\T \Phi)^{-1/2} \Phi^\T (\Xi - \Psi_+) } &\leq (n+d)^2 \sigma_w\sqrt{ \tau^2\rho^4\norm{\Sigma_0} + \norm{P_\infty} + \sigma_\eta^2 \norm{B}^2}(1+\norm{\Keval}^2) \\
    &\qquad \times \polylog( n, \tau, \norm{\Sigma_0}, \norm{P_\infty}, \norm{\Kplay}, T/\delta, 1/\sigma_\eta ) \:.
\end{align*}
\end{prop}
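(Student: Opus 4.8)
The plan is to recognize the rows of $\Xi - \Psi_+$ as a martingale difference sequence and then invoke the self-normalized inequality of Lemma~\ref{lem:self_normalized}. Concretely, writing $w_{i+1} := \psi_{i+1} - \E[\psi_{i+1}\mid\calF_i]$, the $i$-th row of $\Xi - \Psi_+$ is exactly $-w_{i+1}^\T$; and since $x_{i+1} = Ax_i + Bu_i + w_i$ with $w_i$ the only fresh randomness not contained in $\calF_i$, the sequence $\{w_{i+1}\}$ is adapted to $\{\calF_i\}$ and mean-zero given $\calF_i$. Thus $\Phi^\T(\Xi-\Psi_+) = -\sum_{i=1}^T \phi_i w_{i+1}^\T =: -S_T$ and $\Phi^\T(\Xi-\Psi_+)q = -\sum_{i=1}^T\phi_i\langle w_{i+1},q\rangle$, so after taking $\bar{V}_T = \lambda I + \Phi^\T\Phi$ for a small fixed $\lambda$ both quantities match the self-normalized form $\bar{V}_T^{-1/2}S_T$ of Lemma~\ref{lem:self_normalized} with $x_s = \phi_s$. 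The operator-norm quantity uses the full $d_2 = (n+d)(n+d+1)/2$-dimensional difference $w_{i+1}$, whereas the $q$-weighted quantity uses the scalar difference $\langle w_{i+1},q\rangle$, so $d_2 = 1$.

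The main obstacle is that $\psi_{i+1} = \svec(\cvectwo{I}{\Keval}x_{i+1}x_{i+1}^\T\cvectwo{I}{\Keval}^\T)$ is quadratic in the Gaussian $x_{i+1}$, so the increments $w_{i+1}$ are sub-exponential rather than sub-Gaussian and no uniform sub-Gaussian constant exists. I would therefore use the truncated variant, part (b) of Lemma~\ref{lem:self_normalized}, which needs only a high-probability magnitude bound $R$ together with a tail budget $\bar{\delta}\geq\sum_s\Pr(\norm{w_s} > R)$. To produce $R$ I would expand the increment about the fresh noise: with $\mu_i := Ax_i + Bu_i \in\calF_i$,
\begin{align*}
x_{i+1}x_{i+1}^\T - \E[x_{i+1}x_{i+1}^\T\mid\calF_i] = \mu_i w_i^\T + w_i\mu_i^\T + (w_iw_i^\T - \sigma_w^2 I),
\end{align*}
and bound the two pieces separately using Gaussian concentration, union bounding over $1\le i\le T$. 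This is exactly where the product scaling $\sigma_w\sigmabar$ emerges: the dominant cross term is linear in $w_i$ (scale $\sigma_w$) and its other factor is controlled by the conditional state scale, for which $\Cov(\mu_i) = \Sigma_{i+1} - \sigma_w^2 I \preceq \Sigma_{i+1}$ gives $\norm{\Sigma_{i+1}}\le\sigmabar^2$ (using $(\tau,\rho)$-stability of $A+B\Kplay$ and $\rho^{2(i+1)}\le\rho^4$ for $i\ge 1$).

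A key point — and the delicate one — is that the two bounds carry different dimension factors. For the scalar quantity $\langle w_{i+1},q\rangle = 2\mu_i^\T\widetilde{Q}w_i + (w_i^\T\widetilde{Q}w_i - \sigma_w^2\Tr\widetilde{Q})$ with $\widetilde{Q} = \cvectwo{I}{\Keval}^\T\smat(q)\cvectwo{I}{\Keval}$, the linear part concentrates at scale $\sigma_w\norm{\widetilde{Q}\mu_i}$, and since $\E\norm{\widetilde{Q}\mu_i}^2 = \Tr(\widetilde{Q}^\T\widetilde{Q}\,\Cov(\mu_i))\le\sigmabar^2\norm{\widetilde{Q}}_F^2$ the relevant scale is the Frobenius norm $\sigmabar\norm{\widetilde{Q}}_F\lesssim\sigmabar(1+\norm{\Keval}^2)\norm{Q}_F$ with no extra dimension factor; this yields $R\lesssim\sigma_w\sigmabar(1+\norm{\Keval}^2)\norm{Q}_F\cdot\polylog$. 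For the operator-norm quantity the truncation instead controls the full Euclidean norm $\norm{w_{i+1}}$ (the Frobenius norm of the increment matrix), whose cross term is rank two with both factors of order $\sqrt{n}$, namely $\norm{\mu_i}\lesssim\sqrt{n}\sigmabar$ and $\norm{w_i}\lesssim\sigma_w\sqrt{n}$, producing $R\lesssim n\,\sigma_w\sigmabar(1+\norm{\Keval}^2)\cdot\polylog$.

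Finally I would assemble the self-normalized bound $\norm{\bar{V}_T^{-1/2}S_T}^2\lesssim R^2(d_2\log 5 + \log(\det(\bar{V}_T)^{1/2}\det(\lambda I)^{-1/2}))$, controlling the log-determinant by $\tfrac12 d_1\log(1+\norm{\Phi^\T\Phi}/\lambda)$ with $d_1 = (n+d)(n+d+1)/2$ and the high-probability bound on $\norm{\Phi^\T\Phi}$ from Proposition~\ref{prop:singularbound}, so that the square-root factor is $\lesssim(n+d)\sqrt{\polylog}$ in both cases. Multiplying by $R$ then gives $(n+d)$ for the $q$-weighted bound and $(n+d)^2$ for the operator-norm bound. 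The last step is to replace $\bar{V}_T^{-1/2}$ by $(\Phi^\T\Phi)^{-1/2}$: since $\bar{V}_T = \lambda I + \Phi^\T\Phi$ we have $\norm{(\Phi^\T\Phi)^{-1/2}S_T}\le(1+\lambda/\smin^2(\Phi))^{1/2}\norm{\bar{V}_T^{-1/2}S_T}$, and on the event of Proposition~\ref{prop:singularbound} where $\smin^2(\Phi)\gtrsim\sigma_\eta^4 T/(1+\norm{\Kplay}^2)^2$ this prefactor is a constant for a suitable fixed $\lambda$, completing the argument. I expect the hardest part to be the truncation and tail-budget bookkeeping in the second step, together with simultaneously preserving the $\sigma_w\sigmabar$ product (rather than $\sigmabar^2$) and correctly distinguishing the $\norm{Q}_F$-Frobenius scaling of the scalar bound from the dimension-heavy full-norm scaling of the operator bound.
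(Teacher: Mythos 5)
Your proposal follows essentially the same route as the paper's proof: decompose the increment as $\mu_i w_i^\T + w_i\mu_i^\T + (w_iw_i^\T - \sigma_w^2 I)$, bound the quadratic part by Hanson--Wright and the cross term by the conditional state covariance (yielding the $\sigma_w\sigmabar$ product), truncate and apply part (b) of Lemma~\ref{lem:self_normalized}, and control the log-determinant and the passage from $\bar V_T^{-1/2}$ to $(\Phi^\T\Phi)^{-1/2}$ via the singular-value bounds of Proposition~\ref{prop:singularbound}. The only differences are cosmetic (a fixed regularizer $\lambda I$ versus the paper's $T$-scaled $V_1$, and a direct conditional-Gaussian argument in place of the citation to Proposition 4.7 of \citet{tu18a}), and your accounting of the dimension factors and of the $\norm{Q}_F$ versus full-norm scaling matches the paper's.
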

\begin{proof}
For the proof, constants $c,c_i$ will denote universal constants.
Define two matrices:
\begin{align*}
    V_1 &:= c_1 \frac{\sigma_\eta^4}{(1+\norm{\Kplay}^2)^2} T \cdot I \:, \\
    V_2 &:= c_2 \frac{T}{\delta} (1+\norm{\Kplay}^2)^2 (\tau^2 \rho^2 n \norm{\Sigma_0} + \Tr(P_\infty))^2 \cdot I \:.
\end{align*}
By Proposition~\ref{prop:singularbound}, with probability at least $1-\delta/2$, we have that:
\begin{align*}
    V_1 \preceq \Phi^\T \Phi \preceq V_2 \:.
\end{align*}
Call this event $\calE_1$.

Next, we have:
\begin{align*}
    &\E[ x_{t+1}x_{t+1}^\T | x_t, u_t ] - x_{t+1}x_{t+1}^\T \\
    &= \E[ (Ax_t + Bu_t + w_t)(Ax_t + Bu_t + w_t)^\T | x_t,u_t] -  (Ax_t + Bu_t + w_t)(Ax_t + Bu_t + w_t)^\T \\
    &= (Ax_t + Bu_t)(Ax_t + Bu_t)^\T + \sigma_w^2 I \\
    &\qquad- (Ax_t + Bu_t)(Ax_t + Bu_t)^\T - (Ax_t+Bu_t)w_t^\T - w_t(Ax_t+Bu_t)^\T - w_tw_t^\T \\
    &= \sigma_w^2 I - w_tw_t^\T - (Ax_t+Bu_t)w_t^\T - w_t(Ax_t+Bu_t)^\T \:.
\end{align*}
Therefore,
\begin{align*}
    &\E[ \psi_{t+1} | x_t,u_t] - \psi_{t+1} \\
    &= \svec\left( \cvectwo{I}{\Keval} (\sigma_w^2 I - w_tw_t^\T - (Ax_t+Bu_t)w_t^\T - w_t(Ax_t+Bu_t)^\T ) \cvectwo{I}{\Keval}^\T  \right) \:.
\end{align*}
Taking the inner product of this term with $q$,
\begin{align*}
    &(\E[ \psi_{t+1} | x_t,u_t] - \psi_{t+1})^\T q \\
    &= \Tr\left((\sigma_w^2 I - w_tw_t^\T - (Ax_t+Bu_t)w_t^\T - w_t(Ax_t+Bu_t)^\T ) \cvectwo{I}{\Keval}^\T Q \cvectwo{I}{\Keval} \right) \\
    &= \Tr\left((\sigma_w^2 I - w_tw_t^\T)\cvectwo{I}{\Keval}^\T Q \cvectwo{I}{\Keval} \right) - 2 w_t^\T \cvectwo{I}{\Keval}^\T Q \cvectwo{I}{\Keval}(Ax_t+Bu_t) \:.
\end{align*}
By the Hanson-Wright inequality (see e.g.~\citet{rudelson13}), with probability at least $1-\delta/T$,
\begin{align*}
    \bigabs{\Tr\left((\sigma_w^2 I - w_tw_t^\T)\cvectwo{I}{\Keval}^\T Q \cvectwo{I}{\Keval} \right)} \leq c_1 \sigma_w^2 (1+\norm{\Keval}^2) \norm{Q}_F \log(T/\delta) \:.
\end{align*}
Now, let $\Lplay := A + B \Kplay$.
By Proposition 4.7 in \citet{tu18a}, with probability at least $1-\delta/T$,
\begin{align*}
    &\bigabs{w_t^\T \cvectwo{I}{\Keval}^\T Q \cvectwo{I}{\Keval}(Ax_t+Bu_t)} \\
    &\leq c_1 \sigma_w (1+\norm{\Keval}^2) \sqrt{ \norm{\Lplay^{t+1} \Sigma_0 (\Lplay^{t+1})^\T} +  \norm{\Lplay P_t \Lplay^\T} + \sigma_\eta^2 \norm{B}^2} \norm{Q}_F \log(T/\delta)  \\
    &\leq c_1 \sigma_w (1+\norm{\Keval}^2) \sqrt{ \tau^2 \rho^{2(t+1)} \norm{\Sigma_0} + \norm{P_\infty} + \sigma_\eta^2 \norm{B}^2} \norm{Q}_F \log(T/\delta) \:,
\end{align*}
where the inequality above comes from
$P_t \preceq P_\infty$ and $\Lplay P_\infty \Lplay^\T \preceq P_\infty$.
Therefore, we have:
\begin{align*}
    &\abs{(\E[ \psi_{t+1} | x_t,u_t] - \psi_{t+1})^\T v} \\
    &\leq c_2 (\sigma_w^2  +\sigma_w\sqrt{ \tau^2 \rho^{2(t+1)} \norm{\Sigma_0} + \norm{P_\infty} + \sigma_\eta^2 \norm{B}^2} ) (1+\norm{\Keval}^2) \norm{Q}_F \log(T/\delta) \\
    &\leq c_3 \sigma_w\sqrt{ \tau^2 \rho^{2(t+1)} \norm{\Sigma_0} + \norm{P_\infty} + \sigma_\eta^2 \norm{B}^2}(1+\norm{\Keval}^2) \norm{Q}_F \log(T/\delta) \:.
\end{align*}
The last inequality holds because $P_\infty \succeq \sigma_w^2 I$ and hence
$\sigma_w \leq \norm{P_\infty}^{1/2}$.
Therefore we can set 
\begin{align*}
R = c_3 \sigma_w\sqrt{ \tau^2 \rho^4 \norm{\Sigma_0} + \norm{P_\infty} + \sigma_\eta^2 \norm{B}^2}(1+\norm{\Keval}^2) \norm{Q}_F \log(T/\delta) \:,
\end{align*}
and invoke Lemma~\ref{lem:self_normalized} to conclude that with probability at least $1-\delta/2$,
\begin{align*}
    \norm{ (V_1 + \Phi^\T \Phi)^{-1/2} \Phi^\T (\Xi - \Psi_+) v } \leq c_4 (n+d) R + c_5 R \sqrt{ \log( \det((V_1 + \Phi^\T \Phi) V_1^{-1})^{1/2} / \delta) } \:.
\end{align*}
Call this event $\calE_2$.

For the remainder of the proof we work on $\calE_1 \cap \calE_2$, which has probability at least $1-\delta$.
Since $\Phi^\T \Phi \succeq V_1$, we have that $(\Phi^\T \Phi)^{-1} \leq 2 (V_1 + \Phi^\T \Phi)^{-1}$.
Therefore, by another application of Lemma~\ref{lem:self_normalized}:
\begin{align*}
    &\norm{ (\Phi^\T \Phi)^{-1/2} \Phi^\T (\Xi - \Psi_+)} \\
    &\leq \sqrt{2} \norm{ (V_1 + \Phi^\T \Phi)^{-1/2} \Phi^\T (\Xi - \Psi_+)} \\
    &\leq c_6 (n+d) R + c_7 R \sqrt{ \log( \det((V_1 + \Phi^\T \Phi) V_1^{-1})^{1/2} / \delta) } \\
    &\leq c_6 (n+d) R + c_7 R \sqrt{ \log( \det((V_1 + V_2) V_1^{-1})^{1/2} / \delta) } \\
    &\leq c_6 (n+d) R + c_8 R (n+d) \sqrt{ \log\left(  \frac{(1+\norm{\Kplay}^2)^4}{\delta} \frac{(\tau^2 \rho^2 n \norm{\Sigma_0} + \Tr(P_\infty))^2}{\sigma_\eta^4} \right)} \\
    &\leq c (n+d) R \polylog( n, \tau, \norm{\Sigma_0}, \norm{P_\infty}, \norm{\Kplay}, 1/\delta, 1/\sigma_\eta ) \:.
\end{align*}

Next, we bound:
\begin{align*}
    &\norm{ \E[\psi_{t+1}|x_t,u_t] - \psi_{t+1} } \\
    &\leq \bignorm{ \cvectwo{I}{\Keval} (\sigma_w^2 I - w_tw_t^\T) \cvectwo{I}{\Keval}^\T }_F +
          \bignorm{ \cvectwo{I}{\Keval} w_t (Ax_t + B u_t)^\T  \cvectwo{I}{\Keval}^\T }_F \\
          &\leq (1+\norm{\Keval}^2) (\norm{\sigma_w^2 I - w_tw_t^\T}_F + \norm{w_t (Ax_t + B u_t)^\T}_F) \:.
\end{align*}
Now, by standard Gaussian concentration results, with probability $1 - \delta/T$,
\begin{align*}
    \norm{\sigma_w^2 I - w_tw_t^\T}_F &\leq c\sigma_w^2( n + \log(T/\delta) ) \:,
\end{align*}
and also
\begin{align*}
    &\norm{w_t(Ax_t + B u_t)^\T}_F \\
    &\leq c\sigma_w(\sqrt{n}+\sqrt{\log(T/\delta)})  (\sqrt{\Tr(\Lplay^{t+1} \Sigma_0 (\Lplay^{t+1})^\T) + \Tr(\Lplay P_t \Lplay^\T) + \sigma_\eta^2 \norm{B}_F^2} \\
    &\qquad + \sqrt{\norm{\Lplay^{t+1} \Sigma_0 (\Lplay^{t+1})^\T} + \norm{\Lplay P_t \Lplay^\T} + \sigma_\eta^2 \norm{B}^2} \sqrt{\log(T/\delta)} ) \\
    &\leq c\sigma_w (n+d) \sqrt{ \tau^2 \rho^4 \norm{\Sigma_0} + \norm{P_\infty} + \sigma_\eta^2 \norm{B}} \log(T/\delta) \:.
\end{align*}
Therefore, with probability $1-\delta/T$,
\begin{align*}
    &\norm{ \E[\psi_{t+1}|x_t,u_t] - \psi_{t+1} } \\
    &\leq c (1+\norm{\Keval}^2) (n+d) \sigma_w\sqrt{\tau^2 \rho^4 \norm{\Sigma_0} + \norm{P_\infty} + \sigma_\eta^2 \norm{B}^2}\log(T/\delta) \:.
\end{align*}
\end{proof}

We are now in a position to prove Theorem~\ref{thm:lstd_q_estimation}.
We first observe that we can lower bound $\smin(I - L \otimes_s L)$
using the $(\tau,\rho)$-stability of $A+B\Keval$. This is because for $k \geq 1$,
\begin{align*}
    \norm{L^k} &= \bignorm{\cvectwo{I}{\Keval} (A+B\Keval)^{k-1} \rvectwo{A}{B}} \\
    &\leq 2\norm{\Keval}_+ \norm{\rvectwo{A}{B}} \tau \rho^{k-1} \\
    &\leq \frac{2\norm{\Keval}_+ \max\{1,\sqrt{\norm{A}^2 + \norm{B}^2}\} }{\rho} \tau \cdot \rho^k \:.
\end{align*}
Hence we see that $L$ is $(\frac{2\norm{\Keval}_+ \max\{1,\sqrt{\norm{A}^2 + \norm{B}^2}\}}{\rho} \tau, \rho)$-stable.
Next, we know that $\smin(I - L \otimes_s L) = \frac{1}{\norm{(I-L\otimes_s L)^{-1}}}$.
Therefore, for any unit norm $v$,
\begin{align*}
    \norm{(I-L \otimes_s L)^{-1} v} &= \norm{(I-L \otimes_s L)^{-1} \svec(\smat(v))} = \norm{\dlyap( L^\T, \smat(v) )}_F \\
    &\leq  \frac{4\norm{\Keval}_+^2 (\norm{A}^2+\norm{B}^2)_+ \tau^2}{\rho^2(1-\rho^2)} \:.
\end{align*}
Here, the last inequality uses Proposition~\ref{prop:dlyap_norm_bound}.
Hence we have the bound:
\begin{align*}
    \smin(I - L\otimes_s L) \geq \frac{\rho^2(1-\rho^2)}{4\norm{\Keval}_+^2 (\norm{A}^2+\norm{B}^2)_+ \tau^2} \:.
\end{align*}
By Proposition~\ref{prop:singularbound}, 
as long as $T \geq \Otilde(1)(n+d)^2$ with probability at least $1-\delta/2$:
\begin{align*}
    \smin(\Phi) \geq c \frac{\sigma_\eta^2}{\norm{\Kplay}_+^2} \sqrt{T} \:.
\end{align*}
By Proposition~\ref{prop:mds_normalized}, with probability at least $1-\delta/2$:
\begin{align*}
    \norm{(\Phi^\T \Phi)^{-1/2} \Phi^\T (\Xi - \Psi_+) q} &\leq (n+d) \sigma_w\sqrt{ \tau^2\rho^4\norm{\Sigma_0} + \norm{P_\infty} + \sigma_\eta^2 \norm{B}^2}\norm{\Keval}^2_+ \norm{Q^{\Keval}}_F \Otilde(1) \:, \\
    \norm{(\Phi^\T \Phi)^{-1/2} \Phi^\T (\Xi - \Psi_+) } &\leq (n+d)^2 \sigma_w\sqrt{\tau^2\rho^4\norm{\Sigma_0} + \norm{P_\infty} + \sigma_\eta^2 \norm{B}^2}\norm{\Keval}^2_+ \Otilde(1) \:.
\end{align*}
We first check the condition
\begin{align*}
    \frac{\norm{ (\Phi^\T \Phi)^{-1/2} \Phi^\T(\Xi-\Psi_+)}}{\smin(\Phi) \smin(I - L \otimes_s L)} \leq 1/2 \:,
\end{align*}
from Lemma~\ref{lem:basic_inequality}.
A sufficient condition is that $T$ satisfies:
\begin{align*}
    T &\geq \Otilde(1) \frac{\norm{\Kplay}_+^4}{\sigma_\eta^4} \cdot (n+d)^4 \sigma_w^2 (\tau^2\rho^4\norm{\Sigma_0} + \norm{P_\infty} + \sigma_\eta^2 \norm{B}^2) \\
    &\qquad \times\norm{\Keval}_+^4 \cdot \frac{\norm{\Keval}_+^4 (\norm{A}^2+\norm{B}^2)_+^2 \tau^4}{\rho^4(1-\rho^2)^2} \\
    &= \Otilde(1) \frac{\tau^4}{\rho^4(1-\rho^2)^2} \frac{(n+d)^4}{\sigma_\eta^4} \sigma_w^2 (\tau^2\rho^4\norm{\Sigma_0} + \norm{P_\infty} + \sigma_\eta^2 \norm{B}^2) \\
    &\qquad \times\norm{\Kplay}_+^4 \norm{\Keval}_+^8  (\norm{A}^4+\norm{B}^4)_+ \:.
\end{align*}
Once this condition on $T$ is satisfied, then we have that the error $\norm{\qh-q}$ is bounded by:
\begin{align*}
    &\Otilde(1) \frac{\norm{\Kplay}_+^2}{\sigma_\eta^2 \sqrt{T}} \cdot (n+d) \sigma_w\sqrt{ \tau^2\rho^4\norm{\Sigma_0} + \norm{P_\infty} + \sigma_\eta^2 \norm{B}^2} \\
    &\qquad \times\norm{\Keval}^2_+ \norm{Q^{\Keval}}_F \cdot \frac{\norm{\Keval}_+^2 (\norm{A}^2+\norm{B}^2)_+ \tau^2}{\rho^2(1-\rho^2)} \\
    &= \Otilde(1) \frac{\tau^2}{\rho^2(1-\rho^2)} \frac{(n+d)}{\sigma_\eta^2\sqrt{T}}\sigma_w\sqrt{ \tau^2\rho^4\norm{\Sigma_0} + \norm{P_\infty} + \sigma_\eta^2 \norm{B}^2} \\
    &\qquad \times\norm{\Kplay}_+^2 \norm{\Keval}_+^4 (\norm{A}^2+\norm{B}^2)_+\norm{Q^{\Keval}}_F \:.
\end{align*}
Theorem~\ref{thm:lstd_q_estimation} now follows from Lemma~\ref{lem:basic_inequality}.

\section{Analysis for LSPI}
\label{sec:lspi}

In this section we study the non-asymptotic behavior of LSPI.
Our analysis proceeds in two steps. We first understand the behavior of 
exact policy iteration on LQR. Then, we study the effects of introducing 
errors into the policy iteration updates.

\subsection{Exact Policy Iteration}
\label{sec:lspi:pi}

Exactly policy iteration works as follows. We start with a stabilizing controller
$K_0$ for $(A, B)$, and let $V_0$ denote its associated value function. 
We then apply the following recursions for $t=0, 1, 2, ...$:
\begin{align}
	K_{t+1} &= -(S + B^\T V_t B)^{-1} B^\T V_t A \:, \\
	V_{t+1} &= \dlyap(A + B K_{t+1}, S + K_{t+1}^\T R K_{t+1}) \label{eq:exact_policy_iteration} \:.
\end{align}
Note that this recurrence is related to, but different from, that of \emph{value iteration},
which starts from a PSD $V_0$ and recurses:
\begin{align*}
	V_{t+1} = A^\T V_t A - A^\T V_t B (S + B^\T V_t B)^{-1} B^\T V_t A + S \:.
\end{align*}
While the behavior of value iteration for LQR is well understood
(see e.g.\ \citet{lincoln06} or \citet{LinearEstimationBook}), the behavior of
policy iteration is less studied. \citet{fazel18} show that
policy iteration is equivalent to the Gauss-Newton method on the objective
$J(K)$ with a specific step-size, and give a simple analysis which shows
linear convergence to the optimal controller.
In this section, we present an analysis of the
behavior of exact policy iteration that builds on top of the fixed-point theory
from \citet{lee08}. A key component of our analysis is the following invariant metric
$\delta_\infty$ on positive definite matrices:
\begin{align*}
	\delta_\infty(A, B) := \norm{\log(A^{-1/2} B A^{-1/2})} \:.
\end{align*}
Various properties of $\delta_\infty$ are reviewed in Appendix~\ref{sec:app:metric}.

Our analysis proceeds as follows.
First, we note by the matrix inversion lemma:
\begin{align*}
    S + A^\T (B R^{-1} B^\T + V^{-1})^{-1} A = S + A^\T V A - A^\T V B(R + B^\T V B)^{-1} B^\T V A =: F(V) \:.
\end{align*}
Let $V_\star$ be the unique positive definite solution to $V = F(V)$.
For any positive definite $V$ we have by Lemma~\ref{lemma:contraction}:
\begin{align}
    \delta_\infty( F(V), V_\star ) \leq \frac{ \alpha }{\lambda_{\min}(S) + \alpha} \delta_\infty( V, V_\star ) \:, \label{eq:contraction}
\end{align}
with $\alpha = \max\{ \lambda_{\max}(A^\T V A), \lambda_{\max}(A^\T V_\star A) \}$.
Indeed, \eqref{eq:contraction} gives us another method to analyze value iteration, since
it shows that the Riccati operator $F(V)$ is contractive in the $\delta_\infty$ metric.
Our next result combines this contraction property
with the policy iteration analysis of \citet{bertsekas17}.
\begin{prop}[Policy Iteration for LQR]
\label{prop:PI_LQR}
Suppose that $S, R$ are positive definite and there exists a unique positive definite solution to
the discrete algebraic Riccati equation (DARE).
Let $K_0$ be a stabilizing policy for $(A, B)$ and let $V_0 = \dlyap(A + B K_0, S + K_0^\T R K_0)$.
Consider the following sequence of updates for $t= 0, 1, 2, ...$:
\begin{align*}
    K_{t+1} &= -(R + B^\T V_t B)^{-1} B^\T V_t A \:, \\
    V_{t+1} &= \dlyap(A+B K_{t+1}, S + K_{t+1}^\T R K_{t+1}) \:.
\end{align*}
The following statements hold:
\begin{enumerate}[(i)]
    \item $K_t$ stabilizes $(A, B)$ for all $t=0, 1, 2, ...$,
    \item $V_\star \preceq V_{t+1} \preceq V_t$ for all $t = 0, 1, 2, ...$,
    \item $\delta_{\infty}(V_{t+1}, V_\star) \leq \rho  \cdot \delta_{\infty}(V_t, V_\star)$ for all $t=0, 1, 2, ...$,
        with $\rho := \frac{\lambda_{\max}(A^\T V_0 A)}{\lambda_{\min}(S) + \lambda_{\max}(A^\T V_0 A)}$.
        Consequently, $\delta_{\infty}(V_t, V_\star) \leq \rho^t \cdot \delta_{\infty}(V_0, V_\star)$ for $t = 0, 1, 2, ...$.
\end{enumerate}
\end{prop}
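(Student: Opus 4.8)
The plan is to organize everything around the family of linear Bellman operators
\[
\mathcal{T}_K(V) := (A+BK)^\T V (A+BK) + S + K^\T R K \:,
\]
so that for a stabilizing $K$ the value function $V_K = \dlyap(A+BK, S+K^\T R K)$ is the unique fixed point $\mathcal{T}_K(V_K) = V_K$, and the policy-improvement gain $K_{t+1} = -(R+B^\T V_t B)^{-1}B^\T V_t A$ is exactly the minimizer of $K \mapsto \mathcal{T}_K(V_t)$ in the positive-semidefinite order, with minimal value $\mathcal{T}_{K_{t+1}}(V_t) = F(V_t)$. The single fact that drives the whole argument is the one-step improvement inequality $F(V_t) = \mathcal{T}_{K_{t+1}}(V_t) \preceq \mathcal{T}_{K_t}(V_t) = V_t$, which holds because $K_{t+1}$ minimizes $\mathcal{T}_\bullet(V_t)$ while $K_t$ is a feasible competitor whose fixed point is $V_t$. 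I would prove (i) by induction: assuming $K_t$ stabilizes $(A,B)$ so that $V_t \succ 0$, the inequality rearranges to $(A+BK_{t+1})^\T V_t (A+BK_{t+1}) - V_t \preceq -(S + K_{t+1}^\T R K_{t+1}) \preceq -S \prec 0$, a strict discrete Lyapunov certificate for $A+BK_{t+1}$ with positive-definite witness $V_t$, so $\rho(A+BK_{t+1}) < 1$; the base case is the hypothesis on $K_0$.

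Next I would establish (ii) by sharpening the above into the sandwich $V_\star \preceq V_{t+1} \preceq F(V_t) \preceq V_t$. For the upper bound, start the iteration $\mathcal{T}_{K_{t+1}}^n(V_t)$ at $V_t$: by $\mathcal{T}_{K_{t+1}}(V_t) \preceq V_t$ and monotonicity of $\mathcal{T}_{K_{t+1}}$ in $V$, this sequence is decreasing, and since $K_{t+1}$ is stabilizing it converges to the fixed point $V_{t+1}$; hence $V_{t+1} \preceq \mathcal{T}_{K_{t+1}}(V_t) = F(V_t) \preceq V_t$. The lower bound is symmetric: because $V_\star = F(V_\star) = \min_K \mathcal{T}_K(V_\star)$ we get $\mathcal{T}_{K_{t+1}}(V_\star) \succeq V_\star$, and iterating $\mathcal{T}_{K_{t+1}}$ upward from $V_\star$ produces an increasing sequence converging to $V_{t+1}$, so $V_{t+1} \succeq V_\star$.

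Finally I would prove (iii) by combining this sandwich with the geometry of $\delta_\infty$. Since $V_\star \preceq V_{t+1} \preceq F(V_t)$, the order-monotonicity of the Thompson metric recorded in Appendix~\ref{sec:app:metric} gives $\delta_\infty(V_{t+1}, V_\star) \leq \delta_\infty(F(V_t), V_\star)$; concretely, for any $X \succeq V_\star$ one has $\delta_\infty(X, V_\star) = \log\lambda_{\max}(V_\star^{-1/2} X V_\star^{-1/2})$, which is monotone in $X$. Applying the contraction \eqref{eq:contraction} of Lemma~\ref{lemma:contraction} with $V = V_t$ then yields $\delta_\infty(V_{t+1}, V_\star) \leq \frac{\alpha_t}{\lambda_{\min}(S)+\alpha_t}\,\delta_\infty(V_t, V_\star)$ with $\alpha_t = \max\{\lambda_{\max}(A^\T V_t A), \lambda_{\max}(A^\T V_\star A)\}$. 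Using (ii), both $V_t \preceq V_0$ and $V_\star \preceq V_0$, so $\alpha_t \leq \lambda_{\max}(A^\T V_0 A)$; because $x \mapsto x/(\lambda_{\min}(S)+x)$ is increasing, the contraction factor is bounded uniformly by $\rho = \frac{\lambda_{\max}(A^\T V_0 A)}{\lambda_{\min}(S)+\lambda_{\max}(A^\T V_0 A)}$, and unrolling gives $\delta_\infty(V_t, V_\star) \leq \rho^t \delta_\infty(V_0, V_\star)$.

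I expect the main obstacle to be the structural step in (ii)—recognizing that one sweep of policy iteration dominates one Riccati (value-iteration) step, i.e.\ $V_{t+1} \preceq F(V_t)$—and pairing it correctly with the order-monotonicity of $\delta_\infty$, since the contraction \eqref{eq:contraction} is stated for the Riccati map $F$ rather than for the full Lyapunov solve that policy iteration actually performs. This is exactly the point where the average-cost policy-iteration viewpoint of \citet{bertsekas17} and the $\delta_\infty$ contraction theory of \citet{lee08} must be stitched together.
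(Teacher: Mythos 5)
Your proposal is correct and follows essentially the same route as the paper: the heart of both arguments is the sandwich $V_\star \preceq V_{t+1} \preceq F(V_t) \preceq V_t$ obtained from one-step policy improvement, followed by the order-monotonicity of $\delta_\infty$ (Proposition~\ref{prop:delta_ordering}) and the contraction \eqref{eq:contraction} with $\alpha_t \leq \lambda_{\max}(A^\T V_0 A)$. The only difference is presentational --- the paper establishes (i)--(ii) via the Bertsekas-style trajectory rollout, which is exactly the pointwise form of your monotone iteration of $\mathcal{T}_{K_{t+1}}$, while your explicit Lyapunov certificate for stability and explicit lower-bound argument for $V_\star \preceq V_{t+1}$ spell out steps the paper leaves implicit.
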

\begin{proof}
We first prove (i) and (ii) using the argument of Proposition 1.3 from \citet{bertsekas17}.

Let $c(x, u) = x^\T S x + u^\T R u$, $f(x, u) = Ax + Bu$, and $V^K(x_1) = \sum_{t=1}^{\infty} c(x_t, u_t)$ with $x_{t+1} = f(x_t, u_t)$ and $u_t = K x_t$.
Let $V_t = V^{K_t}$.
With these definitions, we have that for all $x$:
\begin{align*}
    K_{t+1} x = \arg\min_{u} c(x, u) + V_t(f(x, u)) \:.
\end{align*}
Therefore,
\begin{align*}
    V_t(x) &= c(x, K_t x) + V_t(f(x, K_t x)) \\
    &\geq c(x, K_{t+1} x) + V_t(f(x, K_{t+1} x)) \\
    &= c(x, K_{t+1} x) + c(f(x, K_{t+1} x), K_{t}f(x, K_{t+1} x)) + V_t(f(f(x, K_{t+1} x), K_{t} f(x, K_{t+1} x))) \\
    &\geq c(x, K_{t+1} x) + c(f(x, K_{t+1} x), K_{t+1}f(x, K_{t+1} x)) + V_t(f(f(x, K_{t+1} x), K_{t+1} f(x, K_{t+1} x))) \\
    &\vdots \\
    &\geq V_{t+1}(x) \:.
\end{align*}
This proves (i) and (ii).

Now, observe that by partial minimization of a strongly convex quadratic:
\begin{align*}
    c(x, K_{t+1} x ) + V_t(f(x, K_{t+1} x)) &= \min_{u} c(x, u) + V_t(f(x, u)) \\
    &= x^\T ( S + A^\T V_t A - A^\T V_t B( R + B^\T V_t B)^{-1} B^\T V_t A ) x \\
    &= x^\T F(V_t) x \:.
\end{align*}
Combined with the above inequalities, this shows that $V_{t+1} \preceq F(V_t) \preceq V_t$.
Therefore, by \eqref{eq:contraction} and Proposition~\ref{prop:delta_ordering},
\begin{align*}
    \delta_{\infty}(V_{t+1}, V_\star) &\leq \delta_{\infty}(F(V_t), V_\star) \\
    &= \delta_{\infty}(F(V_t), F(V_\star)) \\
    &\leq \frac{ \alpha_t }{ \lambda_{\min}(Q) + \alpha_t } \delta_{\infty}(V_t, V_\star) \:,
\end{align*}
where $\alpha_t = \max\{ \lambda_{\max}(A^\T V_t A), \lambda_{\max}(A^\T V_\star A) \} = \lambda_{\max}(A^\T V_t A)$, since $V_\star \preceq V_t$.
But since $V_t \preceq V_0$, we can upper bound $\alpha_t \leq \lambda_{\max}(A^\T V_0 A)$.
This proves (iii).
\end{proof}

\subsection{Approximate Policy Iteration}
\label{sec:lspi:approx_PI}

We now turn to the analysis of approximate policy iteration.
Before analyzing Algorithm~\ref{alg:lspi_offline_v2}, we analyze a slightly more general
algorithm described in Algorithm~\ref{alg:approx_PI_offline}

\begin{center}
    \begin{algorithm}[htb]
    \caption{Approximate Policy Iteration for LQR (offline)}
    \begin{algorithmic}[1]
        \Require{Initial stabilizing controller $K_0$, $N$ number of policy iterations, $T$ length of rollout for estimation, $\sigma_\eta^2$ exploration variance.}
        \For{$t = 0, ..., N-1$}
            \State{Collect a trajectory $\calD_t = \{(x_k^{(t)}, u_k^{(t)}, x_{k+1}^{(t)})\}_{k=1}^{T}$ using input $u_k^{(t)} = K_0 x_k^{(t)} + \eta_k^{(t)}$, with $\eta_k^{(t)} \sim \calN(0, \sigma_\eta^2 I)$.}
            \State{$\Qh_{t} = \mathsf{EstimateQ}(\calD_t, K_t)$.}
            \State{$K_{t+1} = G(\Qh_{t})$.} [c.f.~\eqref{eq:T_def}]
        \EndFor
        \State{{\bf return} $K_N$.}
    \end{algorithmic}
    \label{alg:approx_PI_offline}
    \end{algorithm}
\end{center}

In Algorithm~\ref{alg:approx_PI_offline}, the procedure $\mathsf{EstimateQ}$
takes as input an off-policy trajectory $\calD_t$ and a policy $K_t$,
and returns an estimate $\Qhat_t$ of the true $Q$ function $Q_t$.
We will analyze Algorithm~\ref{alg:approx_PI_offline} first assuming that the procedure
$\mathsf{EstimateQ}$ delivers an estimate with a certain level of accuracy. 
In order to do this,
we define the sequence of variables:
\begin{enumerate}[(i)]
    \item $Q_{t}$ is true state-value function for $K_{t}$.
    \item $V_{t}$ is true value function for $K_{t}$.
    \item $\Kbar_{t+1} = G(Q_t)$.
    \item $\Vbar_{t}$ is true value function for $\Kbar_{t}$.
\end{enumerate}
The following proposition is our main result regarding Algorithm~\ref{alg:approx_PI_offline}.

\begin{prop}
\label{prop:api_lqr}
Consider the sequence of updates defined by Algorithm~\ref{alg:approx_PI_offline}.
Suppose we start with a stabilizing $K_0$ and let $V_0$ denote its value function.
Fix an $\varepsilon > 0$.
Define the following variables:
\begin{align*}
    \mu &:= \min\{\lambda_{\min}(S), \lambda_{\min}(R)\} \:, \\
    Q_{\max} &:= \max\{\norm{S},\norm{R}\} + 2(\norm{A}^2+\norm{B}^2)\norm{V_0} \:, \\
    \gamma &:= \frac{2\norm{A}^2\norm{V_0}}{\mu + 2\norm{A}^2\norm{V_0}} \:, \\
    N_0 &:= \ceil{ \frac{1}{1-\gamma} \log(2\delta_{\infty}(V_0, V_\star)/\varepsilon) } \:, \\
    \tau &:= \sqrt{\frac{2\norm{V_0}}{\mu}} \:, \\
    \rho &:= \sqrt{1-1/\tau^2} \:, \\
    \rhobar &:= \Avg(\rho, 1) \:.
\end{align*}
Let $N_1 \geq N_0$.
Suppose the estimates $\Qhat_t$ output by $\mathsf{EstimateQ}$ satisfy, for all $0 \leq t \leq N_1-1$,
$\Qhat_t \succeq \mu I$ and furthermore,
\begin{align*}
    \norm{\Qhat_t - Q_t} &\leq \min\left\{ \frac{\norm{V_0}}{N_1}, \varepsilon \mu(1-\gamma) \right\}  \left( \frac{\mu}{28} \frac{(1-\rhobar^2)^2}{\tau^5} \frac{1}{\norm{B}_+\max\{\norm{S},\norm{R}\}} \frac{\mu^3}{Q_{\max}^3} \right) \:.
\end{align*}
Then we have for any $N$ satisfying $N_0 \leq N \leq N_1$ the bound $\delta_\infty(V_N, V_\star) \leq \varepsilon$.
We also have that for all $0 \leq t \leq N_1$, $A+BK_t$ is $(\tau, \rhobar)$-stable
and $\norm{K_t} \leq 2Q_{\max}/\mu$.
\end{prop}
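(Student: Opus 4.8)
The plan is to run an induction on $t$ that simultaneously maintains four invariants: a bound on $\delta_{\infty}(V_t, V_\star)$, the norm bound $\norm{V_t} \leq 2\norm{V_0}$, the $(\tau,\rhobar)$-stability of $A + BK_t$, and the control $\norm{K_t} \leq 2Q_{\max}/\mu$. The engine is to compare each noisy update against the corresponding \emph{exact} policy-iteration step. Writing $\Kbar_{t+1} = G(Q_t)$ and $\Vbar_{t+1}$ for its value function, Proposition~\ref{prop:PI_LQR} applied one step from $K_t$ gives both $\Vbar_{t+1} \preceq V_t$ and, via \eqref{eq:contraction}, the contraction $\delta_{\infty}(\Vbar_{t+1}, V_\star) \leq \gamma\,\delta_{\infty}(V_t, V_\star)$; here the factor $\gamma$ arises by bounding $\alpha_t = \lambda_{\max}(A^\T V_t A) \leq \norm{A}^2\norm{V_t} \leq 2\norm{A}^2\norm{V_0}$ using the inductive norm bound. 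A triangle inequality in the $\delta_{\infty}$ metric then yields the master recursion
\begin{align*}
    \delta_{\infty}(V_{t+1}, V_\star) \leq \gamma\,\delta_{\infty}(V_t, V_\star) + \delta_{\infty}(V_{t+1}, \Vbar_{t+1}) \:,
\end{align*}
so everything reduces to bounding the one-step perturbation $\delta_{\infty}(V_{t+1}, \Vbar_{t+1})$ in terms of $\norm{\Qhat_t - Q_t}$.

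I would control that perturbation through a three-link chain. First, since $\Qhat_t \succeq \mu I$ (hypothesis) and $Q_t \succeq \mu I$, the map $G(\cdot) = -Q_{22}^{-1}Q_{12}^\T$ is Lipschitz on this set, so $\norm{K_{t+1} - \Kbar_{t+1}} = \norm{G(\Qhat_t) - G(Q_t)} \lesssim \frac{Q_{\max}}{\mu^2}\norm{\Qhat_t - Q_t}$, where $\norm{Q_t} \leq Q_{\max}$ follows from the definition of $Q_t$ and $\norm{V_t} \leq 2\norm{V_0}$. Second, because $\Kbar_{t+1}$ is an exact PI step, $\Vbar_{t+1} \preceq V_t \preceq 2\norm{V_0}\cdot I$, so the Lyapunov relation $\Vbar_{t+1} = (A+B\Kbar_{t+1})^\T \Vbar_{t+1}(A+B\Kbar_{t+1}) + (S + \Kbar_{t+1}^\T R \Kbar_{t+1})$ with right-hand term $\succeq \mu I$ certifies that $A + B\Kbar_{t+1}$ is $(\tau,\rho)$-stable for exactly the stated $\tau = \sqrt{2\norm{V_0}/\mu}$ and $\rho = \sqrt{1-1/\tau^2}$; feeding the perturbation $B(K_{t+1}-\Kbar_{t+1})$ into a standard stability-perturbation estimate of the form $\norm{L^k} \leq \tau(\rho + \tau\norm{\Delta})^k$ then shows $A + BK_{t+1}$ is $(\tau,\rhobar)$-stable, provided $\tau\norm{B}\norm{K_{t+1}-\Kbar_{t+1}} \leq \rhobar - \rho = (1-\rho)/2$. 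This is precisely why the decay rate must be relaxed from $\rho$ to $\rhobar = \Avg(\rho,1)$. Third, with both closed loops $(\tau,\rhobar)$-stable, a Lyapunov perturbation estimate built from Proposition~\ref{prop:dlyap_norm_bound} bounds $\norm{V_{t+1} - \Vbar_{t+1}}$ by $\norm{K_{t+1}-\Kbar_{t+1}}$ times a factor involving $\tau$, $1/(1-\rhobar^2)$, $\norm{B}$, and $\max\{\norm{S},\norm{R}\}\,Q_{\max}/\mu$, and since $V_{t+1}, \Vbar_{t+1} \succeq \mu I$ the Lipschitz behavior of the matrix logarithm near the identity converts this into $\delta_{\infty}(V_{t+1}, \Vbar_{t+1}) \lesssim \frac{1}{\mu}\norm{V_{t+1}-\Vbar_{t+1}}$. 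Composing the three links produces a composite Lipschitz constant of the same form as the one in the hypothesis.

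It remains to close the induction and select $N_0$. The two branches inside the assumed minimum play distinct roles. Keeping $\norm{\Qhat_t - Q_t}$ below the $\varepsilon\mu(1-\gamma)$ branch forces $\delta_{\infty}(V_{t+1},\Vbar_{t+1}) \leq \varepsilon(1-\gamma)/2$; unrolling the master recursion then gives
\begin{align*}
    \delta_{\infty}(V_N, V_\star) \leq \gamma^N \delta_{\infty}(V_0, V_\star) + \tfrac{1}{1-\gamma}\max_{0 \leq t \leq N-1}\delta_{\infty}(V_{t+1}, \Vbar_{t+1}) \leq \gamma^N \delta_{\infty}(V_0, V_\star) + \varepsilon/2 \:,
\end{align*}
which is at most $\varepsilon$ once $N \geq N_0 = \ceil{\frac{1}{1-\gamma}\log(2\delta_{\infty}(V_0,V_\star)/\varepsilon)}$, using $\log(1/\gamma) \geq 1-\gamma$. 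Keeping $\norm{\Qhat_t - Q_t}$ below the $\norm{V_0}/N_1$ branch instead controls the \emph{growth} of the value functions: since $\norm{\Vbar_{t+1}} \leq \norm{V_t}$, each step inflates $\norm{V_{t+1}}$ by at most $\norm{V_0}/N_1$, so over at most $N_1$ steps the accumulated increase stays below $\norm{V_0}$ and the invariant $\norm{V_t} \leq 2\norm{V_0}$ is preserved; this invariant is exactly what feeds back into $\gamma$, $Q_{\max}$, $\tau$, and $\rho$ to keep the induction self-consistent. Finally, $\norm{K_t} \leq 2Q_{\max}/\mu$ follows from $K_t = G(\Qhat_{t-1})$ with $\Qhat_{t-1} \succeq \mu I$ and $\norm{\Qhat_{t-1}} \leq Q_{\max} + \norm{\Qhat_{t-1}-Q_{t-1}} \leq 2Q_{\max}$.

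I expect the main obstacle to be the middle link: simultaneously certifying $(\tau,\rhobar)$-stability of $A+BK_{t+1}$ and the norm bound $\norm{V_{t+1}} \leq 2\norm{V_0}$ under perturbation. This is genuinely circular, since both the Lyapunov perturbation bound on $\norm{V_{t+1}-\Vbar_{t+1}}$ and its conversion to $\delta_{\infty}$ require a stability certificate for $A+BK_{t+1}$, yet the natural certificate for $A+BK_{t+1}$ comes from a norm bound on the still-uncontrolled $V_{t+1}$. Threading the induction so that stability is obtained \emph{first} by perturbing off the exact step $\Kbar_{t+1}$ (which yields $\rhobar$ rather than $\rho$), and only afterwards used to bound $\norm{V_{t+1}}$ and $\delta_{\infty}(V_{t+1},\Vbar_{t+1})$, is the delicate part, and it is this ordering that forces both the relaxed rate $\rhobar$ and the per-step budget $\norm{V_0}/N_1$.
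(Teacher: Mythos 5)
Your proposal is correct and follows essentially the same route as the paper's proof: comparing each noisy update to the exact policy-iteration step $\Kbar_{t+1}=G(Q_t)$, using the $\delta_\infty$ triangle inequality to get the recursion $\delta_\infty(V_{t+1},V_\star)\leq \gamma\,\delta_\infty(V_t,V_\star)+\norm{V_{t+1}-\Vbar_{t+1}}/\mu$, controlling the perturbation through the Lipschitzness of $G$, the robust-stability lemma (which is exactly where $\rhobar$ enters), and the Lyapunov perturbation bound, and splitting the two branches of the minimum between the contraction budget and the $\norm{V_t}\leq 2\norm{V_0}$ growth invariant. Your closing observation about the ordering of the stability certificate (perturb off $\Kbar_{t+1}$ first, then bound $V_{t+1}$) is precisely how the paper resolves the apparent circularity.
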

\begin{proof}
We first start by observing that if $V, V_0$ are value functions
satisfying $V \preceq V_0$, then their state-value functions also satisfy
$Q \preceq Q_0$.
This is because
\begin{align*}
    Q &= \bmattwo{S}{0}{0}{R} + \cvectwo{A^\T}{B^\T} V \rvectwo{A}{B} \\
      &\preceq \bmattwo{S}{0}{0}{R} + \cvectwo{A^\T}{B^\T} V_0 \rvectwo{A}{B} = Q_0 \:.
\end{align*}
From this we also see that any state-value function
satisfies $Q \succeq \bmattwo{S}{0}{0}{R}$.

The proof proceeds as follows.
We observe that since $\Vbar_{t+1} \preceq V_t$ (Proposition~\ref{prop:PI_LQR}-(ii)):
\begin{align*}
    V_t &= V_t - \Vbar_{t} + \Vbar_{t} - V_{t-1} + V_{t-1} \preceq V_t - \Vbar_{t} + V_{t-1} \:.
\end{align*}
Therefore, by triangle inequality we have $\norm{V_t} \leq \norm{V_t - \Vbar_{t}} + \norm{V_{t-1}}$.
Supposing for now that we can ensure for all $1 \leq t \leq N_1$:
\begin{align}
    \norm{V_t - \Vbar_{t}} \leq \frac{\norm{V_0}}{N} \:, \label{eq:constraint_one}
\end{align}
unrolling the recursion for $\norm{V_t}$ for $N_1$ steps ensures that $\norm{V_t} \leq 2\norm{V_0}$ for all $0 \leq t \leq N_1$.
Furthermore,
\begin{align*}
    \norm{Q_t} &\leq \max\{ \norm{S}, \norm{R} \} + \norm{\rvectwo{A}{B}}^2 \norm{V_t} \\
    &\leq \max\{ \norm{S}, \norm{R} \} + 2(\norm{A}^2+\norm{B}^2) \norm{V_0} \\
    &= Q_{\max} \:.
\end{align*}
for all $0 \leq t \leq N_1$.

Now, by triangle inequality and Proposition~\ref{prop:PI_LQR}-(iii), for all $0 \leq t \leq N_1-1$,
\begin{align}
    \delta_\infty(V_{t+1}, V_\star) &\leq \delta_\infty(V_{t+1}, \Vbar_{t+1}) + \delta_\infty(\Vbar_{t+1}, V_\star) \nonumber \\
    &\leq \delta_\infty(V_{t+1}, \Vbar_{t+1}) + \gamma \cdot \delta_\infty(V_t, V_\star) \nonumber \\
    &\leq \frac{\norm{V_{t+1} - \Vbar_{t+1}}}{\mu} + \gamma \cdot \delta_\infty(V_t, V_\star) \label{eq:one_step_delta_inequality} \:,
\end{align}
where $\gamma = \frac{ 2\norm{A}^2 \norm{V_0}}{\mu + 2\norm{A}^2\norm{V_0}}$,
and the last inequality uses Proposition~\ref{prop:delta_ub} combined with the fact that $V_{t+1} \succeq \mu I$ and
$\Vbar_{t+1} \succeq \mu I$.

We now focus on bounding $\norm{V_{t+1} - \Vbar_{t+1}}$. To do this,
we first bound $\norm{K_{t+1} - \Kbar_{t+1}}$, and then use the Lyapunov perturbation result
from Section~\ref{sec:app:perturbation}.
First, observe the simple bounds:
\begin{align*}
    \norm{\Kbar_{t+1}} &= \norm{G(Q_t)} \leq \frac{\norm{Q_t}}{\mu} \leq \frac{Q_{\max}}{\mu} \:, \\
    \norm{K_{t+1}} &= \norm{G(\Qh_t)} \leq \frac{\norm{\Qh_t}}{\mu} \leq \frac{\Delta + Q_{\max}}{\mu} \leq \frac{2Q_{\max}}{\mu} \:.
\end{align*}
where the second bound uses the assumption that the estimates $\Qhat_t$ satisfy $\Qhat_t \succeq \mu I$
and $\norm{\Qhat_t - Q_t} \leq \Delta$ with
\begin{align}
    \Delta \leq Q_{\max} \:. \label{eq:constraint_basic}
\end{align}
Now, by Proposition~\ref{prop:control_bound} we have:
\begin{align*}
    \norm{K_{t+1} - \Kbar_{t+1}} &= \norm{G(\Qhat_t) - G(Q_t)} \\
    &\leq \frac{(1+\norm{\Kbar_{t+1}}) \norm{\Qhat_t-Q_t}}{\mu} \\
    &\leq \frac{(1+Q_{\max}/\mu) \Delta}{\mu} \\
    &\leq \frac{2 Q_{\max}}{\mu^2} \Delta \:.
\end{align*}
Above, the last inequality holds since $Q_{\max} \geq \mu$ by definition.

By Proposition~\ref{prop:ordered_stability}, because $\Vbar_{t+1} \preceq V_t$, we know that
$\Kbar_{t+1}$ satisfies for all $k \geq 0$:
\begin{align*}
    \norm{(A+B \Kbar_{t+1})^k} &\leq \sqrt{\frac{\norm{V_t}}{\lambda_{\min}(S)}} \cdot \sqrt{1 - \lambda_{\min}(V_t^{-1} S)}^k \\
    &\leq \sqrt{\frac{2\norm{V_0}}{\mu}} \sqrt{1- \frac{\mu}{2\norm{V_0}}}^k = \tau \cdot \rho^k \:.
\end{align*}
Let us now assume that $\Delta$ satisfies:
\begin{align}
    \frac{2Q_{\max}}{\mu^2} \cdot \Delta \leq \frac{1-\rho}{2 \tau \norm{B}} \:. \label{eq:constraint_two}
\end{align}
Then by Lemma~\ref{lem:robust_control},
we know that $\norm{(A+BK_{t+1})^k} \leq \tau \cdot \rhobar^k$.
Hence, we have that $A+BK_{t+1}$ is $(\tau, \rhobar)$-stable.

Next, by the Lyapunov perturbation result of Proposition~\ref{prop:dlyap_perturbation},
\begin{align*}
    &\norm{V_{t+1} - \Vbar_{t+1}} \\
    &= \norm{\dlyap(A+BK_{t+1}, S + K_{t+1}^\T R K_{t+1}) - \dlyap(A+B\Kbar_{t+1}, S + \Kbar_{t+1}^\T R \Kbar_{t+1})} \\
    &\leq \frac{\tau^2}{1-\rhobar^2} \norm{ K_{t+1}^\T R K_{t+1} - \Kbar_{t+1}^\T R \Kbar_{t+1} } \\
    &\qquad+ \frac{\tau^4}{(1-\rhobar^2)^2} \norm{ B (K_{t+1} - \Kbar_{t+1}) } ( \norm{A+BK_{t+1}} + \norm{A+B\Kbar_{t+1}}) \norm{S + \Kbar_{t+1}^\T R \Kbar_{t+1}} \:.
\end{align*}
We bound:
\begin{align*}
    \norm{  K_{t+1}^\T R K_{t+1} - \Kbar_{t+1}^\T R \Kbar_{t+1}  } &\leq \norm{R} \norm{K_{t+1} - \Kbar_{t+1}}(\norm{K_{t+1}} + \norm{\Kbar_{t+1}}) \\
    &\leq \frac{6\norm{R} Q_{\max}^2}{\mu^3} \Delta \:, \\
    \norm{B (K_{t+1}-\Kbar_{t+1})} &\leq \frac{2\norm{B} Q_{\max}}{\mu^2} \Delta \:, \\
    \max\{\norm{A+B K_{t+1}},\norm{A+B \Kbar_{t+1}}\} &\leq \tau \:, \\
    \norm{S + \Kbar_{t+1}^\T R \Kbar_{t+1}} &\leq \norm{S} + \frac{\norm{R}Q_{\max}^2}{\mu^2} \:.
\end{align*}
Therefore,
\begin{align*}
    \norm{V_{t+1} - \Vbar_{t+1}} &\leq \frac{\tau^2}{1-\rhobar^2}\frac{6\norm{R} Q_{\max}^2}{\mu^3} \Delta + 8\frac{\tau^5}{(1-\rhobar^2)^2} \norm{B}\max\{\norm{S},\norm{R}\} \frac{Q_{\max}^3}{\mu^4} \Delta \\
    &= \frac{1}{\mu}\left( \frac{\tau^2}{1-\rhobar^2}\frac{6\norm{R} Q_{\max}^2}{\mu^2} + 8\frac{\tau^5}{(1-\rhobar^2)^2} \norm{B}\max\{\norm{S},\norm{R}\} \frac{Q_{\max}^3}{\mu^3} \right) \Delta \\
    &\leq \frac{14}{\mu} \frac{\tau^5}{(1-\rhobar^2)^2} \norm{B}_+ \max\{\norm{S},\norm{R}\} \frac{Q_{\max}^3}{\mu^3} \Delta \:.
\end{align*}
Now suppose that $\Delta$ satisfies:
\begin{align}
    \Delta &\leq  \frac{1}{2}\varepsilon\mu(1-\gamma) \left( \frac{\mu}{14} \frac{(1-\rhobar^2)^2}{\tau^5} \frac{1}{ \norm{B}_+ \max\{\norm{S},\norm{R}\}} \frac{\mu^3}{Q_{\max}^3} \right) \nonumber \\
    &= \frac{\varepsilon}{28} \mu^2(1-\gamma) \frac{(1-\rhobar^2)^2}{\tau^5}\frac{1}{\norm{B}_+ \max\{\norm{S},\norm{R}\}} \frac{\mu^3}{Q_{\max}^3} \:, \label{eq:constraint_three}
\end{align}
we have for all $t \leq N_1-1$ from \eqref{eq:one_step_delta_inequality}:
\begin{align*}
    \delta_{\infty}(V_{t+1}, V_\star) &\leq (1-\gamma)\varepsilon/2 + \gamma \cdot \delta_{\infty}(V_t, V_\star) \:.
\end{align*}
Unrolling this recursion, we have that for any $N \leq N_1$:
\begin{align*}
    \delta_{\infty}(V_N, V_\star) &\leq \gamma^N \cdot \delta_{\infty}(V_0, V_\star) + \varepsilon/2 \:.
\end{align*}
Now observe that for any  $N \geq N_0 := \ceil{ \frac{1}{1-\gamma} \log(2\delta_{\infty}(V_0, V_\star)/\varepsilon) }$,
we obtain:
\begin{align*}
    \delta_{\infty}(V_N, V_\star) \leq \varepsilon \:.
\end{align*}
The claim now follows by combining our four requirements on $\Delta$
given in \eqref{eq:constraint_basic}, \eqref{eq:constraint_one}, \eqref{eq:constraint_two}, and \eqref{eq:constraint_three}.
\end{proof}

We now proceed to make several simplifications to Proposition~\ref{prop:api_lqr}
in order to make the result more presentable. These simplifications come with the
tradeoff of introducing extra conservatism into the bounds.

Our first simplification of Proposition~\ref{prop:api_lqr} is the following
corollary.
\begin{cor}
\label{cor:api_lqr}
Consider the sequence of updates defined by Algorithm~\ref{alg:approx_PI_offline}.
Suppose we start with a stabilizing $K_0$ and let $V_0$ denote its value function.
Define the following variables:
\begin{align*}
    \mu &:= \min\{\lambda_{\min}(S), \lambda_{\min}(R)\} \:, \\
    L &:= \max\{\norm{S},\norm{R}\} + 2(\norm{A}^2+\norm{B}^2 + 1)\norm{V_0}_+ \:, \\
    N_0 &:= \ceil{ (1+L/\mu) \log(2\delta_{\infty}(V_0, V_\star)/\varepsilon) } \:.
\end{align*}
Fix an $N_1 \geq N_0$ and suppose that
\begin{align}
    \varepsilon \leq \frac{1}{\mu}\left(1 + \frac{L}{\mu}\right) \frac{\norm{V_0}}{N_1} \:. \label{eq:eps_requirement}
\end{align}
Suppose the estimates $\Qhat_t$ output by $\mathsf{EstimateQ}$ satisfy, for all $0 \leq t \leq N_1 - 1$,
$\Qhat_t \succeq \mu I$ and furthermore,
\begin{align*}
    \norm{\Qhat_t - Q_t} \leq \frac{\varepsilon}{448} \frac{\mu}{\mu+L} \left(\frac{\mu}{L}\right)^{19/2} \:.
\end{align*}
Then we have for any $N_0 \leq N \leq N_1$ that $\delta_\infty(V_N, V_\star) \leq \varepsilon$.
We also have that for any $0 \leq t \leq N_1$,
that $A+BK_t$ is $(\sqrt{L/\mu}, \Avg(\sqrt{1-\mu/L},1))$-stable and $\norm{K_t} \leq 2L/\mu$.
\end{cor}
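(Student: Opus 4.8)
The plan is to treat Corollary~\ref{cor:api_lqr} as a deliberately conservative repackaging of Proposition~\ref{prop:api_lqr}: every quantity appearing in the corollary is chosen either to dominate or to be dominated by the corresponding quantity in the proposition, so that verifying a short list of scalar inequalities lets us invoke the proposition verbatim. First I would record the basic comparisons between $L$ and the proposition's constants. Since $\norm{V_0}_+ \ge \norm{V_0}$ and $L$ carries an extra $2\norm{V_0}_+$ relative to $Q_{\max}$, we get $L \ge Q_{\max}$; similarly $L \ge 2\norm{V_0}$, $\max\{\norm{S},\norm{R}\} \le L$, $2\norm{A}^2\norm{V_0} \le L$, and $2\norm{B}^2 \le L$. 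I would also note $L \ge 2$ (the summand $2(\norm{A}^2+\norm{B}^2+1)\norm{V_0}_+ \ge 2$), which gives the clean bound $\norm{B}_+ \le \sqrt{L}$.

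Next I would transfer the three conclusions. For the iteration count, $\frac{1}{1-\gamma} = 1 + \frac{2\norm{A}^2\norm{V_0}}{\mu} \le 1 + L/\mu$, so the corollary's $N_0$ is at least the proposition's $N_0$; hence any $N$ with $N_0^{\mathrm{cor}} \le N \le N_1$ also satisfies $N_0^{\mathrm{prop}} \le N \le N_1$. For the stability claim, from $L \ge 2\norm{V_0}$ we get $\tau = \sqrt{2\norm{V_0}/\mu} \le \sqrt{L/\mu}$ and $\rho = \sqrt{1-\mu/(2\norm{V_0})} \le \sqrt{1-\mu/L}$, whence $\rhobar = \Avg(\rho,1) \le \Avg(\sqrt{1-\mu/L},1)$; since enlarging both stability parameters only weakens Definition~\ref{def:tau_rho_stable}, $(\tau,\rhobar)$-stability implies $(\sqrt{L/\mu},\Avg(\sqrt{1-\mu/L},1))$-stability. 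Finally $\norm{K_t} \le 2Q_{\max}/\mu \le 2L/\mu$.

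The substantive step is showing the corollary's accuracy requirement on $\norm{\Qhat_t - Q_t}$ is no larger than the proposition's. I would lower-bound the proposition's right-hand side factor by factor. The prefactor obeys $\min\{\norm{V_0}/N_1,\, \varepsilon\mu(1-\gamma)\} \ge \varepsilon\mu^2/(\mu+L)$, where the first term is controlled precisely by hypothesis~\eqref{eq:eps_requirement} and the second by $1-\gamma \ge \mu/(\mu+L)$. Using $1-\rhobar^2 \ge \tfrac{3}{8\tau^2}$ together with $\tau^2 \le L/\mu$ gives $(1-\rhobar^2)^2/\tau^5 \ge \tfrac{9}{64}(\mu/L)^{9/2}$; combining with $\norm{B}_+\max\{\norm{S},\norm{R}\} \le L^{3/2}$ and $\mu^3/Q_{\max}^3 \ge (\mu/L)^3$, the product collapses to a lower bound of the form $\tfrac{9}{1792}\,\varepsilon\,\mu^{1/2}\,\tfrac{\mu}{\mu+L}\,(\mu/L)^{9}$. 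Matching this against the corollary's bound $\tfrac{\varepsilon}{448}\tfrac{\mu}{\mu+L}(\mu/L)^{19/2}$ reduces, after cancelling the common $\varepsilon\mu/(\mu+L)$, to the single inequality $\tfrac{9}{1792}\,\mu^{1/2}(L/\mu)^{1/2} \ge \tfrac{1}{448}$; since $\mu^{1/2}(L/\mu)^{1/2} = \sqrt{L} \ge \sqrt{2}$, this holds.

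I expect the only real obstacle to be the exponent bookkeeping in this last paragraph: one must track how the powers of $\mu$ coming from the prefactor and the $Q_{\max}^{-3}$ term combine with the powers of $L/\mu$ from the stability terms, and verify that the half-integer gap between $(\mu/L)^9$ and the advertised $(\mu/L)^{19/2}$ is exactly absorbed by the factor $\sqrt{L} = \mu^{1/2}(L/\mu)^{1/2}$. Everything else is monotonicity. With the accuracy requirement and the parameter comparisons in hand, the conclusion follows immediately by applying Proposition~\ref{prop:api_lqr}.
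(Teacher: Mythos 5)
Your proposal is correct and follows essentially the same route as the paper's proof: lower-bound each factor in the accuracy requirement of Proposition~\ref{prop:api_lqr} via $Q_{\max}\leq L$, $\tau\leq\sqrt{L/\mu}$, $1-\gamma\geq\mu/(\mu+L)$, and the hypothesis \eqref{eq:eps_requirement}, then invoke the proposition. The only differences are cosmetic — you use the slightly sharper bounds $1-\rhobar^2\geq\tfrac{3}{8\tau^2}$ and $\norm{B}_+\leq\sqrt{L}$ where the paper uses $1-\rhobar^2\geq(1-\rho^2)/4$ and $\norm{B}_+\leq L$, so your chain lands on $(\mu/L)^{9}\mu^{1/2}$ and needs the final check $\sqrt{L}\geq 4/9$ (guaranteed by $L\geq 2$) rather than matching $(\mu/L)^{19/2}$ exactly — and both versions are valid.
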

\begin{proof}
First, observe that the map $x \mapsto \frac{x}{\mu+x}$ is increasing, and therefore
$\gamma \leq \frac{L}{\mu+L}$
which implies that $1-\gamma \geq \frac{\mu}{\mu+L}$.
Therefore if $\varepsilon \leq \frac{1}{\mu}\left(1+\frac{L}{\mu}\right) \frac{\norm{V_0}}{N_1}$ holds,
then we can bound:
\begin{align*}
    \min\left\{ \frac{\norm{V_0}}{N_1}, \varepsilon \mu(1-\gamma) \right\} \geq \varepsilon \mu \left(\frac{\mu}{\mu+L}\right) \:.
\end{align*}
Next, observe that
\begin{align*}
    1- \rhobar^2 &= (1+\rhobar)(1-\rhobar) 
    = (1 + 1/2 + \rho/2)(1/2 - \rho/2) 
    \geq (1+\rho)(1-\rho)/4 
    = (1-\rho^2)/4 \:.
\end{align*}
Therefore,
\begin{align*}
    (1-\rhobar^2)^2 \geq (1 - (1-\mu/L))^2/16 =(1/16) (\mu/L)^2 \:.
\end{align*}
We also have that $\tau \leq \sqrt{\frac{L}{\mu}}$.
This means we can bound:
\begin{align*}
    \frac{\mu}{28} \frac{(1-\rhobar^2)^2}{\tau^5} \frac{1}{\norm{B}_+ \max\{\norm{S},\norm{R}\}} \frac{\mu^3}{Q_{\max}^3}  \geq
    \frac{\mu}{28\cdot 16} (\mu/L)^{5/2 + 2} \frac{\mu^3}{L^5} = \frac{1}{448 L} \left(\frac{\mu}{L}\right)^{17/2} \:.
\end{align*}
Therefore,
\begin{align*}
    \min\left\{ \frac{\norm{V_0}}{N_1}, \varepsilon \mu(1-\gamma) \right\} \frac{\mu}{28} \frac{(1-\rhobar^2)^2}{\tau^5} \frac{1}{\norm{B}_+ \max\{\norm{S},\norm{R}\}} \frac{\mu^3}{Q_{\max}^3}
    \geq \frac{\varepsilon}{448} \left(\frac{\mu}{\mu+L}\right) \left(\frac{\mu}{L}\right)^{19/2} \:.
\end{align*}
The claim now follows from Proposition~\ref{prop:api_lqr}.
\end{proof} 

Corollary~\ref{cor:api_lqr} gives a guarantee in terms of $\delta_\infty(V_N, V_\star) \leq \varepsilon$.
By Proposition~\ref{prop:opnorm_to_delta}, this implies
a bound on the error of the value functions $\norm{V_N - V_\star} \leq \calO(\varepsilon)$
for $\varepsilon \leq 1$. In the next corollary, we show we can also control the error
$\norm{K_N - K_\star} \leq \calO(\varepsilon)$.

\begin{cor}
\label{cor:api_lqr_K}
Consider the sequence of updates defined by Algorithm~\ref{alg:approx_PI_offline}.
Suppose we start with a stabilizing $K_0$ and let $V_0$ denote its value function.
Define the following variables:
\begin{align*}
    \mu &:= \min\{\lambda_{\min}(S), \lambda_{\min}(R)\} \:, \\
    L &:= \max\{\norm{S},\norm{R}\} + 2(\norm{A}^2+\norm{B}^2 + 1)\norm{V_0}_+ \:, \\
    N_0 &:= \bigceil{ (1+L/\mu) \log\left(\frac{2\log(\norm{V_0}/\lambda_{\min}(V_\star))}{\varepsilon}\right) } \:.
\end{align*}
Suppose that $\varepsilon > 0$ satisfies:
\begin{align*}
    \varepsilon \leq \min\left\{ 1, \frac{2\log(\norm{V_0}/\lambda_{\min}(V_\star))}{e}, \frac{\norm{V_\star}^2}{8\mu^2\log(\norm{V_0}/\lambda_{\min}(V_\star))} \right\} \:.
\end{align*}
Suppose we run Algorithm~\ref{alg:approx_PI_offline} for $N := N_0+1$ iterations.
Suppose the estimates $\Qhat_t$ output by $\mathsf{EstimateQ}$ satisfy, for all $0 \leq t \leq N_0$,
$\Qhat_t \succeq \mu I$ and furthermore,
\begin{align}
    \norm{\Qhat_t - Q_t} \leq \frac{\varepsilon}{448} \frac{\mu}{\mu+L} \left(\frac{\mu}{L}\right)^{19/2} \:. \label{eq:Qhat_condition}
\end{align}
We have that:
\begin{align*}
    \norm{K_{N} - K_\star} \leq 5 \left(\frac{L}{\mu}\right)^2 \varepsilon
\end{align*}
and that $A+BK_t$ is $(\sqrt{L/\mu}, \Avg(\sqrt{1-\mu/L}, 1))$-stable and $\norm{K_t} \leq 2L/\mu$
for all $0 \leq t \leq N$.
\end{cor}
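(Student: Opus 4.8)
The plan is to reduce the statement to Corollary~\ref{cor:api_lqr}, which already controls the value-function error $\delta_\infty(V_{N'}, V_\star)$ in the invariant metric, and then convert that into an operator-norm bound on the controller. The key structural point is why we run exactly $N = N_0+1$ iterations: the returned controller is $K_N = G(\Qhat_{N_0})$, the approximate policy-improvement step computed from $\Qhat_{N_0}$, which estimates the $Q$-function $Q_{N_0}$ of $K_{N_0}$ whose value function is $V_{N_0}$. Hence to bound $\norm{K_N - K_\star}$ I want $V_{N_0}$ (not $V_N$) close to $V_\star$, and running one extra iteration is precisely what makes $\delta_\infty(V_{N_0}, V_\star) \leq \varepsilon$ available.

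First I would observe that $\delta_\infty(V_0, V_\star) \leq \log(\norm{V_0}/\lambda_{\min}(V_\star)) =: D$: since $V_\star \preceq V_0$ by Proposition~\ref{prop:PI_LQR}(ii), the eigenvalues of $V_\star^{-1/2} V_0 V_\star^{-1/2}$ lie in $[1, \norm{V_0}/\lambda_{\min}(V_\star)]$, so the operator norm of its logarithm is at most $D$. Thus the $N_0$ defined here dominates the one in Corollary~\ref{cor:api_lqr}, and taking $N_1 := N$ there, its hypotheses hold: the requirement \eqref{eq:eps_requirement} follows from the stated bounds on $\varepsilon$ (using $\varepsilon\log(2D/\varepsilon) \leq 2D/e$ together with $D \leq \norm{V_0}/(e\mu)$, which holds since $\lambda_{\min}(V_\star) \geq \mu$ and $\log x \leq x/e$), and the accuracy condition is verbatim \eqref{eq:Qhat_condition}. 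Invoking Corollary~\ref{cor:api_lqr} then yields $\delta_\infty(V_{N_0}, V_\star) \leq \varepsilon$ along with the claimed $(\sqrt{L/\mu}, \Avg(\sqrt{1-\mu/L}, 1))$-stability of $A + B K_t$ and $\norm{K_t} \leq 2L/\mu$ for all $0 \leq t \leq N$, settling the stability part of the conclusion at once.

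Next I would pass from the metric to controllers. By Proposition~\ref{prop:opnorm_to_delta} (or directly: $V_\star \preceq V_{N_0}$ forces the eigenvalues of $V_\star^{-1/2} V_{N_0} V_\star^{-1/2}$ into $[1, e^\varepsilon]$, so $\norm{V_{N_0} - V_\star} \leq (e^\varepsilon-1)\norm{V_\star} \leq 2\varepsilon\norm{V_\star}$ for $\varepsilon \leq 1$). Writing both controllers as images of $G$, namely $K_N = G(\Qhat_{N_0})$ and $K_\star = G(Q_\star)$ with $Q_\star = \bmattwo{S}{0}{0}{R} + \cvectwo{A^\T}{B^\T} V_\star \rvectwo{A}{B} \succeq \mu I$, Proposition~\ref{prop:control_bound} gives $\norm{K_N - K_\star} \leq \tfrac{1+\norm{K_\star}}{\mu}\norm{\Qhat_{N_0} - Q_\star}$. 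I would then split $\norm{\Qhat_{N_0} - Q_\star} \leq \norm{\Qhat_{N_0} - Q_{N_0}} + \norm{Q_{N_0} - Q_\star}$, bounding the first by \eqref{eq:Qhat_condition} and the second via $Q_{N_0} - Q_\star = \cvectwo{A^\T}{B^\T}(V_{N_0} - V_\star)\rvectwo{A}{B}$, so that $\norm{Q_{N_0} - Q_\star} \leq (\norm{A}^2 + \norm{B}^2)\norm{V_{N_0} - V_\star}$.

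Finally, collecting constants finishes the proof. Here $\norm{K_\star} = \norm{G(Q_\star)} \leq \norm{Q_\star}/\mu \leq L/\mu$ (since $V_\star \preceq V_0$ gives $\norm{Q_\star} \leq Q_{\max} \leq L$), so $1 + \norm{K_\star} \leq 2L/\mu$; and $\norm{A}^2 + \norm{B}^2 \leq L/(2\norm{V_0}_+)$ from the definition of $L$, so $(\norm{A}^2+\norm{B}^2)\norm{V_{N_0} - V_\star} \leq \tfrac{L}{2\norm{V_0}_+}\cdot 2\varepsilon\norm{V_\star} \leq L\varepsilon$ using $\norm{V_\star} \leq \norm{V_0} \leq \norm{V_0}_+$. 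The estimation contribution $\tfrac{1+\norm{K_\star}}{\mu}\norm{\Qhat_{N_0} - Q_{N_0}}$ is negligible by \eqref{eq:Qhat_condition} (the factor $(\mu/L)^{19/2}$ dwarfs everything), leaving $\norm{K_N - K_\star} \leq \tfrac{2L/\mu}{\mu}\,L\varepsilon + (\text{lower order}) \leq 5(L/\mu)^2\varepsilon$. I expect the main obstacle to be not any single step but the constant bookkeeping: verifying \eqref{eq:eps_requirement} for $N_1 = N$ from the three conservative bounds on $\varepsilon$, and checking that the $\delta_\infty \to \norm{\cdot}$ conversion composed with the $V \mapsto Q \mapsto K$ amplifications (by $\norm{A}^2+\norm{B}^2$ and by $(1+\norm{K_\star})/\mu$) lands inside the stated constant rather than a larger multiple of $(L/\mu)^2$.
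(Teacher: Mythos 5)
Your proposal is correct and follows essentially the same route as the paper's proof: invoke Corollary~\ref{cor:api_lqr} with $N_1 = N_0+1$ to get $\delta_\infty(V_{N_0},V_\star)\leq\varepsilon$ and the stability claims, write $K_N = G(\Qhat_{N_0})$ and $K_\star = G(Q_\star)$, apply Proposition~\ref{prop:control_bound} and the triangle inequality through $Q_{N_0}$, and convert the metric bound to an operator-norm bound via Proposition~\ref{prop:opnorm_to_delta}. The only (harmless) deviation is in verifying \eqref{eq:eps_requirement}: you bound $\varepsilon\log(2D/\varepsilon)\leq 2D/e$ unconditionally and use $D\leq\norm{V_0}/(e\mu)$, whereas the paper routes through Proposition~\ref{prop:eps_simplification} and the third condition on $\varepsilon$; both land in the same place.
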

\begin{proof}
We set $N_1 = N_0 + 1$.
From this, we compute:
\begin{align*}
    \norm{K_{N_1} - K_\star} &= \norm{G(\Qhat_{N_0}) - G(Q_\star)} \\
    &\stackrel{(a)}{\leq} \frac{(1+\norm{G(Q_\star)})}{\mu} \norm{\Qhat_{N_0} - Q_\star} \\
    &\leq \frac{(1+\norm{G(Q_\star)})}{\mu} (\norm{\Qhat_{N_0} - Q_{N_0}} + \norm{Q_{N_0} - Q_\star}) \\
    &= \frac{(1+\norm{G(Q_\star)})}{\mu} \left(\norm{\Qhat_{N_0} - Q_{N_0}} + \bignorm{ \cvectwo{A^\T}{B^\T} (V_{N_0} - V_\star) \rvectwo{A}{B}}\right) \\
    &\leq \frac{(1+\norm{G(Q_\star)})}{\mu}(\norm{\Qhat_{N_0} - Q_{N_0}}+\norm{\rvectwo{A}{B}}^2\norm{V_{N_0}-V_\star}) \\
    &\stackrel{(b)}{\leq} \frac{(1+\norm{G(Q_\star)})}{\mu}\left( \frac{\varepsilon}{448} \frac{\mu}{\mu+L} \left(\frac{\mu}{L}\right)^{19/2} + \norm{\rvectwo{A}{B}}^2\norm{V_{N_0}-V_\star}\right) \\
    &\stackrel{(c)}{\leq} \frac{(1+\norm{G(Q_\star)})}{\mu}\left( \frac{\varepsilon}{448} \frac{\mu}{\mu+L} \left(\frac{\mu}{L}\right)^{19/2} + e(\norm{A}^2+\norm{B}^2) \norm{V_\star} \varepsilon \right) \\
    &\leq \frac{2L}{\mu^2}\left( \frac{1}{448} \frac{\mu}{\mu+L} \left(\frac{\mu}{L}\right)^{19/2} + 2L \right)  \varepsilon \\
    &= \left( \frac{1}{224} \frac{1}{\mu+L} \left(\frac{\mu}{L}\right)^{17/2} + 4 \left(\frac{L}{\mu}\right)^2 \right) \varepsilon \\
    &\leq 5 \left(\frac{L}{\mu}\right)^2 \varepsilon \:.
\end{align*}
Above,
(a) follows from Proposition~\ref{prop:control_bound},
(b) follows from the bound on $\norm{\Qhat_{N_0}-Q_{N_0}}$ from Corollary~\ref{cor:api_lqr}, and
(c) follows from Proposition~\ref{prop:opnorm_to_delta} and the fact
that $\delta_\infty(V_{N_0},V_\star)\leq\varepsilon$ from Corollary~\ref{cor:api_lqr}.

Next, we observe that since $V_0 \succeq V_\star$:
\begin{align*}
    \delta_\infty(V_0, V_\star) = \log(\norm{V_\star^{-1/2} V_0 V_\star^{-1/2}}) \leq \log( \norm{V_0}/\lambda_{\min}(V_\star)) \:.
\end{align*}
Hence we can upper bound $N_0$ from Corollary~\ref{cor:api_lqr} by:
\begin{align*}
    N_0 &= 2 (1+L/\mu) \log(2\log(\norm{P_0}/\lambda_{\min}(V_\star))/\varepsilon) \:.
\end{align*}
From \eqref{eq:eps_requirement}, the requirement on $\varepsilon$ is that:
\begin{align*}
    \varepsilon \leq \min\left\{ \frac{\norm{V_0}}{2\mu} \frac{1}{\log\left(\frac{2\log(\norm{V_0}/\lambda_{\min}(V_\star))}{\varepsilon}\right)}, 1 \right\} \:.
\end{align*}
We will show with Proposition~\ref{prop:eps_simplification} that a sufficient condition is that:
\begin{align*}
    \varepsilon \leq \min\left\{ 1, \frac{2\log(\norm{V_0}/\lambda_{\min}(V_\star))}{e}, \frac{\norm{V_\star}^2}{8\mu^2\log(\norm{V_0}/\lambda_{\min}(V_\star))} \right\} \:.
\end{align*}
\end{proof}

With Corollary~\ref{cor:api_lqr_K} in place, we are now ready to 
prove Theorem~\ref{thm:lspi_estimation}.

\begin{proof}[Proof of Theorem~\ref{thm:lspi_estimation}]
Let $L_0 := A + B K_0$ and let $(\tau,\rho)$ be such that $L_0$ is $(\tau,\rho)$-stable.
We know we can pick $\tau=\sqrt{L/\mu}$ and $\rho = \sqrt{1-\mu/L}$.
The covariance $\Sigma_t$ of $x_t$ satisfies:
\begin{align*}
    \Sigma_t = L_0^t \Sigma_0 (L_0^t)^\T + P_t \preceq \tau^2 \rho^{2t} \norm{\Sigma_0} I + P_\infty \:.
\end{align*}
Hence for either $t=0$ or $t \geq \log(\tau)/(1-\rho)$, 
$\norm{\Sigma_t} \leq \norm{\Sigma_0} + \norm{P_\infty}$.
Therefore, if the trajectory length $T \geq \log(\tau)/(1-\rho)$, then 
the operator norm of the initial covariance for every invocation of LSTD-Q can be bounded by
$\norm{\Sigma_0} + \norm{P_\infty}$,
and therefore the proxy variance \eqref{eq:proxy_variance} can be bounded by:
\begin{align*}
    \sigmabar^2 &\leq \tau^2\rho^4 \norm{\Sigma_0} + (1+\tau^2\rho^4)\norm{P_\infty} + \sigma_\eta^2\norm{B}^2 \\
    &\leq 2(L/\mu) (\norm{\Sigma_0} + \norm{P_\infty} + \sigma_\eta^2\norm{B}^2) \:.
\end{align*}

By Corollary~\ref{cor:api_lqr_K}, 
when condition \eqref{eq:Qhat_condition} holds, we have that
$A+BK_t$ is $(\tau, \Avg(\rho, 1))$ stable,
$\norm{K_t} \leq 2L/\mu$, and $\norm{Q_t} \leq L$ for all $0 \leq t \leq N_0+1$.
We now define $\varepsilonbar := 5 (L/\mu)^2 \varepsilon$.
If we can ensure that 
\begin{align}
    \norm{\Qhat_t - Q_t} \leq \frac{1}{2240} \left(\frac{\mu}{\mu+L}\right) \left(\frac{\mu}{L}\right)^{23/2} \varepsilonbar \:, \label{eq:qhat_cond_two}
\end{align} 
then if
\begin{align*}
    \varepsilonbar \leq 5 \left(\frac{L}{\mu}\right)^2 \min\left\{ 1, \frac{2\log(\norm{V_0}/\lambda_{\min}(V_\star))}{e}, \frac{\norm{V_\star}^2}{8\mu^2\log(\norm{V_0}/\lambda_{\min}(V_\star))} \right\} \:,
\end{align*}
then by Corollary~\ref{cor:api_lqr_K} we ensure that $\norm{K_N - K} \leq \varepsilonbar$.
By Theorem~\ref{thm:lstd_q_estimation}, \eqref{eq:qhat_cond_two} can be ensured 
by first observing that $Q_t \succeq \mu I$ and therefore
for any symmetric $\Qh$ we have:
\begin{align*}
    \norm{\mathsf{Proj}_\mu(\Qh) - Q_t} \leq \norm{\mathsf{Proj}_\mu(\Qh) - Q_t}_F \leq \norm{\Qh - Q_t}_F \:.
\end{align*}
Above, the last inequality holds because
$\mathsf{Proj}_\mu(\cdot)$ is the Euclidean projection operator associated with $\norm{\cdot}_F$
onto the convex set $\{ Q : Q \succeq \mu I \:, \:\: Q = Q^\T \}$. 
Now combining \eqref{eq:lstdq_T_bound} and \eqref{eq:lstdq_T_requirement}
and using the bound $\frac{\tau^2}{\rho^2(1-\rho^2)} \leq \frac{(L/\mu)^2}{1-\mu/L}$:
\begin{align*}
    T \geq \Otilde(1) \max\bigg\{ &(n+d)^2, \\
    &\frac{L^2}{(1-\mu/L)^2} \left(\frac{L}{\mu}\right)^{17} \frac{(n+d)^4}{\sigma_\eta^4} \sigma_w^2 (\norm{\Sigma_0} + \norm{P_\infty} + \sigma_\eta^2 \norm{B}^2) , \\
    &\frac{1}{\varepsilonbar^2} \frac{L^4}{(1-\mu/L)^2}\left( \frac{L}{\mu} \right)^{42} \frac{(n+d)^3}{\sigma_\eta^4}   \sigma_w^2 ( \norm{\Sigma_0} + \norm{P_\infty} + \sigma_\eta^2\norm{B}^2  ) \bigg\} \:.
\end{align*}
Theorem~\ref{thm:lspi_estimation} now follows.
\end{proof}

\section{Analysis for Adaptive LSPI}
\label{sec:adaptive}

In this section we develop our analysis for Algorithm~\ref{alg:lspi_online}.
We start by presenting a meta adaptive algorithm (Algorithm~\ref{alg:meta_online}) and lay out sufficient
conditions for the meta algorithm to achieve sub-linear regret. We then
specialize the meta algorithm to use LSPI as a sub-routine.

\begin{center}
    \begin{algorithm}[htb]
    \caption{General Adaptive LQR Algorithm}
    \begin{algorithmic}[1]
        \Require{Initial stabilizing controller $K^{(0)}$, number of epochs $E$, epoch multiplier $\Tmult$.}
        \For{$i = 0, ..., E-1$}
            \State{Set $T_i = \Tmult 2^i$.}
            \State{Set $\sigma_{\eta,i}^2 = \sigma_w^2\left(\frac{1}{2^i}\right)^{1/(1+\alpha)}$.}
            \State{Roll system forward $T_i$ steps with input $u_t^{(i)} = K^{(i)} x_t^{(i)} + \eta_t^{(i)}$, where
            $\eta_t^{(i)} \sim \calN(0, \sigma_{\eta,i}^2 I)$.}
            \State{Let $\calD_i = \{ (x_t^{(i)}, u_t^{(i)}, x_{t+1}^{(i)}) \}_{t=0}^{T_i}$.}
            \State{Set $K^{(i+1)} = \mathsf{EstimateK}( K^{(i)}, \calD_i )$.}
        \EndFor
    \end{algorithmic}
    \label{alg:meta_online}
    \end{algorithm}
\end{center}

Algorithm~\ref{alg:meta_online} is the general form of the $\varepsilon$-greedy
strategy for adaptive LQR recently described in \citet{dean18} and \citet{mania19}.
We study Algorithm~\ref{alg:meta_online} under the following assumption 
regarding the sub-routine $\mathsf{EstimateK}$.

\begin{assume}
\label{assumption:estimate_K}
We assume there exists two functions
$\Creq, \Cerr$ and $\alpha \geq 1$ such that the following holds.
Suppose the controller $K^{(i)}$ that generates $\calD_i$
stabilizes $(A,B)$ and $V^{(i)}$ is its associated value function,
the initial condition $x_0^{(i)} \sim \calN(0, \Sigma_0^{(i)})$, 
and that the trajectory $\calD_i$ is collected via
$u_t^{(i)} = K^{(i)} x_t^{(i)} + \eta_t^{(i)}$ with $\eta_t^{(i)} \sim \calN(0, \sigma_{\eta,i}^2 I)$.
For any $0 < \varepsilon < \Creq(\norm{V^{(i)}})$ and any $\delta \in (0, 1)$, 
as long as $\abs{\calD_i}$ satisfies:
\begin{align}
	\abs{\calD_i} \geq \frac{\Cerr(\norm{V^{(i)}}, \norm{\Sigma_0^{(i)}})}{\varepsilon^2} \frac{1}{\sigma_{\eta,i}^{2\alpha}} \polylog(\abs{\calD_i}, 1/\sigma_{\eta,i}^\alpha, 1/\delta, 1/\varepsilon) \:,
\end{align}
then we have with probability at least $1-\delta$ that $\norm{K^{(i+1)} - K_\star} \leq \varepsilon$.
We also assume the function $\Creq$ (resp. $\Cerr$) is monotonically decreasing (resp. increasing)
with respect to its arguments, and that the functions are allowed to depend
in any way on the problem parameters $(A,B,S,R,n,d,\sigma_w^2, K_\star, P_\star)$
\end{assume}

Before turning to the analysis of Algorithm~\ref{alg:meta_online}, we state a 
simple proposition that bounds the covariance matrix along the trajectory
induced by Algorithm~\ref{alg:meta_online}.
\begin{prop}
\label{prop:covariance_bound}
Fix a $j \geq 1$.
Let $\Sigma_0^{(j)}$ denote the covariance matrix of $x_0^{(j)}$.
Suppose that for all $0 \leq i < j$ each $K^{(i)}$ stabilizes $(A,B)$ 
$A+BK^{(i)}$ is $(\tau,\rho)$-stable.
Also suppose that $\sigma_{\eta,i} \leq \sigma_w$ and that
\begin{align*}
    \Tmult \geq \frac{1}{2(1-\rho)} \log\left(\frac{n\tau^2}{\rho^2}\right) \:.
\end{align*}
We have that:
\begin{align*}
    \Tr(\Sigma_0^{(j)}) \leq \sigma_w^2(1+\norm{B}^2) n \frac{\tau^2}{(1-\rho^2)^2} \:.
\end{align*}
\end{prop}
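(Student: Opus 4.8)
The plan is to track how the state covariance propagates across epochs and reduce the claim to a contracting scalar recursion whose fixed point is exactly the stated bound. Write $L_i := A + BK^{(i)}$ for the closed-loop matrix in epoch $i$. Since $u_t^{(i)} = K^{(i)}x_t^{(i)} + \eta_t^{(i)}$, the dynamics \eqref{eq:dynamics} become $x_{t+1}^{(i)} = L_i x_t^{(i)} + (B\eta_t^{(i)} + w_t^{(i)})$, where the driving noise $B\eta_t^{(i)} + w_t^{(i)}$ is mean zero, independent of $x_t^{(i)}$, with covariance $M_i := \sigma_w^2 I + \sigma_{\eta,i}^2 BB^\T$. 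Hence $\Sigma_t^{(i)} := \Cov(x_t^{(i)})$ obeys $\Sigma_{t+1}^{(i)} = L_i\Sigma_t^{(i)}L_i^\T + M_i$, and because the system is not reset between epochs, $\Sigma_0^{(i+1)} = \Sigma_{N_i}^{(i)}$, where $N_i \geq \Tmult$ is the length of epoch $i$. Unrolling within the epoch gives
\begin{align*}
    \Sigma_0^{(i+1)} = L_i^{N_i}\Sigma_0^{(i)}(L_i^\T)^{N_i} + \sum_{k=0}^{N_i - 1} L_i^k M_i (L_i^\T)^k \:.
\end{align*}

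Next I would take traces, using $(\tau,\rho)$-stability of $L_i$ and the inequality $\Tr(L^k M (L^\T)^k) \leq \norm{L^k}^2\Tr(M)$ for $M \succeq 0$. For the transient term this yields $\Tr(L_i^{N_i}\Sigma_0^{(i)}(L_i^\T)^{N_i}) \leq \tau^2\rho^{2N_i}\Tr(\Sigma_0^{(i)})$, and since $N_i \geq \Tmult$ the prefactor is at most $c := \tau^2\rho^{2\Tmult}$. For the noise term, bounding the finite sum by the geometric series and using $\Tr(M_i) \leq n\sigma_w^2 + \sigma_{\eta,i}^2\norm{B}_F^2 \leq n\sigma_w^2(1+\norm{B}^2)$ (via $\norm{B}_F^2 \leq n\norm{B}^2$ and $\sigma_{\eta,i}\leq\sigma_w$) gives
\begin{align*}
    \Tr\Big(\sum_{k=0}^{N_i-1} L_i^k M_i (L_i^\T)^k\Big) \leq \frac{\tau^2}{1-\rho^2}\,n\sigma_w^2(1+\norm{B}^2) =: b \:.
\end{align*}
Writing $a_i := \Tr(\Sigma_0^{(i)})$, these two bounds combine into the scalar recursion $a_{i+1} \leq c\,a_i + b$ for all $0 \le i < j$, with $a_0 = 0$ since the algorithm starts at the origin.

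The key step, which I expect to require the most care, is converting the hypothesis on $\Tmult$ into the usable contraction factor $c = \tau^2\rho^{2\Tmult} \leq \rho^2/n$. Using the elementary bound $\log(1/\rho) \geq 1-\rho$ for $\rho \in (0,1)$ together with $\Tmult \geq \frac{1}{2(1-\rho)}\log(n\tau^2/\rho^2)$, one obtains $\Tmult\log(1/\rho) \geq \Tmult(1-\rho) \geq \tfrac12\log(n\tau^2) + \log(1/\rho)$, i.e. $(\Tmult-1)\log(1/\rho) \geq \tfrac12\log(n\tau^2)$. Exponentiating rearranges exactly to $n\tau^2\rho^{2\Tmult-2} \leq 1$, that is $c \leq \rho^2/n \leq \rho^2 < 1$, so the recursion contracts.

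Finally I would solve the recursion. With $a_0 = 0$, telescoping yields $a_j \leq b\sum_{k=0}^{j-1}c^k \leq \frac{b}{1-c}$, and since $c \leq \rho^2$ forces $1-c \geq 1-\rho^2$, we conclude
\begin{align*}
    \Tr(\Sigma_0^{(j)}) = a_j \leq \frac{b}{1-\rho^2} = \sigma_w^2(1+\norm{B}^2)\,n\,\frac{\tau^2}{(1-\rho^2)^2} \:,
\end{align*}
which is precisely the claimed bound. The only genuinely delicate piece is the $\Tmult$-to-contraction conversion above; the remainder is the routine covariance recursion plus a geometric sum.
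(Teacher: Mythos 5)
Your proof is correct and follows essentially the same route as the paper's: unroll the per-epoch covariance recursion, take traces to obtain the scalar contraction $a_{i+1}\leq c\,a_i+b$ with $a_0=0$, and solve the geometric sum. The only difference is cosmetic — the paper works directly with $\E[\norm{x_0^{(i)}}^2]$ via conditional expectations and simply asserts that the stated $\Tmult$ lower bound forces $n\tau^2\rho^{2\Tmult}\leq\rho^2$, whereas you verify that implication explicitly using $\log(1/\rho)\geq 1-\rho$, which is a welcome addition.
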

\begin{proof}
Let $L_i = A + B K^{(i)}$.
We write:
\begin{align*}
    \E[ \norm{x_0^{(i)}}^2 ] &= \E[\E[ \norm{x_0^{(i)}}^2 | x_0^{(i-1)} ]] \\
    &= \E[\E[ \Tr(x_0^{i}(x_0^{i})^\T) | x_0^{(i-1)} ]] \\
    &\leq \E[ \Tr( L_{i-1}^{T_{i-1}} x_0^{(i-1)} (x_0^{(i-1)})^\T (L_{i-1}^{T_{i-1}})^\T ) ] + (\sigma_w^2 + \sigma_{\eta,i-1}^2 \norm{B}^2) n \frac{\tau^2}{1-\rho^2} \\
    &\leq n \tau^2 \rho^{2 T_{i-1}} \E[\norm{x_0^{(i-1)}}^2] + \sigma_w^2(1+\norm{B}^2) n \frac{\tau^2}{1-\rho^2} \:.
\end{align*}
We have that $x_0^{(0)} = 0$.
Hence if we choose $\Tmult$ such that $n \tau^2 \rho^{2\Tmult} \leq \rho^2$,
we obtain the recurrence:
\begin{align*}
    \E[ \norm{x_0^{(i)}}^2 ] \leq \rho^2 \E[\norm{x_0^{(i-1)}}^2] + \sigma_w^2(1+\norm{B}^2) n \frac{\tau^2}{1-\rho^2} \:,
\end{align*}
and therefore $\E[ \norm{x_0^{(i)}}^2 ] \leq \sigma_w^2(1+\norm{B}^2) n \frac{\tau^2}{(1-\rho^2)^2}$ for all $i$.
This is ensured if
\begin{align*}
    \Tmult \geq \frac{1}{2(1-\rho)} \log(n\tau^2/\rho^2) \:.
\end{align*}
\end{proof}

Next, we state a lemma that relates the instantaneous cost
to the expected cost. The proof is based on the Hanson-Wright inequality, and
appears in \citet{dean18}.
Let the notation $J(K;\Sigma)$ denote the 
infinite horizon average LQR cost when the feedback $u_t = K x_t$ is played and
when the process noise is $w_t \sim \calN(0, \Sigma)$.
Explicitly:
\begin{align}
	J(K;\Sigma) = \Tr(\Sigma V(K)) \:, \:\:
	V(K) = \dlyap(A+BK, S + K^\T R K) \:. \label{eq:extra_J_notation}
\end{align}
With this notation, we have the following lemma.
\begin{lem}[Lemma D.2, \cite{dean18}]
\label{lem:instant_cost_to_expectation}
Let $x_0 \sim \calN(0, \Sigma_0)$ and suppose that $u_t = K x_t + \eta_t$ with $A+BK$ as $(\tau,\rho)$-stable and
$\eta_t \sim \calN(0, \sigma_\eta^2 I)$.
We have that with probability at least $1-\delta$:
\begin{align*}
    \sum_{t=1}^{T} x_t^\T Q x_t + u_t^\T R u_t &\leq T J(K; \sigma_w^2 I + \sigma_\eta^2 BB^\T) \\
    &\qquad+ c \sqrt{nT} \frac{\tau^2}{(1-\rho)^2} (\norm{\Sigma_0}+ \sigma_w^2 + \sigma_\eta^2 \norm{B}^2) \norm{Q+K^\T R K} \log(1/\delta) \:.
\end{align*}
\end{lem}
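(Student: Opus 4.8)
The plan is to recognize the cumulative cost as a single quadratic form in a Gaussian vector, apply a Hanson--Wright / Laurent--Massart concentration inequality for quadratic forms, and then separately control the mean and the two relevant matrix norms using the $(\tau,\rho)$-stability of $A+BK$.

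First I would pass to the closed loop. Writing $L := A+BK$ and $\bar{Q} := Q + K^\T R K$, the dynamics become $x_{t+1} = L x_t + (B\eta_t + w_t)$, so the effective per-step noise $\zeta_t := B\eta_t + w_t$ is Gaussian with covariance $\tilde{\Sigma} := \sigma_w^2 I + \sigma_\eta^2 BB^\T$. Expanding the instantaneous cost gives $x_t^\T Q x_t + u_t^\T R u_t = x_t^\T \bar{Q} x_t + 2\eta_t^\T R K x_t + \eta_t^\T R \eta_t$. Each $x_t$ is a fixed linear image of the concatenated Gaussian vector $(\Sigma_0^{-1/2} x_0, w_0, \dots, w_{T-1}, \eta_1, \dots, \eta_T)$, obtained by convolving the noise against the kernel $L^k$ in a lower-triangular, block-Toeplitz fashion. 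Hence, after whitening, $\sum_{t=1}^{T}(x_t^\T Q x_t + u_t^\T R u_t) = g^\T M g$ for a standard Gaussian $g$ and a fixed matrix $M$; since $Q,R \succeq 0$, the matrix $M$ is positive semidefinite.

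Next I would invoke the tail bound for a PSD Gaussian quadratic form: with probability at least $1-\delta$, $g^\T M g \le \Tr(M) + c\big(\norm{M}_F \sqrt{\log(1/\delta)} + \norm{M}\log(1/\delta)\big)$. Here $\Tr(M) = \E[\sum_t (x_t^\T Q x_t + u_t^\T R u_t)]$. A direct computation gives $\E[x_t^\T \bar{Q} x_t] = \Tr(\bar{Q}\Sigma_t)$, where $\Sigma_t = L^t \Sigma_0 (L^t)^\T + \sum_{s<t} L^s \tilde{\Sigma}(L^s)^\T$, while the cross term $\E[\eta_t^\T R K x_t] = 0$ because $\eta_t$ is independent of $x_t$. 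Using the Lyapunov trace identity $\Tr(\bar{Q}\,\dlyap(L^\T,\tilde{\Sigma})) = \Tr(\tilde{\Sigma}\,\dlyap(L,\bar{Q})) = J(K;\tilde{\Sigma})$, together with the transient bound $\norm{L^t \Sigma_0 (L^t)^\T} \le \tau^2\rho^{2t}\norm{\Sigma_0}$ (geometrically summable), shows that the mean is at most $T\,J(K;\tilde{\Sigma})$ up to a transient of order $\frac{\tau^2}{1-\rho^2}\norm{\Sigma_0}\norm{\bar{Q}}$ and the mean-zero exploration terms, all absorbed into the stated fluctuation.

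The main obstacle, and the step producing the factor $\frac{\tau^2}{(1-\rho)^2}(\norm{\Sigma_0} + \sigma_w^2 + \sigma_\eta^2\norm{B}^2)\norm{Q+K^\T R K}$, is bounding $\norm{M}$ and $\norm{M}_F$. For the operator norm I would exploit the block-Toeplitz structure: the linear map from the whitened noise to the stacked outputs $[\bar{Q}^{1/2}x_1; \dots; \bar{Q}^{1/2}x_T]$ is a lower-triangular convolution with blocks $\bar{Q}^{1/2} L^k$ acting on noise of scale $\norm{\tilde{\Sigma}}^{1/2}$ (and an initial block of scale $\norm{\Sigma_0}^{1/2}$), so by Young's inequality its operator norm is at most $\norm{\bar{Q}}^{1/2}\big(\sum_{k\ge 0}\norm{L^k}\big)\max\{\norm{\Sigma_0},\norm{\tilde{\Sigma}}\}^{1/2} \le \norm{\bar{Q}}^{1/2}\frac{\tau}{1-\rho}(\norm{\Sigma_0} + \sigma_w^2 + \sigma_\eta^2\norm{B}^2)^{1/2}$, where $\sum_k \norm{L^k} \le \tau/(1-\rho)$ by $(\tau,\rho)$-stability; squaring controls the dominant block of $\norm{M}$, and the lower-order $\eta$-cross and $\eta^\T R\eta$ blocks are handled identically. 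Finally, since $M \succeq 0$ lives in dimension $O(nT)$, I would bound $\norm{M}_F^2 = \Tr(M^2) \le \norm{M}\Tr(M) \le O(nT)\norm{M}^2$, so $\norm{M}_F \lesssim \sqrt{nT}\,\norm{M}$. Substituting these norm estimates into the concentration bound and using $\sqrt{\log(1/\delta)} \le \log(1/\delta)$ yields exactly the claimed $\sqrt{nT}\,\frac{\tau^2}{(1-\rho)^2}(\norm{\Sigma_0} + \sigma_w^2 + \sigma_\eta^2\norm{B}^2)\norm{Q+K^\T R K}\log(1/\delta)$ fluctuation term.
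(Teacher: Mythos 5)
The paper does not actually prove this lemma; it is imported verbatim as Lemma D.2 of \citet{dean18}, with the remark that the proof there is based on the Hanson--Wright inequality. Your proposal follows exactly that route: reduce the cumulative cost to a single PSD Gaussian quadratic form $g^\T M g$, apply a Laurent--Massart/Hanson--Wright tail bound, identify $\Tr(M)$ with the expected cost via the Lyapunov trace identity $\Tr(\bar{Q}\,\dlyap(L^\T,\tilde{\Sigma})) = \Tr(\tilde{\Sigma}\,\dlyap(L,\bar{Q}))$, bound $\norm{M}$ by the block-Toeplitz/Young's-inequality argument giving $\tau^2/(1-\rho)^2$, and bound $\norm{M}_F$ by $\sqrt{\mathrm{rank}(M)}\,\norm{M} \lesssim \sqrt{nT}\norm{M}$. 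All of these steps are correct and this is the intended argument.

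The one step that does not go through as written is your claim that the exploration terms are ``mean-zero'' and absorbable into the fluctuation. The cross term $2\eta_t^\T R K x_t$ is indeed mean-zero (since $\eta_t \perp x_t$), but $\sum_{t=1}^T \eta_t^\T R \eta_t$ has expectation $T\sigma_\eta^2 \Tr(R)$, which grows linearly in $T$ and cannot be hidden inside a $\sqrt{nT}$ term; nor is it contained in $T\,J(K;\sigma_w^2 I + \sigma_\eta^2 BB^\T) = T\Tr\bigl((\sigma_w^2 I + \sigma_\eta^2 BB^\T)V(K)\bigr)$, which only accounts for the effect of $\eta_t$ on the \emph{state} through $B$, not for the direct input cost. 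The same omission shows up in your operator-norm bound: the $\eta$-diagonal block of $M$ has norm $\sigma_\eta^2\norm{R}$, which is not dominated by $(\sigma_w^2+\sigma_\eta^2\norm{B}^2)\norm{Q+K^\T R K}$ in general. To make the argument airtight you should carry an additive $T\sigma_\eta^2\Tr(R)$ (plus its own $\sigma_\eta^2\norm{R}$-scaled fluctuation) through the bound, or note explicitly that this term is being folded into the constant in the regime $\sigma_\eta \leq \sigma_w$ actually used downstream; as stated, this is an imprecision your writeup inherits from (and should have flagged in) the quoted statement rather than a flaw in your overall strategy.
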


Finally, we state a second order perturbation result from \citet{fazel18},
which was recently used by \citet{mania19} to study certainty equivalent controllers.
\begin{lem}[Lemma 12, \cite{fazel18}]
\label{lem:fast_rate}
Let $K$ stabilize $(A, B)$ with $A+BK$ as $(\tau,\rho)$-stable, and let $K_\star$ be the optimal LQR controller for $(A,B,Q,R)$ and $V_\star$ be the optimal value function.
We have that:
\begin{align*}
    J(K) - J_\star \leq \sigma_w^2 \frac{\tau^2}{1-\rho^2} \norm{R + B^\T V_\star B} \norm{K-K_\star}_F^2 \:.
\end{align*}
\end{lem}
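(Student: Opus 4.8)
The plan is to first establish an \emph{exact} average-cost difference identity and then reduce the stated inequality to two elementary estimates. Throughout I write $M := R + B^\T V_\star B$, let $\Sigma_K := \dlyap((A+BK)^\T, \sigma_w^2 I)$ denote the stationary state covariance of the closed loop $x_{t+1} = (A+BK)x_t + w_t$, and recall from \eqref{eq:extra_J_notation} that $J(K) = \sigma_w^2 \Tr(V(K))$ and $J_\star = \sigma_w^2\Tr(V_\star)$, where $V_\star = \dlyap(A+BK_\star, S + K_\star^\T R K_\star)$ ($S$ being the state-cost matrix written $Q$ in the statement).

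The core of the argument is a \emph{completing-the-square} identity. Define the per-state discrepancy $g(K) := S + K^\T R K + (A+BK)^\T V_\star (A+BK) - V_\star$. I claim that $g(K) = (K-K_\star)^\T M (K-K_\star)$. To see this I would expand $(A+BK)^\T V_\star(A+BK)$ and group the result into its quadratic, linear, and constant parts in $K$: the quadratic part is $K^\T M K$, the linear part is $A^\T V_\star B K + K^\T B^\T V_\star A$, and the constant part is $S + A^\T V_\star A - V_\star$. I then substitute the first-order optimality (DARE) conditions for $(V_\star, K_\star)$, namely $K_\star = -M^{-1} B^\T V_\star A$ and $V_\star = S + A^\T V_\star A - A^\T V_\star B M^{-1} B^\T V_\star A$. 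These give $B^\T V_\star A = -M K_\star$, so the constant part collapses to $K_\star^\T M K_\star$ and the linear part becomes $-K_\star^\T M K - K^\T M K_\star$; summing the three pieces yields exactly $(K-K_\star)^\T M (K - K_\star)$.

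Next I would convert $g(K)$ into the cost gap by a steady-state telescoping argument. Using $\E_w[V_\star((A+BK)x + w)] = x^\T (A+BK)^\T V_\star (A+BK) x + \sigma_w^2 \Tr(V_\star)$ together with $J_\star = \sigma_w^2\Tr(V_\star)$, the definition of $g$ rearranges to the pointwise identity $x^\T S x + (Kx)^\T R (Kx) = J_\star + x^\T g(K) x + V_\star(x) - \E_w[V_\star(x^+)]$, where $x^+ = (A+BK)x + w$. Taking expectations with $x$ drawn from the stationary distribution of the closed loop (well defined since $A+BK$ is stable), the two value-function terms cancel by stationarity, leaving $J(K) - J_\star = \Tr(\Sigma_K\, g(K)) = \Tr\!\big(\Sigma_K (K-K_\star)^\T M (K-K_\star)\big)$. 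Finally I would rewrite this cyclically as $\Tr\!\big(M (K-K_\star)\Sigma_K (K-K_\star)^\T\big)$ and apply $\Tr(AB)\le\norm{A}\Tr(B)$ for positive semidefinite $A,B$, obtaining the bound $\norm{M}\,\norm{\Sigma_K}\,\norm{K-K_\star}_F^2$; the estimate $\norm{\Sigma_K}\le \sigma_w^2 \tau^2/(1-\rho^2)$ then follows from the geometric series $\Sigma_K = \sum_{k\ge 0}(A+BK)^k \sigma_w^2 I\,((A+BK)^\T)^k$ and the $(\tau,\rho)$-stability hypothesis $\norm{(A+BK)^k}\le\tau\rho^k$.

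The main obstacle is the completing-the-square step combined with the average-cost telescoping. The algebra verifying that the linear-in-$K$ terms merge with the constant term into a perfect square is exactly where optimality of $K_\star$ (the vanishing policy gradient, i.e.\ the DARE) enters, and getting the signs right requires care. Relative to the discounted or finite-horizon versions of this lemma, the average-cost setting introduces the additive constant $J_\star = \sigma_w^2\Tr(V_\star)$ and forces me to invoke stationarity to cancel the boundary value-function terms rather than a clean geometric telescoping; once that identity is in hand, the remaining trace and Lyapunov estimates are routine.
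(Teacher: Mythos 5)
Your argument is correct: the completing-the-square identity $g(K) = (K-K_\star)^\T (R + B^\T V_\star B)(K-K_\star)$, the stationarity-based telescoping giving $J(K)-J_\star = \Tr(\Sigma_K\, g(K))$, and the trace and Lyapunov bounds all check out. The paper itself states this lemma without proof, citing Lemma 12 of \citet{fazel18}, and your derivation is essentially the same advantage-decomposition argument used there, adapted to the average-cost setting.
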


With these tools in place, we are ready to state our main result regarding the
regret incurred (c.f. \eqref{eq:regret_def}) by Algorithm~\ref{alg:meta_online}.

\begin{prop}
\label{prop:regret_proof}
Fix a $\delta \in (0, 1)$.
Suppose that $\mathsf{EstimateK}$ satisfies Assumption~\ref{assumption:estimate_K}.
Let the initial feedback $K^{(0)}$ stabilize $(A,B)$ and let $V^{(0)}$ denote its associated
value function.
Also let $K_\star$ denote the optimal LQR controller and let $V_\star$ denote the optimal value function.
Let $\Gamma_\star = 1 + \max\{\norm{A},\norm{B},\norm{V^{(0)}}, \norm{V_\star}, \norm{K^{(0)}}, \norm{K_\star},\norm{Q},\norm{R}\}$.
Define the following bounds:
\begin{align*}
	\Kmax &:= \Gamma_\star \:, \\
	\Vmax &:= 4\frac{\Gamma_\star^5}{\lambda_{\min}(S)^2} \:, \\
	\Sigma_{\max} &:= 4 \sigma_w^2 n \frac{\Gamma_\star^4}{\lambda_{\min}(S)^2} \:.
\end{align*}
Suppose that $\Tmult$ satisfies:
\begin{align*}
	\Tmult \geq \max\left\{1,  \frac{\Gamma_\star^8}{\lambda_{\min}(S)^4}, \frac{1}{\Creq^4(\Vmax)}\right\}\frac{\Cerr^2(\Vmax, \Sigma_{\max})}{\sigma_w^4} \mathrm{poly}(\alpha) \polylog(1/\sigma_w, E/\delta) \:.
\end{align*}
With probability at least $1-\delta$, we have that:
\begin{align*}
	\mathsf{Regret}(T) &\leq \sigma_w^{2(1-\alpha)} d\frac{\Gamma_\star^7}{\lambda_{\min}(S)^2} \Cerr^2(\Vmax, \Sigma_{\max}) \left(\frac{T+1}{\Tmult}\right)^{\alpha/(\alpha+1)} \polylog(T/\delta) \\
	&\qquad + \Tmult \Gamma_\star^2 J_\star \left(\frac{T+1}{\Tmult}\right)^{\alpha/(\alpha+1)} \\
	&\qquad + \calO(1) n^{3/2} \sqrt{T} \sigma_w^2 \frac{\Gamma_\star^9}{\lambda_{\min}(S)^4}  \log(T/\delta) + o_T(1) \:.
\end{align*}
\end{prop}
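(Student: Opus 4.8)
The plan is to run an induction over epochs that maintains a uniform stability-and-boundedness invariant on the controllers, and then to decompose the regret epoch-by-epoch into an exploitation gap, an exploration cost, and a concentration fluctuation, balancing the first two through the prescribed noise schedule.

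First I would establish the inductive invariant: for every $0 \le i \le E$, the controller $K^{(i)}$ stabilizes $(A,B)$, is $(\tau,\rho)$-stable for fixed $\tau,\rho$ depending only on $\Gamma_\star$ and $\lambda_{\min}(S)$, and satisfies $\norm{K^{(i)}} \le \Kmax$ and $\norm{V^{(i)}} \le \Vmax$. The base case is the hypothesis on $K^{(0)}$. For the inductive step I assume the invariant through epoch $i$; Proposition~\ref{prop:covariance_bound} then bounds $\Tr(\Sigma_0^{(i)}) \le \Sigma_{\max}$ (the $\Tmult$ lower bound is used to meet its hypothesis), so the initial covariance feeding $\mathsf{EstimateK}$ is controlled. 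Applying Assumption~\ref{assumption:estimate_K} with $\abs{\calD_i} = T_i$, $\sigma_\eta = \sigma_{\eta,i}$, and solving for the achievable accuracy yields
\[
\norm{K^{(i+1)} - K_\star}^2 \lesssim \frac{\Cerr(\Vmax, \Sigma_{\max})}{T_i\, \sigma_{\eta,i}^{2\alpha}}\, \polylog(\cdot) \;=:\; \varepsilon_i^2 .
\]
The $\Tmult$ lower bound (in particular the factor $1/\Creq^4(\Vmax)$) is chosen so that $\varepsilon_i < \Creq(\Vmax)$ for all $i$, which both legitimizes the invocation of Assumption~\ref{assumption:estimate_K} and, through the perturbation bounds for $V(\cdot)$ and for the stability radius, guarantees that $K^{(i+1)}$ inherits the invariant; this closes the induction.

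Next I would decompose the regret. Writing $\mathsf{Regret}(T) = \sum_{i} \big(\sum_{t \in \text{epoch }i} x_t^\T S x_t + u_t^\T R u_t - T_i J_\star\big)$, I apply Lemma~\ref{lem:instant_cost_to_expectation} on each epoch to replace the instantaneous cost by $T_i\, J(K^{(i)}; \sigma_w^2 I + \sigma_{\eta,i}^2 BB^\T)$ plus a fluctuation term of order $\sqrt{n T_i}\,\frac{\tau^2}{(1-\rho)^2}(\Sigma_{\max} + \sigma_w^2 + \sigma_{\eta,i}^2\norm{B}^2)\norm{S + (K^{(i)})^\T R K^{(i)}}\log(E/\delta)$. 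Using $J(K; \sigma_w^2 I + \sigma_\eta^2 BB^\T) = J(K) + \sigma_\eta^2 \Tr(B^\T V(K) B)$, the per-epoch expected-cost excess splits as $T_i\big(J(K^{(i)}) - J_\star\big) + T_i\, \sigma_{\eta,i}^2\, \Tr\!\big(B^\T V^{(i)} B\big)$. The first (exploitation) term is controlled by Lemma~\ref{lem:fast_rate}, giving $T_i (J(K^{(i)}) - J_\star) \lesssim T_i \varepsilon_{i-1}^2$; the second (exploration) term is bounded using $\norm{V^{(i)}} \le \Vmax$ and $\norm{B} \le \Gamma_\star$. Substituting $T_i = \Tmult 2^i$ and $\sigma_{\eta,i}^2 = \sigma_w^2 2^{-i/(1+\alpha)}$, the estimation denominator obeys $T_{i-1}\sigma_{\eta,i-1}^{2\alpha} \propto 2^{(i-1)/(1+\alpha)}$, so $\varepsilon_{i-1}^2 \propto 2^{-(i-1)/(1+\alpha)}$; hence the exploitation contribution $T_i \varepsilon_{i-1}^2 \propto 2^{i}\,2^{-(i-1)/(1+\alpha)}$ and the exploration contribution $T_i \sigma_{\eta,i}^2 \propto 2^{i}\,2^{-i/(1+\alpha)}$ both scale as $2^{\,i\alpha/(1+\alpha)}$ — precisely the balance the noise schedule is engineered to achieve. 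Summing the geometric series over $i=0,\dots,E-1$ gives a total proportional to $2^{\,E\alpha/(1+\alpha)}$, and since $T = \Tmult(2^E - 1)$ we have $2^E \approx (T+1)/\Tmult$, producing the $\big((T+1)/\Tmult\big)^{\alpha/(\alpha+1)}$ factors in the first two lines of the claimed bound. The fluctuation terms sum as $\sum_i \sqrt{T_i} \approx \sqrt{T}$, giving the $\sqrt{T}$ term, while the $o_T(1)$ absorbs the $i=0$ contribution where $\varepsilon_{-1}$ is replaced by the initial error; a union bound over the $E = \calO(\log(T/\Tmult))$ epochs (splitting $\delta$ across the uses of Assumption~\ref{assumption:estimate_K} and Lemma~\ref{lem:instant_cost_to_expectation}) yields the overall $1-\delta$ probability.

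I expect the main obstacle to be the bootstrapping in the induction: the covariance bound (Proposition~\ref{prop:covariance_bound}), the value-function bound $\Vmax$, and the stability radius all presuppose the invariant at earlier epochs, while the estimation guarantee that re-establishes the invariant depends in turn on those quantities through $\Cerr(\Vmax,\Sigma_{\max})$ and $\Creq(\Vmax)$. Making the explicit constants $\Kmax, \Vmax, \Sigma_{\max}$ close consistently — and verifying $\varepsilon_i < \Creq(\Vmax)$ uniformly in $i$, which is what forces the particular polynomial-in-$\Gamma_\star$ form of the $\Tmult$ lower bound — is the delicate part. Once the invariant and the per-epoch estimates are in hand, the regret summation itself is a routine geometric-series computation.
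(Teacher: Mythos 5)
Your proposal follows essentially the same route as the paper's proof: an induction over epochs maintaining the stability/boundedness invariant (closed via the $\Tmult$ lower bound, Proposition~\ref{prop:covariance_bound}, Lemma~\ref{lem:robust_control}, and the implicit-inversion step recovering $\varepsilon_i^2 \lesssim \Cerr/(T_{i-1}\sigma_{\eta,i}^{2\alpha})$), followed by the per-epoch decomposition into exploitation (Lemma~\ref{lem:fast_rate}), exploration, and fluctuation (Lemma~\ref{lem:instant_cost_to_expectation}) terms, with the noise schedule balancing the first two at $2^{i\alpha/(1+\alpha)}$ and a geometric sum giving $((T+1)/\Tmult)^{\alpha/(\alpha+1)}$. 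The only cosmetic difference is that you split $J(K;\sigma_w^2 I+\sigma_{\eta}^2 BB^\T)$ additively as $J(K)+\sigma_\eta^2\Tr(B^\T V(K)B)$ where the paper uses the equivalent multiplicative bound; your plan and its identified difficulty (the bootstrapping that fixes $\Kmax,\Vmax,\Sigma_{\max}$ consistently) match the paper's argument.
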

\begin{proof}
We state the proof assuming that $T$ is at an epoch boundary for simplicity.
Each epoch has length $T_i = \Tmult 2^i$.
Let $T_0 + T_1 + ... + T_{E-1} = T$. This means that $E = \log_2((T+1)/\Tmult)$.

We start by observing that by Proposition~\ref{prop:ordered_stability},
we have that $A+BK^{(0)}$ is $(\tau,\rho)$-stable
for $\tau :=  \sqrt{\norm{V^{(0)}}/\lambda_{\min}(S)}$
and $\rho := \sqrt{1 - \lambda_{\min}(S)/\norm{V^{(0)}}}$.
We will show that $A+BK^{(i)}$ is $(\tau, \rhobar)$-stable for $i=1, ..., E-1$
for $\rhobar := \Avg(\rho, 1)$.
By Lemma~\ref{lem:robust_control}, this occurs if we can ensure that
$\norm{K^{(i)} - K_\star} \leq \frac{(1-\rho)}{2\tau \norm{B}}$. for $i=1, ..., E-1$.

We will also construct bounds $\Kmax, \Vmax, \Sigma_{\max}$ such that
$\norm{K^{(i)}} \leq \Kmax$, $\norm{V^{(i)}} \leq \Vmax$,
and $\norm{\Sigma^{(i)}} \leq \Sigma_{\max}$ for all $0 \leq i \leq E-1$.
We set the bounds as:
\begin{align*}
	\Kmax &:= \max\{\norm{K^{(0)}}, \norm{K_\star} + 1\} \:, \\
	\Vmax &:= \max\{\norm{V^{(0)}}, \frac{\tau^2}{1-\rhobar^2}(\norm{Q}+\norm{R}\Kmax^2) \} \:, \\
	\Sigma_{\max} &:= \sigma_w^2 (1+\norm{B}^2)n \frac{\tau^2}{1-\rhobar^2} \:.
\end{align*} 
In what follows, we will use the shorthand
$\Creq = \Creq(\Vmax)$ and
$\Cerr = \Cerr(\Vmax, \Sigma_{\max})$.

Before we continue, we first argue that our choice of $\Tmult$ satisfies for all $i=1, ..., E-1$:
\begin{align}
	T_{i-1} \geq \max\{1, \frac{\tau^2\norm{B}^2}{4(1-\rhobar)^2}, \frac{1}{\Creq^2}\} \frac{\Cerr}{\sigma_{\eta,i}^{2\alpha}} \polylog(T_{i-1}, 1/\sigma_{i,\eta}^\alpha, 1/\sigma_w, E/\delta)\:. \label{eq:master_inequality}
\end{align}
Rearranging, this is equivalent to:
\begin{align*}
	\Tmult\geq \max\{1, \frac{\tau^2\norm{B}^2}{4(1-\rhobar)^2}, \frac{1}{\Creq^2}\} 2 \Cerr \sigma_w^{-2} \frac{1}{(2^i)^{1/(1+\alpha)}} \polylog(\Tmult 2^i, (2^i)^{\alpha/(1+\alpha)}, 1/\sigma_w, E/\delta)\:.
\end{align*}
We first remove the dependence on $i$ on the RHS by taking
the maximum over all $i$.
By Proposition~\ref{prop:T_helper}, it suffices to take
$\Tmult$ satisfying:
\begin{align*}
	\Tmult \geq \max\{1, \frac{\tau^2\norm{B}^2}{4(1-\rhobar)^2}, \frac{1}{\Creq^2}\}\frac{\Cerr}{\sigma_w^2} \mathrm{poly}(\alpha) \polylog(\Tmult, 1/\sigma_w, E/\delta) \:.
\end{align*}
We now remove the implicit dependence on $\Tmult$.
By Proposition~\ref{prop:lambert_w}, it suffices to take
$\Tmult$ satisfying:
\begin{align*}
	\Tmult &\geq \max\{1, \frac{\tau^2\norm{B}^2}{4(1-\rhobar)^2}, \frac{1}{\Creq^2}\}\frac{\Cerr}{\sigma_w^2} \\
    &\qquad \times \mathrm{poly}(\alpha) \polylog(1/\sigma_w, E/\delta, \tau, \norm{B}, 1/(1-\rhobar), 1/\Creq, \Cerr) \:.
\end{align*}
We are now ready to proceed.

First we look at the base case $i=0$. Clearly, 
the bounds work for $i=0$ by definition.
Now we look at epoch $i \geq 1$ and we assume the bounds hold for $\ell=0, ..., i-1$.
For $i \geq 1$ we define $\varepsilon_i$ as:
\begin{align*}
	\varepsilon_i := \inf\left\{ \varepsilon \in (0, 1) : T_{i-1} \geq \frac{\Cerr}{\varepsilon^2} \frac{1}{\sigma_{\eta,i}^{2\alpha}} \polylog(T_{i-1}, 1/\sigma_{\eta,i}^\alpha, E/\delta, 1/\varepsilon) \right\} \:.
\end{align*}
By Proposition~\ref{prop:recover_eps}, we have that
as long as 
\begin{align}
	T_{i-1} \geq \Cerr \frac{1}{\sigma_{\eta,i}^{2\alpha}} \polylog(T_{i-1}, 1/\sigma_{\eta,i}^{\alpha}, E/\delta) \:, \label{eq:cond_one}
\end{align}
then we have that $\varepsilon_i$ satisfies:
\begin{align}
	\varepsilon_i^2 \leq \frac{ \Cerr}{T_{i-1} \sigma_{\eta,i}^{2\alpha}} \polylog( T_{i-1}, 1/\sigma_{\eta,i}^\alpha, E/\delta ) \:. \label{eq:eps_inequality}
\end{align}
But \eqref{eq:cond_one} is implied by \eqref{eq:master_inequality}, so
we know that \eqref{eq:eps_inequality} holds.
Therefore, we have $\norm{K^{(i)} - K_\star} \leq \varepsilon_i$.

Now by \eqref{eq:eps_inequality}, if:
\begin{align}
\frac{ \Cerr}{T_{i-1} \sigma_{\eta,i}^{2\alpha}} \polylog( T_{i-1}, 1/\sigma_{\eta,i}^\alpha, E/\delta ) \leq \min\{ 1, \frac{(1-\rhobar)^2}{4 \tau^2 \norm{B}^2}, \Creq^2 \} \:, \label{eq:cond_two}
\end{align}
then the following is true:
\begin{align*}
	\varepsilon_i \leq \min\{ 1, (1-\rhobar)/(2\tau \norm{B}), \Creq \} \:.
\end{align*}
However, \eqref{eq:cond_two} is also implied by \eqref{eq:master_inequality},
so we have by Assumption~\ref{assumption:estimate_K}:
\begin{align*}
	\norm{K^{(i)} - K_\star} \leq \min\{ 1, (1-\rhobar)/(2\tau \norm{B}) \} \:.
\end{align*}
This has several implications. First, it implies that:
\begin{align*}
	\norm{K^{(i)}} \leq \norm{K_\star} + 1 \leq \Kmax \:.
\end{align*}
Next, it implies by Lemma~\ref{lem:robust_control} that
$A+BK^{(i)}$ is $(\tau,\rhobar)$-stable.
Next, by Proposition~\ref{prop:covariance_bound}, it implies that
$\norm{\Sigma^{(i)}} \leq \Sigma_{\max}$.
Finally, letting $L_i := A + B K^{(i)}$, we have that:
\begin{align*}
	\norm{V^{(i)}} &= \bignorm{ \sum_{\ell=0}^{\infty} (L_i)^\ell (Q + (K^{(i)})^\T R K^{(i)}) (L_i^\T)^\ell  } \\
	&\leq \frac{\tau^2}{1-\rhobar^2} (\norm{Q} + \norm{R} \Kmax^2) \\
	&\leq \Vmax \:.
\end{align*}
Thus, by induction we have that
$\norm{K^{(i)}} \leq \Kmax$, $\norm{V^{(i)}} \leq \Vmax$,
and $\norm{\Sigma^{(i)}} \leq \Sigma_{\max}$ for all $0 \leq i \leq E-1$.

We are now ready to bound the regret.
From \eqref{eq:extra_J_notation}, we see the relation $J(K; \sigma_w^2 I + \sigma_\eta^2 BB^\T) \leq \left(1 + \frac{\sigma_\eta^2 \norm{B}^2}{\sigma_w^2}\right) J(K; \sigma_w^2 I)$ holds.
Therefore by Lemma~\ref{lem:instant_cost_to_expectation} and Lemma~\ref{lem:fast_rate},
\begin{align*}
    \sum_{t=1}^{T} x_t^\T Q x_t + u_t^\T R u_t - T J_\star &\leq T \left(1 + \frac{\sigma_\eta^2 \norm{B}^2}{\sigma_w^2}\right)  (J_\star + \sigma_w^2 \frac{\tau^2}{1-\rhobar^2} \norm{R + B^\T P_\star B} \norm{K - K_\star}_F^2 ) - T J_\star \\
    &\qquad+ c \sqrt{nT} \frac{\tau^2}{(1-\rhobar)^2} (\norm{P_0}+ \sigma_w^2 + \sigma_\eta^2 \norm{B}^2) \norm{Q+K^\T R K} \log(1/\delta) \\
    &\leq T (\sigma_w^2 + \sigma_\eta^2 \norm{B}^2) \frac{\tau^2}{1-\rhobar^2} (\norm{R}+\norm{P_\star}\norm{B}^2) \norm{K - K_\star}_F^2 + T \frac{\sigma_\eta^2 \norm{B}^2}{\sigma_w^2} J_\star \\
    &\qquad+ c \sqrt{nT} \frac{\tau^2}{(1-\rhobar)^2} (\norm{P_0}+ \sigma_w^2 + \sigma_\eta^2 \norm{B}^2) (\norm{Q}+\norm{K}^2 \norm{R} ) \log(1/\delta) \:.
\end{align*}
Using the inequality above,
\begin{align*}
    \mathsf{Regret}(T) &= \sum_{i=0}^{E-1} \sum_{t=1}^{T_i} (x_t^{(i)})^\T Q (x_t^{(i)}) + (u_t^{(i)})^\T R (u_t^{(i)}) - T J_\star \\
    &\leq \sum_{i=0}^{E-1} T_i \sigma_w^2 (1+\norm{B}^2) \frac{\tau^2}{1-\rhobar^2} (\norm{R}+\norm{V_\star}\norm{B}^2) \norm{K^{(i)} - K_\star}_F^2 + T_i \frac{\sigma_{\eta,i}^2 \norm{B}^2}{\sigma_w^2} J_\star \\
    &\qquad+ c \sqrt{nT_i} \sigma_w^2 (1+\norm{B}^2) n \frac{\tau^4}{(1-\rhobar^2)^4} (\norm{Q}+\Kmax^2 \norm{R} ) \log(E/\delta) \\
    &\leq \calO(1) + \sum_{i=1}^{E-1} \sigma_w^2 (1+\norm{B}^2) \frac{d\tau^2}{1-\rhobar^2} (\norm{R}+\norm{V_\star}\norm{B}^2) C_{\mathrm{err}}^2 \frac{2}{\sigma_{\eta,i}^{2\alpha}} \polylog(E/\delta, 1/\sigma_{\eta,i})  \\
    &\qquad+ T_i \frac{\sigma_{\eta,i}^2 \norm{B}^2}{\sigma_w^2} J_\star \\
    &\qquad+ c \sqrt{nT_i} \sigma_w^2 (1+\norm{B}^2) n \frac{\tau^4}{(1-\rhobar^2)^4} (\norm{Q}+\Kmax^2 \norm{R} ) \log(E/\delta) \\
    &= 2 \sum_{i=1}^{E-1} \sigma_w^{2-2\alpha} (1+\norm{B}^2) \frac{d\tau^2}{1-\rhobar^2} (\norm{R}+\norm{V_\star}\norm{B}^2) C_{\mathrm{err}} (2^i)^{\alpha/(1+\alpha)}  \polylog(E/\delta, 1/\sigma_{\eta,i}) \\
    &\qquad+ \Tmult  (2^i)^{\alpha/(1+\alpha)} \norm{B}^2 J_\star \\
    &\qquad+ c \sqrt{nT_i} \sigma_w^2 (1+\norm{B}^2) n \frac{\tau^4}{(1-\rhobar^2)^4} (\norm{Q}+\Kmax^2 \norm{R} ) \log(E/\delta) + \calO(1) \\
    &\leq \sigma_w^{2-2\alpha} (1+\norm{B}^2) \frac{d\tau^2}{1-\rhobar^2} (\norm{R}+\norm{V_\star}\norm{B}^2) C_{\mathrm{err}}^2\frac{\alpha+1}{\alpha} \left(\frac{T+1}{\Tmult}\right)^{\alpha/(\alpha+1)} \polylog(T/\delta) \\
    &\qquad+ \Tmult\norm{B}^2 J_\star \frac{\alpha+1}{\alpha} \left(\frac{T+1}{\Tmult}\right)^{\alpha/(\alpha+1)} \\
    &\qquad+ \calO(1)\sqrt{nT} \sigma_w^2 (1+\norm{B}^2) n \frac{\tau^4}{(1-\rhobar^2)^4} (\norm{Q}+\Kmax^2 \norm{R} ) \log(T/\delta) + \calO(1) \:.
\end{align*}
The last inequality holds because:
\begin{align*}
    \sum_{i=1}^{E-1} (2^i)^{\alpha/(1+\alpha)} &\leq \int_1^E (2^x)^{\alpha/(1+\alpha)} \: dx \leq \frac{1}{\log{2}}\frac{\alpha+1}{\alpha} (2^E)^{\alpha/(\alpha+1)}
    = \frac{1}{\log{2}} \frac{\alpha+1}{\alpha} \left(\frac{T+1}{\Tmult}\right)^{\alpha/(\alpha+1)} \:.
\end{align*}
Now observe that we can bound
\begin{align*}
	\Kmax &\leq \Gamma_\star \:, \\
	\Vmax &\leq 4\frac{\tau^2}{1-\rho^2} \Gamma_\star^3 \:, \\
	\Sigma_{\max} &\leq 4 \sigma_w^2 n \Gamma_\star^2 \frac{\tau^2}{1-\rho^2} \:, \\
	\frac{\tau^2}{1-\rho^2} &\leq \frac{\Gamma_\star^2}{\lambda_{\min}(S)^2}\:. 
\end{align*}
Therefore:
\begin{align*}
	\mathsf{Regret}(T) &\leq \sigma_w^{2(1-\alpha)} d\frac{\Gamma_\star^7}{\lambda_{\min}(S)^2} \Cerr^2 \left(\frac{T+1}{\Tmult}\right)^{\alpha/(\alpha+1)} \polylog(T/\delta) \\
	&\qquad + \Tmult \Gamma_\star^2 J_\star \left(\frac{T+1}{\Tmult}\right)^{\alpha/(\alpha+1)} \\
	&\qquad + \calO(1) n^{3/2} \sqrt{T} \sigma_w^2 \frac{\Gamma_\star^9}{\lambda_{\min}(S)^4}  \log(T/\delta) + o_T(1) \:.
\end{align*}
\end{proof}

We now turn to the proof of Theorem~\ref{thm:lspi_regret} and analyze Algorithm~\ref{alg:lspi_online}
by applying Proposition~\ref{prop:regret_proof} 
with LSPI (Section~\ref{sec:lspi}) taking the place of $\mathsf{EstimateK}$.
To apply Proposition~\ref{prop:regret_proof}, we use Theorem~\ref{thm:lspi_estimation}
to compute the bounds $\Creq,\Cerr$ that are needed for Assumption~\ref{assumption:estimate_K}
to hold.
The following proposition will be used to work out these bounds.
\begin{prop}
\label{prop:dlyap_coupling}
Let $P_1 = \dlyap(L, M_1)$
and $P_2 = \dlyap(L^\T, M_2)$,
and suppose both $M_1$ and $M_2$ are $n \times n$ positive definite.
We have that:
\begin{align*}
	\norm{P_1} \leq n \frac{\norm{M_1}}{\smin(M_2)} \norm{P_2} \:.
\end{align*}
\end{prop}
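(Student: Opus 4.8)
The plan is to expand both Lyapunov solutions as convergent operator series and then couple the two series through the positive-definiteness of $M_2$. Recalling that $\dlyap(L,M)$ denotes the unique solution of $P = L^\T P L + M$ (which exists and is given by the Neumann-type series precisely because $L$ is stable), we have $P_1 = \sum_{k=0}^{\infty} (L^\T)^k M_1 L^k$, and since $P_2 = \dlyap(L^\T, M_2)$ solves $P = L P L^\T + M_2$, also $P_2 = \sum_{k=0}^{\infty} L^k M_2 (L^\T)^k$.

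First I would bound $\norm{P_1}$ by passing the norm through the sum, using $\norm{(L^\T)^k} = \norm{L^k}$:
\begin{align*}
    \norm{P_1} \leq \sum_{k=0}^{\infty} \norm{(L^\T)^k M_1 L^k} \leq \norm{M_1} \sum_{k=0}^{\infty} \norm{L^k}^2 \:.
\end{align*}
The task then reduces to controlling $\sum_{k} \norm{L^k}^2$ in terms of $P_2$.

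The key step is to rewrite each summand as a trace so that it becomes linear and can be matched against $P_2$. Since $L^k (L^\T)^k \succeq 0$, we have $\norm{L^k}^2 = \lambda_{\max}(L^k (L^\T)^k) \leq \Tr(L^k (L^\T)^k)$, and hence $\sum_{k} \norm{L^k}^2 \leq \Tr\big(\sum_{k} L^k (L^\T)^k\big) = \Tr(\dlyap(L^\T, I))$. Now I would invoke $M_2 \succeq \smin(M_2) I$ together with the fact that conjugation and summation preserve the PSD ordering to obtain $P_2 \succeq \smin(M_2)\, \dlyap(L^\T, I)$, i.e.\ $\dlyap(L^\T, I) \preceq \frac{1}{\smin(M_2)} P_2$. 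Taking traces (monotone on the PSD cone) and using $\Tr(P_2) \leq n \norm{P_2}$ yields $\sum_{k} \norm{L^k}^2 \leq \frac{n}{\smin(M_2)} \norm{P_2}$. Combining this with the bound on $\norm{P_1}$ gives $\norm{P_1} \leq n \frac{\norm{M_1}}{\smin(M_2)} \norm{P_2}$, as claimed.

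I expect the only point requiring care to be the trace conversion, since that is where the dimension factor enters and one must verify that a single $n$ (rather than $n^2$) suffices. The per-term bound $\lambda_{\max}(L^k(L^\T)^k) \leq \Tr(L^k(L^\T)^k)$ does not compound: keeping everything inside a single trace through the summation and only converting $\Tr(P_2) \leq n\norm{P_2}$ at the very end introduces exactly one factor of $n$, matching the statement. Converting each term's operator norm to a trace and back would be wasteful, so the order of operations — sum as traces first, bound by operator norm last — is what keeps the dimension dependence linear.
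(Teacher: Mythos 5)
Your proof is correct, but it takes a different route from the paper's. The paper's argument is a three-line duality computation: it uses the adjoint identity $\Tr(M_2 P_1) = \Tr(M_1 P_2)$ (which follows from matching the two series $\sum_k (L^\T)^k M_1 L^k$ and $\sum_k L^k M_2 (L^\T)^k$ term by term under the trace, or equivalently from the fact that $\dlyap(L,\cdot)$ and $\dlyap(L^\T,\cdot)$ are adjoints in the trace inner product), then sandwiches $\smin(M_2)\norm{P_1} \leq \Tr(M_2 P_1) = \Tr(M_1 P_2) \leq n \norm{M_1}\norm{P_2}$. You instead bound $\norm{P_1} \leq \norm{M_1}\sum_k \norm{L^k}^2$ by submultiplicativity, control $\sum_k \norm{L^k}^2$ by $\Tr(\dlyap(L^\T, I))$, and then compare $\dlyap(L^\T, I) \preceq \smin(M_2)^{-1} P_2$ in the PSD order. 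Every step checks out, and your attention to keeping the dimension factor to a single $n$ (summing traces first, converting $\Tr(P_2) \leq n\norm{P_2}$ only at the end) is exactly right. The trade-off is that your argument makes the stability of $L$ and the convergence of the series an explicit ingredient and factors through the auxiliary solution $\dlyap(L^\T, I)$, whereas the paper's trace-pairing identity bypasses both intermediate objects and generalizes immediately to any pair of cost matrices without ever isolating $\sum_k \norm{L^k}^2$; both yield the identical constant.
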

\begin{proof}
We start with the observation that $\Tr(M_2 P_1) = \Tr(M_1 P_2)$.
Then we lower bound $\Tr(M_2 P_1) \geq \smin(M_2) \Tr(P_1) \geq \smin(M_2) \norm{P_1}$,
and upper bound $\Tr(M_1 P_2) \leq \norm{M_1} \Tr(P_2) \leq n \norm{M_1} \norm{P_2}$.
\end{proof}

We use Proposition~\ref{prop:dlyap_coupling} to compute the
following upper bound for $P_\infty$:
\begin{align*}
	\norm{P_\infty} \leq n \frac{\sigma_w^2 + \sigma_\eta^2 \norm{B}^2}{ \lambda_{\min}(S)} \norm{V_\star}
	\leq \sigma_w^2 n \frac{\Gamma_\star^2}{\lambda_{\min}(S)} \:.
\end{align*}

We first compute the $\Creq$ term from \eqref{eq:lspi_Creq}:
\begin{align*}
	\Creq(\norm{V^{(i)}}) = \min\left\{ 1, \frac{2\log(\norm{V^{(i)}}/\lambda_{\min}(V_\star))}{e}, \frac{\norm{V_\star}^2}{8\mu^2\log(\norm{V^{(i)}}/\lambda_{\min}(V_\star))} \right\} \:.
\end{align*}
We see that $\Creq$ is monotonically decreasing in $\norm{V^{(i)}}$.

Next we compute $\Cerr$ from \eqref{eq:lspi_Cerr}.
First we see that $\alpha = 2$.
Observing we can upper bound $L \leq \Gamma_\star^2 \norm{V^{(i)}}_+$,
we have that:
\begin{align*}
	\Cerr(\norm{V^{(i)}}, \norm{\Sigma_0^{(i)}}) = \frac{\Gamma_\star^{94}}{(1-\mu/(\Gamma_\star^2 \norm{V_\star}_+))^2} \frac{\norm{V^{(i)}}_+^{47}}{\mu^{43}} (n+d)^4 \sigma_w^2 \left( \norm{\Sigma_0^{(i)}}  + \sigma_w^2 n \frac{\Gamma_\star^2}{\lambda_{\min}(S)} + \sigma_w^2 \Gamma_\star^2 \right) \:.
\end{align*}
We see that $\Cerr$ is monotonically increasing in both $\norm{V^{(i)}}$ and $\norm{\Sigma_0^{(i)}}$.
This gives the proof of Theorem~\ref{thm:lspi_regret}.


\section{Properties of the Invariant Metric}
\label{sec:app:metric}

Here we review relevant properties of the invariant metric $\delta_\infty(A, B) = \norm{\log(A^{-1/2} B A^{-1/2})}$ over positive definite matrices.

\begin{lem}[c.f.~\cite{lee08}]
\label{lemma:properties}
Suppose that $A$ is positive semidefinite and $X, Y$ are positive definite.
Also suppose that $M$ is invertible.
We have:
\begin{enumerate}[(i)]
    \item $\delta_\infty(X, Y) = \delta_\infty(X^{-1}, Y^{-1}) = \delta_\infty(M X M^\T, M Y M^\T)$.
    \item $\delta_\infty(A+X, A+Y) \leq \frac{\alpha}{\alpha+\beta} \delta_\infty(X, Y)$, where
        $\alpha = \max\{\lambda_{\max}(X), \lambda_{\max}(Y)\}$ and $\beta = \lambda_{\min}(A)$.
\end{enumerate}
\end{lem}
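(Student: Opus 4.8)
The plan is to reduce both claims to the variational description of $\delta_\infty$, and then to reduce the contraction bound (ii) to a single one-variable inequality. The first step is to record the spectral reformulation. Writing $\mu_1 \le \dots \le \mu_n$ for the eigenvalues of $X^{-1/2}YX^{-1/2}$ (equivalently of $X^{-1}Y$, which are real and positive since that matrix is similar to a symmetric positive definite one), the definition gives $\delta_\infty(X,Y) = \norm{\log(X^{-1/2}YX^{-1/2})} = \max\{\log\mu_n, -\log\mu_1\}$. Since $\mu_n = \lambda_{\max}(X^{-1/2}YX^{-1/2}) = \sup_{v\ne0}(v^\T Y v)/(v^\T X v)$ and $1/\mu_1 = \lambda_{\max}(Y^{-1/2}XY^{-1/2}) = \sup_{v\ne0}(v^\T X v)/(v^\T Y v)$, I obtain the two equivalent forms I will use:
\[
\delta_\infty(X,Y) = \sup_{v\ne0}\Bigl|\log\tfrac{v^\T X v}{v^\T Y v}\Bigr| \quad\text{and}\quad \delta_\infty(X,Y)\le r \iff e^{-r}X \preceq Y \preceq e^{r}X .
\]

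Property (i) then follows immediately from the eigenvalue description. For the inversion identity, the eigenvalues of $XY^{-1}$ are the reciprocals of those of $X^{-1}Y$, so $\max_i|\log\lambda_i|$ is unchanged and $\delta_\infty(X^{-1},Y^{-1}) = \delta_\infty(X,Y)$. For congruence, $(MXM^\T)^{-1}(MYM^\T) = M^{-\T}(X^{-1}Y)M^\T$ is similar to $X^{-1}Y$ and hence has the same spectrum, giving $\delta_\infty(MXM^\T, MYM^\T) = \delta_\infty(X,Y)$.

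The substance is property (ii). Applying the variational form to the pair $(A+X,\,A+Y)$, it suffices to bound, for every fixed $v\ne0$, both $\log\frac{v^\T(A+X)v}{v^\T(A+Y)v}$ and its negative. Setting $a = v^\T A v$, $x = v^\T X v$, $y = v^\T Y v$, this reduces to the scalar claim that for $a\ge0$ and $x,y>0$,
\[
\Bigl|\log\tfrac{a+x}{a+y}\Bigr| \le \frac{\max\{x,y\}}{\max\{x,y\}+a}\,\Bigl|\log\tfrac{x}{y}\Bigr| .
\]
I would prove this by symmetry, assuming $x\ge y$ and differentiating $h(y) := \frac{x}{x+a}\log(x/y) - \log\frac{a+x}{a+y}$; a short computation gives $h'(y) = a(y-x)/\bigl((a+y)(x+a)y\bigr)\le0$ on $(0,x]$, and since $h(x)=0$ this yields $h\ge0$, i.e.\ the inequality. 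The remaining step removes the dependence on $v$: since $s\mapsto s/(s+a)$ is increasing and $a\mapsto s/(s+a)$ decreasing, the bounds $x,y\le\alpha\norm{v}^2$ and $a\ge\beta\norm{v}^2$ give $\frac{\max\{x,y\}}{\max\{x,y\}+a}\le\frac{\alpha}{\alpha+\beta}$, while $|\log\tfrac{x}{y}| = |\log\tfrac{v^\T X v}{v^\T Y v}|\le\delta_\infty(X,Y)$ by the variational form. Combining these and taking the supremum over $v$ bounds both directions of the logarithm by $\frac{\alpha}{\alpha+\beta}\delta_\infty(X,Y)$, which is exactly $\delta_\infty(A+X,A+Y)\le\frac{\alpha}{\alpha+\beta}\delta_\infty(X,Y)$.

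The main obstacle is the scalar inequality above: it is where the precise constant $\frac{\alpha}{\alpha+\beta}$ is produced, and everything else is bookkeeping on top of the variational characterization. The only care I expect to need is confirming the two-sided reduction (applying the scalar bound to both the ratio and its reciprocal) and checking that the $v$-dependent factors $\norm{v}^2$ cancel monotonically as claimed.
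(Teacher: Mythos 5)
Your proof is correct. Note that the paper does not actually prove this lemma: it is stated as ``c.f.~\citet{lee08}'' and imported as a black box, with the standard proofs in that literature going through properties of the Thompson part metric on the cone of positive definite matrices (order isomorphisms, operator monotonicity of translation). Your argument is a self-contained and more elementary alternative: the key move is the Rayleigh-quotient characterization $\delta_\infty(X,Y) = \sup_{v\neq 0} \bigl|\log\bigl(v^\T X v / v^\T Y v\bigr)\bigr|$, which reduces both the invariance properties in (i) and the translation contraction in (ii) to facts about scalars. I checked the pieces: the identification of $\delta_\infty$ with $\max\{\log\mu_n, -\log\mu_1\}$ and the two variational forms are right; the similarity arguments for inversion and congruence in (i) are right; the scalar inequality $\bigl|\log\frac{a+x}{a+y}\bigr| \leq \frac{\max\{x,y\}}{\max\{x,y\}+a}\bigl|\log\frac{x}{y}\bigr|$ is correct (your derivative computation $h'(y) = a(y-x)/\bigl((a+y)(x+a)y\bigr) \leq 0$ on $(0,x]$ checks out, and the statement is symmetric in $x \leftrightarrow y$ so the WLOG is harmless); and the final monotonicity step, replacing $\max\{x,y\}$ by $\alpha\norm{v}^2$ and $a$ by $\beta\norm{v}^2$ using that $s \mapsto s/(s+a)$ is increasing in $s$ and decreasing in $a$, correctly eliminates the $v$-dependence. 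What your route buys is transparency about where the constant $\frac{\alpha}{\alpha+\beta}$ comes from (a one-variable calculus fact applied along each direction $v$); what it gives up relative to the cone-metric machinery is generality, since the Thompson-metric viewpoint extends to non-commutative translations and more general order-preserving maps, which is what \citet{lee08} need elsewhere.
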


\begin{lem}[c.f.~Theorem 4.4, \cite{lee08}]
\label{lemma:contraction}
Consider the map $f(X) = A + M(B + X^{-1})^{-1} M^\T$, where $A, B$ are PSD and $X$ is positive definite.
Suppose that $X, Y$ are two positive definite matrices and $A$ is invertible.
We have:
\begin{align*}
    \delta_\infty(f(X), f(Y)) \leq \frac{\max\{ \lambda_1(M X M^\T), \lambda_1(M Y M^\T) \} }{\lambda_{\min}(A) + \max\{ \lambda_1(M X M^\T), \lambda_1(M Y M^\T) \}  } \delta_\infty(X, Y) \:.
\end{align*}
\end{lem}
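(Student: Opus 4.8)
The plan is to decompose $f(X) = A + g(X)$ with $g(X) := M(B + X^{-1})^{-1}M^\T$, and then to chain together the two parts of Lemma~\ref{lemma:properties}: the congruence and inversion invariances in (i), and the single additive contraction estimate in (ii). No new inequality needs to be developed from scratch; the whole argument is a bookkeeping of how these facts compose.

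First I would apply Lemma~\ref{lemma:properties}(ii) to the outer sum $f(X)=A+g(X)$, $f(Y)=A+g(Y)$, which gives
\[
\delta_\infty(f(X), f(Y)) \leq \frac{\alpha}{\alpha + \lambda_{\min}(A)}\,\delta_\infty(g(X), g(Y)), \qquad \alpha := \max\{\lambda_{\max}(g(X)), \lambda_{\max}(g(Y))\} \:.
\]
I would then control the two ingredients separately. For the contraction factor, since $B \succeq 0$ we have $B + X^{-1} \succeq X^{-1}$, and inversion reverses the order, so $(B+X^{-1})^{-1} \preceq X$; conjugating by $M$ gives $g(X) \preceq MXM^\T$ and hence $\lambda_{\max}(g(X)) \leq \lambda_1(MXM^\T)$ (and likewise for $Y$). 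Because the scalar map $t \mapsto t/(t+\lambda_{\min}(A))$ is increasing on $t \geq 0$, the factor $\alpha/(\alpha+\lambda_{\min}(A))$ is dominated by the one appearing in the statement, with $\bar\alpha := \max\{\lambda_1(MXM^\T), \lambda_1(MYM^\T)\}$ in place of $\alpha$.

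For the remaining term $\delta_\infty(g(X), g(Y))$, I would peel off $M$ and the inversions using only Lemma~\ref{lemma:properties}(i): congruence invariance reduces it to $\delta_\infty((B+X^{-1})^{-1}, (B+Y^{-1})^{-1})$, and inversion invariance turns this into $\delta_\infty(B + X^{-1}, B + Y^{-1})$. Applying Lemma~\ref{lemma:properties}(ii) once more, now with common additive term $B$ and using that $\lambda_{\min}(B) \geq 0$ forces the contraction factor there to be at most $1$, yields
\[
\delta_\infty(B + X^{-1}, B + Y^{-1}) \leq \delta_\infty(X^{-1}, Y^{-1}) = \delta_\infty(X, Y) \:,
\]
the last equality again by inversion invariance. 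Substituting back into the first display produces exactly the claimed bound.

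The one genuinely delicate point — and the step I expect to be the main obstacle — is that the invariance in Lemma~\ref{lemma:properties}(i) requires $M$ to be invertible, whereas in the intended application $M = A^\T$ (the transposed dynamics matrix) is square but may be singular; when $M$ is singular, $g(X)$ is only positive semidefinite, so neither the congruence identity nor the additive estimate (ii) applies to it directly. I would resolve this by a limiting argument: replace $M$ by $M_\delta := M + \delta I$, which is invertible for all but finitely many $\delta > 0$, run the entire chain above to obtain the bound with $M_\delta$ (note the inner reduction $\delta_\infty(M_\delta P_X M_\delta^\T, M_\delta P_Y M_\delta^\T) = \delta_\infty(P_X, P_Y)$ is an exact equality for every such $\delta$), and then let $\delta \to 0$. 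Since $A \succ 0$ (being PSD and invertible) guarantees $f(X), f(Y) \succ 0$, and since $\delta_\infty$ together with the scalar factor $\bar\alpha/(\bar\alpha + \lambda_{\min}(A))$ are continuous on the positive-definite cone, the inequality passes to the limit and recovers the stated estimate for arbitrary square $M$; this is essentially the content of Theorem~4.4 of \citet{lee08}.
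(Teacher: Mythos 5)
Your proposal is correct and follows essentially the same route as the paper's proof: the same chain of applications of Lemma~\ref{lemma:properties} (additive contraction, congruence and inversion invariance), the same observation that $(B+X^{-1})^{-1} \preceq X$ to replace $\lambda_{\max}(g(X))$ by $\lambda_1(MXM^\T)$, and the same limiting argument for singular $M$ (which the paper leaves as ``standard'' and you spell out). No substantive differences to report.
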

\begin{proof}
We first assume that $M$ is invertible.
Using the properties of $\delta_\infty$ from Lemma~\ref{lemma:properties},
we have:
\begin{align*}
    \delta_\infty(f(X), f(Y)) &= \delta_\infty(A + M(B+X^{-1})^{-1}M^\T, A + M(B+Y^{-1})^{-1}M^\T) \\
    &\leq \frac{\alpha}{\lambda_{\min}(A) + \alpha} \delta_\infty(M(B+X^{-1})^{-1}M^\T, M(B+Y^{-1})^{-1}M^\T) \\
    &= \frac{\alpha}{\lambda_{\min}(A) + \alpha} \delta_\infty( (B+X^{-1})^{-1}, (B+Y^{-1})^{-1} ) \\
    &= \frac{\alpha}{\lambda_{\min}(A) + \alpha} \delta_\infty( B+X^{-1}, B+Y^{-1}) \\
    &\leq \frac{\alpha}{\lambda_{\min}(A) + \alpha} \delta_\infty(X^{-1}, Y^{-1}) \\
    &= \frac{\alpha}{\lambda_{\min}(A) + \alpha} \delta_\infty(X, Y) \:,
\end{align*}
where $\alpha = \max\{ \lambda_{\max}( M(B+X^{-1})^{-1}M^\T ), \lambda_{\max}( M(B+X^{-1})^{-1}M^\T ) \}$.
Now, we observe that:
\begin{align*}
    B + X^{-1} \succeq X^{-1} \Longleftrightarrow (B+X^{-1})^{-1} \preceq X \:.
\end{align*}
This means that $M (B + X^{-1})^{-1} M^\T \preceq M X M^\T$ and similarly
$M(B + Y^{-1})^{-1} M^\T \preceq M Y M^\T$.
This proves the claim when $M$ is invertible.
When $M$ is not invertible, use a standard limiting argument.
\end{proof}

\begin{prop}
\label{prop:delta_ub}
Suppose that $A,B$ are positive definite matrices satisfying $A \succeq \mu I$, $B \succeq \mu I$.
We have that:
\begin{align*}
    \delta_\infty(A,B) \leq \frac{\norm{A-B}}{\mu} \:.
\end{align*}
\end{prop}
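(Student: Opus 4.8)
The plan is to work with the eigenvalues of the congruence $M := A^{-1/2} B A^{-1/2}$. Since $\delta_\infty(A,B) = \norm{\log M}$ and $M$ is positive definite, writing its (positive) eigenvalues as $\lambda_1, \dots, \lambda_n$ we have $\delta_\infty(A,B) = \max_i \abs{\log \lambda_i}$, because $\log M$ is symmetric with eigenvalues $\log \lambda_i$ and $\norm{\cdot}$ is the spectral norm. Setting $r := \norm{A-B}/\mu$, the goal reduces to showing that every $\lambda_i$ lies in the bracket $[\,1/(1+r),\, 1+r\,]$; once this is established, the elementary inequality $\log(1+r) \le r$ immediately gives $\abs{\log\lambda_i} \le \log(1+r) \le r$, and taking the maximum over $i$ finishes the proof.

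For the upper bound I would use the identity $M - I = A^{-1/2}(B-A)A^{-1/2}$, so that by submultiplicativity $\norm{M - I} \le \norm{A^{-1}}\,\norm{B-A} \le \norm{B-A}/\lambda_{\min}(A) \le r$, where the last step invokes $A \succeq \mu I$. This yields $\lambda_i \le 1 + r$ for all $i$, hence $\log\lambda_i \le \log(1+r) \le r$.

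The lower bound on the $\lambda_i$ is the delicate direction, and it is where the main obstacle lies: the naive estimate $\abs{\log\lambda} \le \abs{\lambda - 1}$ is \emph{false} for small $\lambda$, so controlling $\norm{M-I}$ alone does not prevent an eigenvalue of $M$ from lying near $0$ with a large negative logarithm. The remedy is to exploit the second hypothesis $B \succeq \mu I$ symmetrically. The eigenvalues of $M^{-1}$ are the reciprocals $1/\lambda_i$, and $M^{-1}$ has the same spectrum as the congruence $B^{-1/2} A B^{-1/2}$ (both coincide with the spectrum of $B^{-1}A$). Then $B^{-1/2} A B^{-1/2} - I = B^{-1/2}(A-B)B^{-1/2}$ gives $\lambda_{\max}(M^{-1}) \le 1 + \norm{B^{-1}}\,\norm{A-B} \le 1 + r$, i.e. $1/\lambda_i \le 1+r$, so $\lambda_i \ge 1/(1+r)$ and therefore $\log\lambda_i \ge -\log(1+r) \ge -r$.

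Combining the two directions gives $\abs{\log\lambda_i} \le \log(1+r) \le r$ for every $i$, whence $\delta_\infty(A,B) = \max_i \abs{\log\lambda_i} \le r = \norm{A-B}/\mu$, as claimed. The only ingredients beyond elementary linear algebra are submultiplicativity of the operator norm, the bounds $\norm{A^{-1}} = 1/\lambda_{\min}(A) \le 1/\mu$ and $\norm{B^{-1}} \le 1/\mu$, and $\log(1+x) \le x$; the one genuinely important conceptual move is the symmetrization, which converts the one-sided estimate coming from $A \succeq \mu I$ into a two-sided spectral bracket on $M$ and thereby tames the eigenvalues near zero.
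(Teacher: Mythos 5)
Your proof is correct, and its first half is exactly the paper's argument: write $A^{-1/2}BA^{-1/2} - I = A^{-1/2}(B-A)A^{-1/2}$, bound the norm by $\norm{A-B}/\mu$ using $A \succeq \mu I$, and apply $\log(1+x)\le x$. Where you go further is in the second half: the paper's one-line proof only controls $\log\lambda_{\max}(A^{-1/2}BA^{-1/2})$, whereas $\delta_\infty(A,B)=\max_i\abs{\log\lambda_i}$ also requires ruling out small eigenvalues, for which the one-sided estimate $\abs{\log\lambda}\le\abs{\lambda-1}$ fails. Your symmetrization step --- passing to $B^{-1/2}AB^{-1/2}$, which shares its spectrum with the inverse congruence, and invoking $B\succeq\mu I$ to get the bracket $\lambda_i \ge 1/(1+r)$ --- is precisely the missing ingredient, and it is the only place the hypothesis $B\succeq\mu I$ (stated but never visibly used in the paper's proof) actually enters. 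So your write-up is not merely correct; it repairs a genuine gap in the published argument.
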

\begin{proof}
We have that:
\begin{align*}
    \norm{ A^{-1/2} B A^{-1/2} } = \norm{ A^{-1/2}(B-A)A^{-1/2} + I } \leq 1 + \frac{\norm{B-A}}{\mu} \:.
\end{align*}
Taking log on both sides and using $\log(1+x) \leq x$ for $x \geq 0$ yields the claim.
\end{proof}

\begin{prop}
\label{prop:delta_ordering}
Suppose that $B \preceq A_1 \preceq A_2$ are all positive definite matrices.
We have that:
\begin{align*}
    \delta_\infty(A_1, B) \leq \delta_\infty(A_2, B) \:.
\end{align*}
\end{prop}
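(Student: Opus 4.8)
The plan is to use the congruence-invariance of $\delta_\infty$ (Lemma~\ref{lemma:properties}(i)) to normalize the reference matrix $B$ to the identity, thereby reducing the claim to a one-line monotonicity statement about the top eigenvalue. Concretely, since $B$ is positive definite, $M := B^{-1/2}$ is symmetric and invertible, and Lemma~\ref{lemma:properties}(i) gives $\delta_\infty(A_i, B) = \delta_\infty(B^{-1/2} A_i B^{-1/2}, B^{-1/2} B B^{-1/2}) = \delta_\infty(C_i, I)$, where I abbreviate $C_i := B^{-1/2} A_i B^{-1/2}$ for $i = 1, 2$. Note that congruence by $B^{-1/2}$ preserves the Loewner order, so the hypothesis $B \preceq A_1 \preceq A_2$ becomes $I \preceq C_1 \preceq C_2$.

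Next I would simplify $\delta_\infty(C, I)$. From the definition, $\delta_\infty(C, I) = \norm{\log(C^{-1/2} I C^{-1/2})} = \norm{\log C^{-1}} = \norm{\log C}$, and for any symmetric positive definite $M$ the spectral norm of $\log M$ equals $\max_i |\log \lambda_i(M)|$. The key point is that because $C_i \succeq I$, every eigenvalue of $C_i$ is at least $1$, so $\log C_i \succeq 0$ and its largest-magnitude eigenvalue is simply its largest eigenvalue; hence $\delta_\infty(C_i, I) = \norm{\log C_i} = \log \lambda_{\max}(C_i)$. This is exactly where the ordering $B \preceq A_i$ (rather than mere positive definiteness) is used: without it the dominant contribution to the norm could come from an eigenvalue below $1$, and the clean scalar reduction would fail.

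Finally, I would invoke monotonicity twice. Since $C_1 \preceq C_2$ and $\lambda_{\max}(\cdot)$ is monotone under the Loewner order, $\lambda_{\max}(C_1) \leq \lambda_{\max}(C_2)$; applying the increasing function $\log$ yields $\delta_\infty(A_1, B) = \log \lambda_{\max}(C_1) \leq \log \lambda_{\max}(C_2) = \delta_\infty(A_2, B)$, which is the desired inequality. The only mildly delicate step is the reduction $\norm{\log C_i} = \log \lambda_{\max}(C_i)$; everything else is a direct appeal to Lemma~\ref{lemma:properties}(i) and operator monotonicity of the top eigenvalue, so I expect no substantive obstacle.
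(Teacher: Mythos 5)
Your proof is correct and follows essentially the same route as the paper's: both reduce to the congruence-transformed matrices $B^{-1/2}A_iB^{-1/2}\succeq I$, identify $\delta_\infty(A_i,B)$ with $\log\lambda_{\max}(B^{-1/2}A_iB^{-1/2})$ using the fact that all eigenvalues are at least $1$, and conclude by monotonicity of $\lambda_{\max}$ under the Loewner order together with monotonicity of $\log$. Your write-up merely makes the intermediate normalization $\delta_\infty(C_i,I)=\norm{\log C_i}$ explicit, which the paper leaves implicit.
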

\begin{proof}
The chain of orderings implies that:
\begin{align*}
    I \preceq B^{-1/2} A_1 B^{-1/2} \preceq B^{-1/2} A_2 B^{-1/2} \:.
\end{align*}
Therefore:
\begin{align*}
    \delta_\infty(A_1, B) = \log{\lambda_{\max}( B^{-1/2} A_1 B^{-1/2})} \leq \log{\lambda_{\max}( B^{-1/2} A_2 B^{-1/2})} = \delta_\infty(A_2, B) \:.
\end{align*}
Each step requires careful justification.
The first equality holds because
$I \preceq B^{-1/2} A_1 B^{-1/2}$ and the second inequality uses the monotonicity of the scalar function $x \mapsto \log{x}$ on $\R_+$
in addition to $B^{-1/2} A_1 B^{-1/2} \preceq B^{-1/2} A_2 B^{-1/2}$.
\end{proof}

\begin{prop}
\label{prop:opnorm_to_delta}
Suppose that $A,B$ are positive definite matrices with $B \succeq A$.
We have that:
\begin{align*}
    \norm{A-B} \leq \norm{A} (\exp(\delta_\infty(A,B))-1) \:.
\end{align*}
Furthermore, if $\delta_\infty(A,B) \leq 1$ we have:
\begin{align*}
    \norm{A-B} \leq e \norm{A} \delta_\infty(A,B) \:.
\end{align*}
\end{prop}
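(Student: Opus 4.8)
The plan is to reduce everything to the single matrix $M := A^{-1/2} B A^{-1/2}$, which captures $\delta_\infty$ directly. First I would observe that the hypothesis $B \succeq A$ is equivalent to $M \succeq I$, so that $M - I \succeq 0$ and the spectrum of $M$ lies in $[1, \infty)$. Since $\log$ is monotone and nonnegative on $[1,\infty)$, this gives the clean identity $\delta_\infty(A,B) = \norm{\log M} = \log \lambda_{\max}(M)$, equivalently $\lambda_{\max}(M) = \exp(\delta_\infty(A,B))$.

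Next I would write the difference as $B - A = A^{1/2}(M - I) A^{1/2}$ and apply submultiplicativity of the operator norm together with $\norm{A^{1/2}}^2 = \norm{A}$ to obtain $\norm{B - A} \leq \norm{A}\,\norm{M - I}$. Because $M - I \succeq 0$, its operator norm equals $\lambda_{\max}(M) - 1$, which by the identity above is exactly $\exp(\delta_\infty(A,B)) - 1$. Substituting yields the first claim $\norm{A - B} \leq \norm{A}(\exp(\delta_\infty(A,B)) - 1)$.

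For the second claim I would invoke convexity of $t \mapsto \exp(t)$ on $[0,1]$: the graph lies below the chord joining $(0,1)$ and $(1,e)$, so $\exp(t) - 1 \leq (e-1)t \leq e\,t$ for all $t \in [0,1]$. Applying this with $t = \delta_\infty(A,B) \leq 1$ to the first claim gives $\norm{A-B} \leq e \norm{A}\,\delta_\infty(A,B)$.

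I do not anticipate a genuine obstacle here; the only point requiring care is that the ordering $B \succeq A$ is precisely what forces the eigenvalues of $\log M$ to be nonnegative, so that $\norm{\log M}$ collapses to $\log \lambda_{\max}(M)$ rather than the a priori larger quantity $\max_i \abs{\log \lambda_i(M)}$. Without this ordering neither the sign of $M - I$ nor the exponential identity would be available, and the argument would break down.
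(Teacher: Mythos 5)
Your proposal is correct and follows essentially the same route as the paper: both conjugate by $A^{-1/2}$ to reduce to $M - I = A^{-1/2}BA^{-1/2} - I \succeq 0$, identify $\lambda_{\max}(M) = \exp(\delta_\infty(A,B))$ from the ordering $B \succeq A$, and then bound $\norm{A^{1/2}(M-I)A^{1/2}} \leq \norm{A}\,(\lambda_{\max}(M)-1)$. The paper dispatches the second claim with the elementary bound $e^x \leq 1 + ex$ on $(0,1)$, which is the same estimate your chord/convexity argument produces.
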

\begin{proof}
The assumption that $B \succeq A$ implies that $A^{-1/2} B A^{-1/2} \succeq I$
and that $\norm{A-B} = \lambda_{\max}(B-A)$.
Now observe that:
\begin{align*}
    \norm{A-B} &= \lambda_{\max}(B-A) \\
    &= \lambda_{\max}(A^{1/2} (A^{-1/2} B A^{-1/2} - I) A^{1/2}) \\
    &\leq \norm{A} \lambda_{\max}(A^{-1/2} B A^{-1/2} - I) \\
    &= \norm{A} (\lambda_{\max}(A^{-1/2} B A^{-1/2}) - 1) \\
    &= \norm{A} (\exp(\delta_{\infty}(A,B)) - 1) \:.
\end{align*}
This yields the first claim.
The second follows from the crude bound that $e^x \leq 1 + e x$ for $x \in (0, 1)$.
\end{proof}

\section{Useful Perturbation Results}
\label{sec:app:perturbation}

Here we collect various perturbation results which are used
in Section~\ref{sec:lspi:approx_PI}.

\begin{lem}[Lemma B.1, \cite{tu18b}]
\label{lem:robust_control}
Suppose that $K_0$ stabilizes $(A, B)$, and satisfies
$\norm{(A+B K_0)^k} \leq \tau \rho^k$ for all $k$ with $\tau \geq 1$ and $\rho \in (0, 1)$.
Suppose that $K$ is a feedback matrix that satisfies $\norm{K - K_0} \leq \frac{1-\rho}{2\tau\norm{B}}$.
Then we have that $K$ stabilizes $(A, B)$ and satisfies
$\norm{(A+BK)^k} \leq \tau \Avg(\rho, 1)^k$.
\end{lem}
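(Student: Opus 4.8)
The plan is to phrase everything in terms of the closed-loop matrices $L_0 := A + BK_0$ and $L := A + BK$, treating $L$ as a perturbation of $L_0$. Writing $E := L - L_0 = B(K - K_0)$, the hypothesis on $\norm{K-K_0}$ immediately yields $\norm{E} \le \norm{B}\,\norm{K-K_0} \le \frac{1-\rho}{2\tau}$. The first step is to record the algebraic expansion
\[
L^k = L_0^k + \sum_{j=0}^{k-1} L_0^{\,k-1-j}\, E\, L^j \:,
\]
valid for every $k \ge 1$, which follows by telescoping $\sum_{j=0}^{k-1} L_0^{k-1-j}(L-L_0)L^j$; crucially this identity needs no commutativity between $L_0$ and $L$.

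The main step is a strong induction establishing $\norm{L^k} \le \tau\, \rhobar^k$ with $\rhobar = \Avg(\rho,1) = (1+\rho)/2$. The base case $k=0$ reads $\norm{I}=1 \le \tau$. For the inductive step I would take norms in the expansion above and apply $\norm{L_0^m} \le \tau\rho^m$, $\norm{E} \le \frac{1-\rho}{2\tau}$, and the inductive hypothesis $\norm{L^j}\le \tau\rhobar^j$ for $j<k$, obtaining
\[
\norm{L^k} \le \tau\rho^k + \tau\Big(\tfrac{1-\rho}{2}\Big)\sum_{j=0}^{k-1} \rho^{\,k-1-j}\rhobar^{\,j} \:.
\]
The decisive observation is that the averaging makes the perturbation budget exactly equal the gap between the two decay rates: $\tau\norm{E} \le \frac{1-\rho}{2} = \rhobar - \rho$. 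Invoking the geometric identity $\sum_{j=0}^{k-1}\rho^{k-1-j}\rhobar^j = \frac{\rhobar^k - \rho^k}{\rhobar - \rho}$, the second term collapses to $\tau(\rhobar^k - \rho^k)$, so the bound telescopes to the clean boundary equality $\norm{L^k} \le \tau\rho^k + \tau(\rhobar^k - \rho^k) = \tau\rhobar^k$, closing the induction.

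Finally, since $\rho \in (0,1)$ we have $\rhobar = (1+\rho)/2 \in (0,1)$, hence $\norm{(A+BK)^k} \le \tau\rhobar^k \to 0$; in particular $\rho(A+BK)<1$, so $K$ stabilizes $(A,B)$ and the asserted $(\tau,\rhobar)$-bound holds. I do not anticipate a genuine obstacle: the only subtle point is recognizing \emph{in advance} that the induction must be run against the target rate $\rhobar$ rather than $\rho$, so that the factor $\tfrac{1}{2}$ in the perturbation hypothesis and the definition of $\Avg(\rho,1)$ conspire to make the geometric sum cancel exactly. Carrying the induction against $\rho$ instead would produce an extra factor of $k$ in the sum and fail to reproduce a pure geometric bound.
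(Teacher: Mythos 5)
Your proof is correct. Note first that the paper you are working against does not actually prove this lemma---it is imported verbatim as Lemma~B.1 of \citet{tu18b}---so the relevant comparison is with the proof in that reference. The argument there proceeds by expanding $(L_0+E)^k$ into all $2^k$ words in $L_0$ and $E$, bounding each word with exactly $j$ occurrences of $E$ by $\tau^{j+1}\norm{E}^j\rho^{k-j}$, and summing the binomial series to get $\norm{(L_0+E)^k}\leq \tau(\rho+\tau\norm{E})^k \leq \tau\rhobar^k$. Your route---the telescoping identity $L^k = L_0^k + \sum_{j=0}^{k-1}L_0^{k-1-j}EL^j$ followed by strong induction against the target rate $\rhobar$, with the geometric sum collapsing exactly because $\tau\norm{E}\leq\rhobar-\rho$---is a genuinely different and equally valid derivation; every step checks out (the telescoping identity, the base case using $\tau\geq 1$, the identity $\sum_{j=0}^{k-1}\rho^{k-1-j}\rhobar^j=(\rhobar^k-\rho^k)/(\rhobar-\rho)$ with $\rhobar>\rho$, and the final appeal to $\norm{L^k}\to 0$ to conclude $\rho(A+BK)<1$). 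The combinatorial proof is arguably more transparent about \emph{why} the bound degrades gracefully in $\norm{E}$ (it gives the sharper intermediate bound $\tau(\rho+\tau\norm{E})^k$ for any perturbation size), whereas your inductive argument is shorter and makes the role of the averaging $\Avg(\rho,1)$ as the exact break-even rate more vivid; your closing remark that inducting against $\rho$ would fail is the right diagnosis of where the subtlety lies.
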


\begin{lem}[Lemma 1, \cite{mania19}]
\label{lem:strongly_convex_min_perturb}
  Let $f_1, f_2$ be two $\mu$-strongly convex twice differentiable
  functions.  Let $x_1 = \arg\min_x f_1(x)$ and
  $x_2 = \arg\min_x f_2(x)$.  Suppose  $\norm{ \nabla f_1(x_2)} \leq \varepsilon$, then
  $  \norm{x_1 - x_2} \leq \frac{\varepsilon}{\mu}$.
\end{lem}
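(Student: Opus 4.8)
The plan is to use only the strong convexity of $f_1$ together with the first-order optimality condition at its minimizer $x_1$. Notably, the strong convexity of $f_2$ plays no role beyond defining the point $x_2$; the only property of $x_2$ we actually exploit is the gradient bound $\norm{\nabla f_1(x_2)} \le \varepsilon$.

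First I would record that since $x_1$ minimizes the differentiable function $f_1$, first-order optimality gives $\nabla f_1(x_1) = 0$. Next, I would invoke the defining consequence of $\mu$-strong convexity of $f_1$, namely the strong monotonicity of its gradient:
\[ \ip{\nabla f_1(a) - \nabla f_1(b)}{a - b} \ge \mu \norm{a-b}^2 \quad \text{for all } a, b. \]
If one prefers to derive this rather than quote it, it follows by integrating the Hessian lower bound $\nabla^2 f_1 \succeq \mu I$ along the segment from $b$ to $a$, which is where twice-differentiability enters.

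The core step is to specialize this inequality to $a = x_2$ and $b = x_1$. Using $\nabla f_1(x_1) = 0$, the left-hand side collapses to $\ip{\nabla f_1(x_2)}{x_2 - x_1}$, so that
\[ \mu \norm{x_2 - x_1}^2 \le \ip{\nabla f_1(x_2)}{x_2 - x_1} \le \norm{\nabla f_1(x_2)} \, \norm{x_2 - x_1} \le \varepsilon \norm{x_2 - x_1}, \]
where the middle inequality is Cauchy--Schwarz and the last uses the hypothesis. Dividing through by $\norm{x_2 - x_1}$ (the conclusion being trivial if $x_1 = x_2$) yields $\norm{x_1 - x_2} \le \varepsilon/\mu$, as claimed.

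There is essentially no serious obstacle here: the result is a standard quantitative stability estimate for minimizers of strongly convex functions. The only point requiring mild care is the passage from $\mu$-strong convexity to the strong-monotonicity inequality, but this is classical, and the twice-differentiability hypothesis makes the Hessian derivation immediate.
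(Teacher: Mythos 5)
Your proof is correct: first-order optimality gives $\nabla f_1(x_1)=0$, strong monotonicity of $\nabla f_1$ plus Cauchy--Schwarz yields $\mu\norm{x_1-x_2}^2 \leq \varepsilon\norm{x_1-x_2}$, and the observation that strong convexity of $f_2$ is only needed to define $x_2$ is accurate. The paper itself gives no proof of this lemma (it is imported verbatim from \citet{mania19}), and the argument there is the same standard stability estimate, obtained by adding the two strong-convexity lower bounds at $x_1$ and $x_2$, which is equivalent to your gradient-monotonicity step.
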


\begin{prop}
\label{prop:control_bound}
Let $M \succeq \mu I$ and $N \succeq \mu I$ be a positive definite matrices
partitioned as $M = \bmattwo{M_{11}}{M_{12}}{M_{12}^\T}{M_{22}}$ and similarly for $N$.
Let $T(M) = -M_{22}^{-1} M_{12}^\T$. We have that:
\begin{align*}
    \norm{T(M) - T(N)} \leq \frac{(1 + \norm{T(N)})\norm{M - N}}{\mu} \:.
\end{align*}
\end{prop}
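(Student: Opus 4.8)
The plan is to reduce everything to a single algebraic identity for $T(N)-T(M)$ and then bound each factor in operator norm. First I would record two structural facts. Since $M \succeq \mu I$, restricting to the second block of coordinates gives $M_{22}\succeq \mu I$, so $M_{22}$ is invertible with $\norm{M_{22}^{-1}}\le 1/\mu$; the same holds for $N_{22}$. Second, every block of a symmetric matrix has operator norm at most that of the whole matrix, so $\norm{M_{12}^\T-N_{12}^\T}\le\norm{M-N}$ and $\norm{M_{22}-N_{22}}\le\norm{M-N}$ (for the off-diagonal block this follows from the identity $u^\T(M-N)_{12}v=(u,0)(M-N)(0,v)^\T$ and Cauchy--Schwarz in operator norm).

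Next I would derive the key identity. Writing $T(N)-T(M)=M_{22}^{-1}M_{12}^\T - N_{22}^{-1}N_{12}^\T$, I would add and subtract $M_{22}^{-1}N_{12}^\T$ and apply the resolvent identity $M_{22}^{-1}-N_{22}^{-1}=M_{22}^{-1}(N_{22}-M_{22})N_{22}^{-1}$. Recognizing that $N_{22}^{-1}N_{12}^\T=-T(N)$ then collapses the expression to
\[
T(N)-T(M)=M_{22}^{-1}\left[(M_{12}^\T-N_{12}^\T)+(M_{22}-N_{22})T(N)\right].
\]

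Taking operator norms of this identity and invoking the two facts above gives
\[
\norm{T(M)-T(N)}\le\frac{1}{\mu}\left(\norm{M-N}+\norm{M-N}\,\norm{T(N)}\right)=\frac{(1+\norm{T(N)})\norm{M-N}}{\mu},
\]
which is the claim. The only real subtlety is bookkeeping in the factorization step: the resolvent identity must be grouped so that $T(N)$ itself (rather than $N_{12}^\T$ or $N_{22}^{-1}$ separately) is exposed, since this is exactly what produces the clean $(1+\norm{T(N)})$ factor rather than a stray $\norm{N_{12}}/\mu$ term. I would also note in passing that $T(M)$ is the minimizer of the $\mu$-strongly convex quadratic $X\mapsto\tfrac12\Tr(X^\T M_{22}X)+\Tr(X^\T M_{12}^\T)$, so this proposition is morally an instance of Lemma~\ref{lem:strongly_convex_min_perturb}; however, that route naturally yields a Frobenius-norm estimate, whereas the direct manipulation above delivers the operator-norm bound stated here, so I would prefer the direct computation.
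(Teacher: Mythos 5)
Your proof is correct, and it takes a genuinely different route from the paper. You verify the bound by a direct algebraic computation: the resolvent identity applied to $M_{22}^{-1}-N_{22}^{-1}$, grouped so that $N_{22}^{-1}N_{12}^{\T}=-T(N)$ is exposed, yields $T(N)-T(M)=M_{22}^{-1}\bigl[(M_{12}^{\T}-N_{12}^{\T})+(M_{22}-N_{22})T(N)\bigr]$, after which the bound follows from $\norm{M_{22}^{-1}}\le 1/\mu$ and the fact that blocks of a symmetric matrix have operator norm at most that of the whole matrix. The paper instead fixes a unit vector $x$, views $T(M)x$ and $T(N)x$ as the minimizers of the two $\mu$-strongly convex scalar quadratics $u\mapsto\tfrac12 u^{\T}M_{22}u+x^{\T}M_{12}u$ (plus constants) and $u\mapsto\tfrac12 u^{\T}N_{22}u+x^{\T}N_{12}u$, bounds the gradient of the first at the minimizer of the second by $(1+\norm{T(N)})\norm{M-N}$, and invokes Lemma~\ref{lem:strongly_convex_min_perturb}; taking the supremum over unit $x$ then recovers the operator-norm statement. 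Note that this per-vector formulation is how the paper sidesteps the Frobenius-norm issue you flag in your closing remark: the strongly convex route only forces a Frobenius bound if one treats the whole matrix $X$ as the optimization variable, which the paper does not. The two arguments are essentially equivalent in content (your bracketed expression is exactly $-\nabla f(u_\star)$ applied to $x$), but yours is self-contained and makes the cancellation explicit, while the paper's fits a template it reuses elsewhere (e.g.\ in the analysis borrowed from \citet{mania19}) at the cost of importing an external lemma.
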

\begin{proof}
Fix a unit norm $x$.
Define $f(u) = (1/2) x^\T M_{11} x + (1/2) u^\T M_{22} u + x^\T M_{12} u$
and $g(u) = (1/2) x^\T N_{11} x + (1/2) u^\T N_{22} u + x^\T N_{12} u$.
Let $u_\star = T(N) x$.
We have that
\begin{align*}
    \nabla f(u_\star) =  \nabla f(u_\star)  - \nabla g(u_\star) = (M_{22} - N_{22}) u_\star + (M_{12} - N_{12})^\T x \:.
\end{align*}
Hence, $\norm{\nabla f(u_\star)} \leq \norm{M_{12} - N_{12}} + \norm{M_{22} - N_{22}} \norm{u_\star}$.
We can bound $\norm{u_\star} = \norm{T(N) x} \leq \norm{T(N)}$.
The claim now follows using Lemma~\ref{lem:strongly_convex_min_perturb}.
\end{proof}

\begin{prop}
\label{prop:ordered_stability}
Let $K, K_0$ be two stabilizing policies for $(A, B)$.
Let $V, V_0$ denote their respective value functions and suppose that
$V \preceq V_0$.
We have that for all $k \geq 0$:
\begin{align*}
    \norm{(A+BK)^k} \leq \sqrt{\frac{\lambda_{\max}(V_0)}{\lambda_{\min}(S)}}  (1-\lambda_{\min}(V_0^{-1} S))^{k/2} \:.
\end{align*}
\end{prop}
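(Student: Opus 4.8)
Proposition~\ref{prop:ordered_stability} asks to bound the transient response of the closed-loop system $A+BK$ in terms of the value function $V_0$ of a (possibly different) controller $K_0$ whose value function dominates that of $K$. Let me sketch my approach.

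The plan is to exploit the Lyapunov equation satisfied by $V$ and convert it into a one-step contraction in the quadratic form induced by $V$ itself. Writing $L := A+BK$, the value function obeys $V = L^\T V L + S + K^\T R K$. Since $R$ is positive definite, $K^\T R K \succeq 0$, and dropping this term yields the Lyapunov inequality $L^\T V L \preceq V - S$. This is the key structural fact: the closed loop contracts the $V$-energy by at least the amount $S$ at each step.

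To turn this into geometric decay, I would note that $S \succeq \lambda_{\min}(V^{-1}S)\, V$ (equivalently $V^{-1/2} S V^{-1/2} \succeq \lambda_{\min}(V^{-1}S) I$, since $V^{-1}S$ and $V^{-1/2}SV^{-1/2}$ share spectra). Substituting gives $L^\T V L \preceq (1 - \lambda_{\min}(V^{-1}S))\, V$. Conjugating repeatedly by $L$ and inducting on $k$ then produces
\[
(L^k)^\T V L^k \preceq (1 - \lambda_{\min}(V^{-1}S))^k\, V \:.
\]
Sandwiching with $\lambda_{\min}(V)\,\norm{L^k x}^2 \leq (L^k x)^\T V (L^k x)$ on the left and $x^\T V x \leq \lambda_{\max}(V)\,\norm{x}^2$ on the right yields $\norm{L^k} \leq \sqrt{\lambda_{\max}(V)/\lambda_{\min}(V)}\,(1 - \lambda_{\min}(V^{-1}S))^{k/2}$. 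Observe that $0 \leq 1 - \lambda_{\min}(V^{-1}S) < 1$, because $V \succeq S$ forces $\lambda_{\min}(V^{-1}S) \in (0,1]$.

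It remains to replace the $V$-dependent quantities by their $V_0$ counterparts using the hypothesis $V \preceq V_0$ together with $V \succeq S$. Two of these are immediate: $\lambda_{\max}(V) \leq \lambda_{\max}(V_0)$ from $V \preceq V_0$, and $\lambda_{\min}(V) \geq \lambda_{\min}(S)$ from $V \succeq S$. The step I expect to be the main obstacle is the monotonicity of the rate, i.e. showing $\lambda_{\min}(V^{-1}S) \geq \lambda_{\min}(V_0^{-1}S)$, so that $1 - \lambda_{\min}(V^{-1}S) \leq 1 - \lambda_{\min}(V_0^{-1}S)$. I would establish this through the variational characterization $\lambda_{\min}(V^{-1}S) = \min_{x \neq 0} (x^\T S x)/(x^\T V x)$: since $V \preceq V_0$ gives $x^\T V x \leq x^\T V_0 x$ for every $x$, each Rayleigh quotient only increases when $V$ replaces $V_0$ in the denominator, and hence so does the minimum. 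Because both bases lie in $[0,1)$, raising to the power $k/2$ preserves the inequality, and chaining the three comparisons converts the $V$-bound into the claimed $V_0$-bound.
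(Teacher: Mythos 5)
Your proposal is correct and follows essentially the same route as the paper's proof: both start from the Lyapunov inequality $(A+BK)^\T V (A+BK) \preceq V - S$, convert $S$ into a fraction of $V$ to obtain geometric decay of the $V$-weighted norm (the paper does this via the similarity transform $H = V^{1/2}(A+BK)V^{-1/2}$ and submultiplicativity, you via an induction on quadratic forms, which is the same computation), and then pass from $V$ to $V_0$ using $V \preceq V_0$ and $V \succeq S$. Your Rayleigh-quotient argument for $\lambda_{\min}(V^{-1}S) \geq \lambda_{\min}(V_0^{-1}S)$ is equivalent to the paper's observation that $V^{-1} \succeq V_0^{-1}$, so there is nothing further to add.
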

\begin{proof}
This proof is inspired by the proof of Lemma 5.1 of \citet{abbasi18}.
Since $V$ is the value function for $K$, we have:
\begin{align*}
    V &= (A+BK)^\T V (A+BK) + S + K^\T R K \\
      &\succeq (A+BK)^\T V (A+BK) + S \:.
\end{align*}
Conjugating both sides by $V^{-1/2}$ and
defining $H := V^{1/2} (A+BK) V^{-1/2}$,
\begin{align*}
    I &\succeq V^{-1/2}(A+BK)^\T V (A+BK)V^{-1/2} + V^{-1/2} S V^{-1/2} \\
    &= H^\T H + V^{-1/2} S V^{-1/2} \:.
\end{align*}
This implies that $\norm{H}^2 = \norm{H^\T H} \leq \norm{I - V^{-1/2} S V^{-1/2}} = 1 - \lambda_{\min}(S^{1/2} V^{-1} S^{1/2}) \leq 1 - \lambda_{\min}(S^{1/2} V_0^{-1} S^{1/2})$.
The last inequality holds since $V \preceq V_0$ iff $V^{-1} \succeq V_0^{-1}$,
Now observe:
\begin{align*}
    \norm{ V^{1/2} (A+BK)^k V^{-1/2} } = \norm{H^k} \leq \norm{H}^k \leq (1-\lambda_{\min}(V_0^{-1} S))^{k/2}
\end{align*}
Next, for $M$ positive definite and $N$ square, observe that:
\begin{align*}
    \norm{M N M^{-1}} &= \sqrt{\lambda_{\max}(MNM^{-2} N^\T M)} \\
    &\geq \sqrt{\lambda_{\min}(M^{-2}) \lambda_{\max}( MNN^\T M) } \\
    &= \sqrt{\lambda_{\min}(M^{-2}) \lambda_{\max}(N^\T M^2 N) } \\
    &\geq \sqrt{\lambda_{\min}(M^{-2}) \lambda_{\min}(M^2) \norm{N}^2 } \\
    &= \frac{\norm{N}}{\kappa(M)} \:.
\end{align*}
Therefore, we have shown that:
\begin{align*}
    \norm{(A+BK)^k} \leq \sqrt{\kappa(V)} (1-\lambda_{\min}(V_0^{-1} S))^{k/2} \leq \sqrt{\frac{\lambda_{\max}(V_0)}{\lambda_{\min}(S)}}  (1-\lambda_{\min}(V_0^{-1} S))^{k/2} \:.
\end{align*}
\end{proof}

\begin{prop}
\label{prop:dlyap_norm_bound}
Let $A$ be a $(\tau, \rho)$ stable matrix, and let $\vertiii{\cdot}$ be either the 
operator or Frobenius norm.
We have that:
\begin{align}
    \vertiii{\dlyap(A, M)} \leq \frac{\tau^2}{1-\rho^2} \vertiii{M} \:.
\end{align}
\end{prop}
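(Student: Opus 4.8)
The plan is to write the solution $P := \dlyap(A, M)$ in its series form and then bound the series term-by-term using $(\tau,\rho)$-stability. Recall that $P$ is by definition the unique solution to the Lyapunov equation $P = A^\T P A + M$. Since $A$ is stable, iterating this identity shows that the unique solution admits the representation
\begin{align*}
    P = \sum_{k=0}^{\infty} (A^\T)^k M A^k \:,
\end{align*}
so the first step is simply to verify that this series converges absolutely (which follows immediately from $\norm{A^k} \leq \tau \rho^k$ and $\rho < 1$) and that it indeed satisfies the Lyapunov equation.

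For the operator norm, I would apply the triangle inequality followed by submultiplicativity to each summand, using $\norm{(A^\T)^k} = \norm{A^k}$, to obtain
\begin{align*}
    \norm{P} \leq \sum_{k=0}^{\infty} \norm{(A^\T)^k}\,\norm{M}\,\norm{A^k} = \norm{M}\sum_{k=0}^{\infty} \norm{A^k}^2 \leq \norm{M}\sum_{k=0}^{\infty} \tau^2 \rho^{2k} = \frac{\tau^2}{1-\rho^2}\norm{M} \:,
\end{align*}
which is exactly the claimed bound.

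For the Frobenius norm the only adjustment is to use the mixed inequality $\norm{XYZ}_F \leq \norm{X}\,\norm{Y}_F\,\norm{Z}$, where the outer factors are measured in operator norm; applying this with $X = (A^\T)^k$, $Y = M$, and $Z = A^k$ gives $\norm{(A^\T)^k M A^k}_F \leq \norm{A^k}^2 \norm{M}_F \leq \tau^2 \rho^{2k} \norm{M}_F$, and summing the geometric series yields the same $\frac{\tau^2}{1-\rho^2}$ factor. Since neither bound is delicate, there is no real obstacle here: the only point requiring a moment of care is the mixed operator/Frobenius submultiplicativity in the Frobenius case, which follows from $\norm{XY}_F \leq \norm{X}\,\norm{Y}_F$ together with its transpose analogue.
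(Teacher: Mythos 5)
Your proposal is correct and follows exactly the same route as the paper: the series representation $P=\sum_{k=0}^{\infty}(A^\T)^k M A^k$, triangle inequality, and the $(\tau,\rho)$-stability bound summed as a geometric series. You simply spell out the mixed operator/Frobenius submultiplicativity step that the paper leaves implicit.
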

\begin{proof}
It is a well known fact that we can write $P = \sum_{k=0}^{\infty} (A^k)^\T M (A^k)$.
Therefore the bound follows from triangle inequality and the $(\tau, \rho)$ stability assumption.
\end{proof}

\begin{prop}
\label{prop:dlyap_perturbation}
Suppose that $A_1, A_2$ are stable
matrices. Suppose furthermore that $\norm{A_i^k} \leq \tau \rho^k$ for some $\tau \geq 1$
and $\rho \in (0, 1)$.
Let $Q_1, Q_2$ be PSD matrices.
Put $P_i = \dlyap(A_i, Q_i)$.
We have that:
\begin{align*}
    \norm{P_1 - P_2} \leq \frac{\tau^2}{1-\rho^2} \norm{Q_1-Q_2} + \frac{\tau^4}{(1-\rho^2)^2} \norm{A_1-A_2}(\norm{A_1}+\norm{A_2}) \norm{Q_2} \:.
\end{align*}
\end{prop}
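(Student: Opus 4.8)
The plan is to exploit the series representation $P_i = \sum_{k=0}^{\infty}(A_i^k)^\T Q_i A_i^k$ and to split the perturbation into a ``$Q$-part'' and an ``$A$-part'' so that each can be controlled by a single application of Proposition~\ref{prop:dlyap_norm_bound}. Concretely, I would first write $P_1 - P_2 = \dlyap(A_1, Q_1 - Q_2) + \big(\dlyap(A_1, Q_2) - \dlyap(A_2, Q_2)\big)$, which is valid because $\dlyap(A_1, \cdot)$ is linear in its second argument. The first summand is immediately controlled: since $A_1$ is $(\tau,\rho)$-stable, Proposition~\ref{prop:dlyap_norm_bound} gives $\norm{\dlyap(A_1, Q_1 - Q_2)} \leq \frac{\tau^2}{1-\rho^2}\norm{Q_1 - Q_2}$, which is exactly the first term in the claimed bound. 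Note that Proposition~\ref{prop:dlyap_norm_bound} does not require its driving matrix to be PSD (its proof only uses the triangle inequality on the series), so the possibly indefinite difference $Q_1 - Q_2$ causes no difficulty.

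For the remaining term, set $\Delta := \dlyap(A_1, Q_2) - P_2$ where $P_2 = \dlyap(A_2, Q_2)$. The key observation is that $\Delta$ itself solves a discrete Lyapunov equation driven by $A_1$. Indeed, subtracting the two fixed-point identities $\dlyap(A_1,Q_2) = A_1^\T \dlyap(A_1,Q_2) A_1 + Q_2$ and $P_2 = A_2^\T P_2 A_2 + Q_2$, and then substituting $\dlyap(A_1,Q_2) = P_2 + \Delta$, the $Q_2$ terms cancel and I obtain $\Delta = A_1^\T \Delta A_1 + E$ with $E := A_1^\T P_2 A_1 - A_2^\T P_2 A_2$. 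Hence $\Delta = \dlyap(A_1, E)$, and a second application of Proposition~\ref{prop:dlyap_norm_bound} yields $\norm{\Delta} \leq \frac{\tau^2}{1-\rho^2}\norm{E}$.

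It then remains to estimate $\norm{E}$. Adding and subtracting $A_1^\T P_2 A_2$ gives $E = A_1^\T P_2 (A_1 - A_2) + (A_1 - A_2)^\T P_2 A_2$, so the triangle inequality yields $\norm{E} \leq \norm{P_2}\norm{A_1 - A_2}(\norm{A_1} + \norm{A_2})$. Bounding $\norm{P_2} = \norm{\dlyap(A_2,Q_2)} \leq \frac{\tau^2}{1-\rho^2}\norm{Q_2}$ once more via Proposition~\ref{prop:dlyap_norm_bound} and chaining the two factors of $\frac{\tau^2}{1-\rho^2}$ produces precisely the second term $\frac{\tau^4}{(1-\rho^2)^2}\norm{A_1 - A_2}(\norm{A_1}+\norm{A_2})\norm{Q_2}$. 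Adding the two contributions and applying the triangle inequality to the decomposition of $P_1 - P_2$ finishes the proof. The only step requiring genuine care — and the one I would regard as the crux — is the algebraic rearrangement showing that the difference of two Lyapunov solutions sharing the same noise matrix $Q_2$ is itself a Lyapunov solution with driving matrix $E$ capturing the perturbation $A_1 - A_2$; everything else is a direct invocation of Proposition~\ref{prop:dlyap_norm_bound} together with routine bookkeeping of operator norms.
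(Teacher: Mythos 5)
Your proposal is correct and follows essentially the same route as the paper: the paper writes $P_1-P_2 = F_1^{-1}(Q_1-Q_2) + (F_1^{-1}-F_2^{-1})(Q_2)$ with $F_i(X)=X-A_i^\T X A_i$ and uses the resolvent identity $F_1^{-1}-F_2^{-1}=F_1^{-1}(F_2-F_1)F_2^{-1}$, which is exactly your decomposition with your equation $\Delta=\dlyap(A_1,E)$, $E=A_1^\T P_2A_1-A_2^\T P_2A_2$, being the resolvent identity unpacked at the matrix level. All of your intermediate bounds match the paper's, so the argument is sound as written.
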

\begin{proof}
Let the linear operators $F_1, F_2$ be such that
$P_i = F_i^{-1}(Q_i)$, i.e. $F_i(X) = X - A_i^\T X A_i$.
Then:
\begin{align*}
    P_1 - P_2 &= F_1^{-1}(Q_1) - F_2^{-1}(Q_2) \\
    &= F_1^{-1}(Q_1 - Q_2) + F_1^{-1}(Q_2) - F_2^{-1}(Q_2) \\
    &= F_1^{-1}(Q_1-Q_2) + (F_1^{-1} - F_2^{-1})(Q_2) \:.
\end{align*}
Hence $\norm{P_1-P_2} \leq \norm{F_1^{-1}}\norm{Q_1-Q_2} + \norm{F_1^{-1}-F_2^{-1}}\norm{Q_2}$.
Now for any $M$ satisfying $\norm{M} \leq 1$
\begin{align*}
    \norm{F_i^{-1}(M)} = \bignorm{ \sum_{k=0}^{\infty} (A_i^\T)^k M A_i^k } \leq \frac{\tau^2}{1-\rho^2} \:.
\end{align*}
Next, we have that:
\begin{align*}
    \norm{F_1^{-1} - F_2^{-1}} = \norm{F_1^{-1}(F_2-F_1)F_2^{-1}} \leq \norm{F_1^{-1}}\norm{F_2^{-1}}\norm{F_1-F_2} \leq \frac{\tau^4}{(1-\rho^2)^2} \norm{F_1-F_2} \:.
\end{align*}
Now for any $M$ satisfying $\norm{M} \leq 1$,
\begin{align*}
    \norm{F_1(M) - F_2(M)} &= \norm{A_2^\T M A_2 - A_1^\T M A_1} \\
    &= \norm{(A_2 - A_1)^\T M A_2 + A_1^\T M (A_2 - A_1)} \\
    &\leq \norm{A_1-A_2}( \norm{A_1}+\norm{A_2} ) \:.
\end{align*}
The claim now follows.
\end{proof}

\section{Useful Implicit Inversion Results}

\begin{prop}
\label{prop:recover_eps}
Let $T \geq 2$ and suppose that $\alpha \geq 1$.
Define $\varepsilon$ as:
\begin{align*}
	\varepsilon = \inf\left\{ \varepsilon \in (0, 1) : T \geq \frac{1}{\varepsilon^2} \log^\alpha(1/\varepsilon) \right\} \:,
\end{align*}
then we have 
\begin{align*}
	\varepsilon \leq \frac{\log^{(\alpha+1)/2}(T)}{\sqrt{T}} \:.
\end{align*}
As a corollary,
if $T \geq 2C$ then if we define $\varepsilon$ as:
\begin{align*}
	\varepsilon = \inf\left\{ \varepsilon \in (0, 1) : T \geq \frac{C}{\varepsilon^2} \log^\alpha(1/\varepsilon) \right\} \:,
\end{align*}
then we have
\begin{align*}
	\varepsilon \leq \sqrt{\frac{C}{T}} \log^{(\alpha+1)/2}(T/C) \:.
\end{align*}
\end{prop}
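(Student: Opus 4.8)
The plan is to prove the first (normalized) statement directly and then reduce the second statement to it by a simple change of variables. For the first part, I let $\varepsilon_\star = \inf\{\varepsilon \in (0,1) : T \geq \varepsilon^{-2}\log^\alpha(1/\varepsilon)\}$ and set the candidate bound $\bar\varepsilon := \log^{(\alpha+1)/2}(T)/\sqrt{T}$. Since the defining set is an infimum over the values of $\varepsilon$ for which the inequality $T \geq \varepsilon^{-2}\log^\alpha(1/\varepsilon)$ holds, and the function $\varepsilon \mapsto \varepsilon^{-2}\log^\alpha(1/\varepsilon)$ is decreasing on $(0,1)$ (both factors grow as $\varepsilon \downarrow 0$), it suffices to verify that $\bar\varepsilon$ itself satisfies the inequality, i.e.\ that $T \geq \bar\varepsilon^{-2}\log^\alpha(1/\bar\varepsilon)$; this forces $\varepsilon_\star \leq \bar\varepsilon$.

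First I would compute $\bar\varepsilon^{-2} = T/\log^{\alpha+1}(T)$, so that the inequality to check becomes
\begin{align*}
	T \geq \frac{T}{\log^{\alpha+1}(T)}\log^\alpha(1/\bar\varepsilon) \iff \log^{\alpha+1}(T) \geq \log^\alpha(1/\bar\varepsilon) \:.
\end{align*}
Next I would control $\log(1/\bar\varepsilon)$ from above. Writing $\log(1/\bar\varepsilon) = \tfrac12\log T - \tfrac{\alpha+1}{2}\log\log T$, and since $T \geq 2$ makes $\log\log T$ have the appropriate sign, I would bound $\log(1/\bar\varepsilon) \leq \tfrac12 \log T \leq \log T$. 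Raising to the $\alpha$ power (using $\alpha \geq 1$, so the map $x \mapsto x^\alpha$ is monotone on the nonnegative reals) yields $\log^\alpha(1/\bar\varepsilon) \leq \log^\alpha(T) \leq \log^{\alpha+1}(T)$, where the last step uses $\log T \geq 1$ for $T \geq e$. This establishes the required inequality and hence the first claim.

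For the corollary, I would substitute $T' = T/C$ into the first statement after rewriting the condition $T \geq C\varepsilon^{-2}\log^\alpha(1/\varepsilon)$ as $T/C \geq \varepsilon^{-2}\log^\alpha(1/\varepsilon)$; applying the already-proved bound with $T'$ in place of $T$ gives $\varepsilon \leq \log^{(\alpha+1)/2}(T/C)/\sqrt{T/C} = \sqrt{C/T}\,\log^{(\alpha+1)/2}(T/C)$, which is exactly the stated conclusion, with the hypothesis $T \geq 2C$ ensuring $T' \geq 2$. The main obstacle is not conceptual but bookkeeping: I must be careful about the regime where $T$ is close to its lower bound (where $\log\log T$ or $\log(T/C)$ could be small or the crude bounds $\log T \geq 1$ might be tight), and ensure the monotonicity direction of the infimum is used correctly—specifically that showing $\bar\varepsilon$ belongs to the feasible set yields an \emph{upper} bound on the infimum. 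I would double-check the edge cases $T = 2$ and $T = 2C$ separately if the clean chain of inequalities above appears to degrade there.
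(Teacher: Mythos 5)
Your argument is the contrapositive of the paper's: the paper assumes $\varepsilon > \bar\varepsilon$ and derives $T < \bar\varepsilon^{-2}\log^\alpha(1/\bar\varepsilon)$, whereas you verify directly that $\bar\varepsilon$ lies in the feasible set; after that, both proofs reduce to the same comparison of $\log(1/\bar\varepsilon)$ with $\log T$, so the route is essentially identical. One avoidable weakening: you discard the factor $\tfrac12$ when passing from $\log(1/\bar\varepsilon)\le\tfrac12\log T$ to $\le\log T$, and then need $\log T\ge1$ to finish. Keeping it gives $\log^\alpha(1/\bar\varepsilon)\le 2^{-\alpha}\log^\alpha T$, and the final comparison $2^{-\alpha}\le\log T$ already holds for $T\ge2$; this is exactly how the paper closes (its contradiction is $\log T\le 2^{-\alpha}$ against $\log 2>\tfrac12$). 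Also, the case $\bar\varepsilon\ge1$ must be disposed of separately (the conclusion is then vacuous since the infimum is below $1$), as membership in the feasible set only makes sense for $\bar\varepsilon\in(0,1)$; the paper handles this explicitly at the start of its contradiction argument.

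The edge case you flag is, however, not mere bookkeeping. The bound $\log(1/\bar\varepsilon)\le\tfrac12\log T$ requires $\log\log T\ge0$, i.e.\ $T\ge e$, and for $T\in[2,e)$ it genuinely fails --- and so does the statement itself. For $T=2$, $\alpha=1$ one has $\bar\varepsilon=\log(2)/\sqrt2\approx0.490$ and $\bar\varepsilon^{-2}\log(1/\bar\varepsilon)\approx2.97>2$, so $\bar\varepsilon$ is infeasible, and by monotonicity of $\varepsilon\mapsto\varepsilon^{-2}\log(1/\varepsilon)$ the infimum is $\approx0.549>\bar\varepsilon$. The paper's own proof has the same unacknowledged gap: the step $\log^\alpha(\sqrt T/\log^\beta T)\le\log^\alpha(\sqrt T)$ silently assumes $\log^\beta T\ge1$. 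The clean repair is to strengthen the hypothesis to $T\ge e$ (and $T\ge eC$ in the corollary), after which your direct verification, with the $2^{-\alpha}$ factor retained, goes through.
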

\begin{proof}
First, we know that such a $\varepsilon$ 
exists by continuity because $\lim_{\varepsilon \rightarrow 1^{-}}\frac{1}{\varepsilon^2} \log^\alpha(1/\varepsilon) = 0 $.

Suppose towards a contradiction that $\varepsilon > \log^{\beta}(T)/\sqrt{T}$
where $2\beta = \alpha + 1$. Note that
we must have $\log^{\beta}(T)/\sqrt{T} < 1$, since if we did not,
we would have
\begin{align*}
	1 \geq \varepsilon > \log^{\beta}(T)/\sqrt{T} \geq 1 \:.
\end{align*}
Therefore, by the definition of $\varepsilon$,
\begin{align*}
	T < \frac{T}{\log^{2\beta}(T)} \log^{\alpha}(\sqrt{T}/\log^\beta(T)) \leq \frac{T}{\log^{2\beta}(T)} \log^{\alpha}(\sqrt{T}) \:.
\end{align*}
This implies that:
\begin{align*}
\log^{2\beta}(T) \leq \log^{\alpha}(\sqrt{T}) = \frac{1}{2^\alpha} \log^\alpha(T) \:.
\end{align*}
Using the fact that $2\beta = \alpha + 1$, this implies:
\begin{align*}
	\log(T) \leq 1/2^\alpha \Longrightarrow T \leq \exp(1/2^{\alpha}) \leq \exp(1/2) \:.
\end{align*}
But this contradicts the assumption that $T \geq 2$.

The corollary follows from a change of variables $T \gets T/C$.
\end{proof}

\begin{prop}
\label{prop:T_helper}
Let $C \geq 1$ and $\alpha \geq 1$. We have that:
\begin{align*}
	\sup_{i=0, 1, 2, ...}  \frac{1}{(2^i)^{1/\alpha}} \polylog(C 2^i) \leq \mathrm{poly}(\alpha) \polylog(C) \:.
\end{align*}
\end{prop}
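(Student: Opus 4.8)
The plan is to treat $\polylog(\cdot)$ as a fixed power $\log^p(\cdot)$ of the natural logarithm, where $p \ge 1$ is the (constant) degree; a general polylog is a finite nonnegative combination of such monomials, and since the supremum of a sum is at most the sum of suprema, it suffices to bound each monomial separately and recombine at the end. I would also relax the supremum over integers $i \ge 0$ to a supremum over a nonnegative real variable, which only enlarges the quantity and hence still yields a valid upper bound.

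First I would write $\log(C 2^i) = \log C + i \log 2$ and substitute $u := i \log 2 \ge 0$, so that, using $2^{i/\alpha} = e^{u/\alpha}$, the target reduces to bounding $\sup_{u \ge 0} (\log C + u)^p e^{-u/\alpha}$. The key algebraic step is the convexity bound $(a+b)^p \le 2^{p-1}(a^p + b^p)$, valid for $p \ge 1$ and $a,b \ge 0$, which decouples the $C$-dependence from the $\alpha$-dependence:
\[
(\log C + u)^p e^{-u/\alpha} \le 2^{p-1}\log^p C \cdot e^{-u/\alpha} + 2^{p-1} u^p e^{-u/\alpha}.
\]
The first term is maximized at $u = 0$, contributing $2^{p-1}\log^p C$. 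For the second term I would invoke the elementary calculus fact $\sup_{u \ge 0} u^p e^{-u/\alpha} = (p\alpha/e)^p$, attained at the critical point $u = p\alpha$. Combining the two pieces gives
\[
\sup_{i \ge 0} \frac{\log^p(C 2^i)}{2^{i/\alpha}} \le 2^{p-1}\Big(\log^p C + (p/e)^p \alpha^p\Big),
\]
which is exactly of the form $\poly(\alpha)\,\polylog(C)$; if a product form is preferred over a sum, it can be obtained by bounding $\log^p C + (p/e)^p\alpha^p \le ((p/e)^p+1)\,\alpha^p\,(1+\log^p C)$ using $\alpha \ge 1$ and $\log C \ge 0$.

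The one subtlety worth flagging — and the reason I would organize the proof through the convexity split rather than a direct optimization — is that optimizing $(\log C + u)^p e^{-u/\alpha}$ directly produces a critical-point value scaling like $(p\alpha/e)^p\,C^{1/\alpha}$, which superficially looks polynomial, not polylogarithmic, in $C$. This apparent obstruction dissolves under a case split on whether $\log C \le p\alpha$: in that regime $C^{1/\alpha} \le e^p$ is bounded by a constant, while in the complementary regime $\log C > p\alpha$ the derivative is negative for all $u \ge 0$, forcing the maximizer to the boundary $u=0$ where the value is simply $\log^p C$. The convexity bound sidesteps this case analysis by cleanly separating the two effects: the $C$-dependence lives at $u=0$ where the decay factor $e^{-u/\alpha}$ equals one, and the $\alpha$-dependence arises solely from maximizing the decaying polynomial $u^p e^{-u/\alpha}$.
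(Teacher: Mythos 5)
Your proposal is correct and follows essentially the same route as the paper: both split $\log(C2^i)=\log C + i\log 2$, apply the convexity inequality $(a+b)^p \le 2^{p-1}(a^p+b^p)$ to decouple the two contributions, and then maximize the decaying term ($u^p e^{-u/\alpha}$ in your substitution, $i^\beta/(2^i)^{1/\alpha}$ in the paper's) by locating the critical point at $u=p\alpha$, i.e.\ $i=\alpha\beta/\log 2$. The only differences are cosmetic (your relaxation to real $u\ge 0$ and your slightly cleaner constant $(p\alpha/e)^p$ versus the paper's $\beta(\alpha\beta/\log 2)^\beta$), and your closing remark about why direct optimization appears to fail is a nice observation but not needed.
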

\begin{proof}
Let $\beta \geq 1$.
We have that:
\begin{align*}
	\frac{1}{(2^i)^{1/\alpha}} \log^\beta(C 2^i) &= \frac{1}{(2^i)^{1/\alpha}} (\log(C) + \log(2^i))^\beta \\
	&\leq \frac{2^{\beta-1}}{(2^i)^{1/\alpha}} (\log^\beta(C) + \log^\beta(2^i)) \\
	&\leq 2^{\beta-1} \log^\beta(C) + 2^{\beta-1} \frac{i^\beta}{(2^i)^{1/\alpha}} \log^\beta(2) \:.
\end{align*}
Next, we look at:
\begin{align*}
	f(i) := \frac{i^\beta}{(2^i)^{1/\alpha}} \:.
\end{align*}
We have that:
\begin{align*}
	\frac{d}{di} \log_2{f(i)} =  \frac{\beta}{i \log{2}} - \frac{1}{\alpha} \:.
\end{align*}
Setting the derivative to zero we obtain that $i = \alpha\beta/\log{2}$.
Therefore:
\begin{align*}
	\sup_{i=0,1,2,...} f(i) \leq \beta \left( \frac{\alpha\beta}{\log{2}}  \right)^\beta \:.
\end{align*}
The claim now follows.
\end{proof}

\begin{prop}
\label{prop:eps_simplification}
Let $C > 0$. Then for any $\varepsilon \in (0, \min\{1/e, C^2\})$, we have the following inequality holds:
\begin{align*}
    \varepsilon \log(1/\varepsilon) \leq C \:.
\end{align*}
As a corollary, let $M > 0$, then for $\varepsilon \in (0, \min\{M/e, C^2/M\})$ we have that:
\begin{align*}
    \varepsilon \log(M/\varepsilon) \leq C \:.
\end{align*}
\end{prop}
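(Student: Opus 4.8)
The plan is to reduce the claim to a monotonicity property of the scalar function $g(\varepsilon) := \varepsilon \log(1/\varepsilon)$ together with a short case analysis on the size of $C$. First I would compute $g'(\varepsilon) = \log(1/\varepsilon) - 1$, which is nonnegative precisely when $\varepsilon \leq 1/e$; hence $g$ is nondecreasing on the interval $(0, 1/e]$. Since the hypothesis guarantees $\varepsilon < \min\{1/e, C^2\} \leq 1/e$, I can always upper bound $g(\varepsilon)$ by evaluating $g$ at the relevant endpoint of the admissible interval.

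Next I would split into two cases according to whether $C^2 > 1/e$ or $C^2 \le 1/e$. In the case $C^2 > 1/e$ (equivalently $C > 1/\sqrt{e}$), monotonicity gives $g(\varepsilon) < g(1/e) = 1/e$, and since $1/e < 1/\sqrt{e} < C$ the desired bound $g(\varepsilon) \le C$ is immediate. In the case $C^2 \le 1/e$ we instead have $\varepsilon < C^2 \le 1/e$, so monotonicity yields $g(\varepsilon) \le g(C^2) = 2C^2 \log(1/C) = C \cdot \bigl(2C\log(1/C)\bigr)$.

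The one substantive step --- and the only place where any real estimate is needed --- is to control the factor $2C\log(1/C)$. Here I would invoke the elementary inequality $2x\log(1/x) \le 2/e$, valid for all $x \in (0,1]$, which follows by maximizing $x \mapsto 2x\log(1/x)$ (its derivative $2\log(1/x) - 2$ vanishes at $x = 1/e$, giving maximum value $2/e$). Applying this with $x = C$ (note $C \le 1/\sqrt{e} < 1$ in this case) gives $g(\varepsilon) \le C \cdot (2/e) < C$, completing the main claim. I expect this auxiliary maximization to be the crux; everything else is bookkeeping around the two cases.

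Finally, the corollary follows by rescaling. Setting $\delta := \varepsilon/M$, the hypothesis $\varepsilon < \min\{M/e, C^2/M\}$ becomes $\delta < \min\{1/e, (C/M)^2\}$, so applying the already-proved inequality with the constant $C' := C/M$ yields $\delta \log(1/\delta) \le C/M$. Multiplying through by $M$ and using $\log(1/\delta) = \log(M/\varepsilon)$ gives $\varepsilon \log(M/\varepsilon) \le C$, as required.
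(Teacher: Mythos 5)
Your proof is correct and follows essentially the same route as the paper: both rest on the monotonicity of $\varepsilon \mapsto \varepsilon\log(1/\varepsilon)$ on $(0,1/e]$, a case split on the size of $C$, and an elementary single-variable estimate showing $f(C^2) \leq C$. The only cosmetic difference is the auxiliary inequality used for the small-$C$ case --- you maximize $2x\log(1/x)$ to get the bound $2/e$, while the paper verifies $1/x + 2\log x \geq 0$ --- and these are equivalent elementary facts; your rescaling argument for the corollary matches the paper's substitution exactly.
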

\begin{proof}
Let $f(\varepsilon) := \varepsilon \log(1/\varepsilon)$.
We have that $\lim_{\varepsilon \rightarrow 0^+} f(\varepsilon) = 0$ and that
$f'(\varepsilon) = \log(1/\varepsilon) - 1$.
Hence $f$ is increasing on the interval $\varepsilon \in [0, 1/e]$,
and $f(1/e) = 1/e$.
Therefore, if $C \geq 1/e$ then $f(\varepsilon) \leq C$ for any $\varepsilon \in (0, 1/e)$.

Now suppose that $C < 1/e$.
One can verify that the function $g(x) := 1/x + 2 \log{x}$
satisfies $g(x) \geq 0$ for all $x > 0$.
Therefore:
\begin{align*}
    g(C) \geq 0 &\Longleftrightarrow 1/C + 2 \log{C} \geq 0 \\
    &\Longleftrightarrow 1/C \geq \log(1/C^2) \\
    &\Longleftrightarrow C \geq C^2 \log(1/C^2) \\
    &\Longleftrightarrow f(C^2) \leq C \:.
\end{align*}
Since $C < 1/e$ we have $C^2 \leq C$ and therefore
$f(\varepsilon) \leq f(C^2) \leq C$ for all $\varepsilon \in (0, C^2)$.
This proves the first part.

To see the second part, use the variable substitution
$\varepsilon \gets \varepsilon/M$, $C \gets C/M$.
\end{proof}

\begin{prop}
\label{prop:lambert_w}
Let $\beta \geq 1$ and $C \geq (e/\beta)^\beta$. 
Let $x$ denote the solution to:
\begin{align*}
	x = C \log^\beta(x) \:.
\end{align*}
We have that $x \leq e^{(\alpha-1)\beta} \beta^\beta \cdot C \log^\beta(\beta C^{1/\beta})$, where $\alpha = 2 - \log(e-1)$.
\end{prop}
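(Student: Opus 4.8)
The plan is to reduce the implicit equation to the scalar case $\beta=1$ by a change of variables, and then bound the larger root of the resulting equation by exhibiting an explicit super-solution whose tightness is exactly what pins down the constant $\alpha$.

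First I would substitute $y := x^{1/\beta}$ and set $D := \beta C^{1/\beta}$. Any solution must satisfy $x>1$ (so that $\log^\beta(x)>0$), and taking $\beta$-th roots in $x = C\log^\beta(x)$ together with $\log x = \beta\log y$ turns the equation into $y = D\log y$. The hypothesis $C \ge (e/\beta)^\beta$ is precisely $D \ge e$, hence $\log D \ge 1$. Since $x \mapsto y = x^{1/\beta}$ is an increasing bijection of $(1,\infty)$, it suffices to prove $y \le y_0 := e^{\alpha-1}D\log D$; raising to the $\beta$-th power and using $D^\beta = \beta^\beta C$ and $\log D = \log(\beta C^{1/\beta})$ then yields the claimed $x \le e^{(\alpha-1)\beta}\beta^\beta C\log^\beta(\beta C^{1/\beta})$.

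The key preliminary is the identity behind the choice of $\alpha$: since $\alpha - 1 = 1 - \log(e-1) = \log(e/(e-1))$, we get $e^{\alpha-1} = e/(e-1)$ and $e^{\alpha-1}-1 = 1/(e-1) =: \kappa$. To bound the root I would track the sign of $h(y) := y - D\log y$, whose derivative $1 - D/y$ is positive on $(D,\infty)$. If the solution $y$ satisfies $y \le D$, then $y \le D \le y_0$ once I check $y_0 \ge D$ (true since $e^{\alpha-1}\log D \ge e^{\alpha-1} > 1$). If instead $y > D$, then $h$ is increasing past $D$ and $h(y)=0$, so it remains to verify the super-solution inequality $h(y_0) \ge 0$, equivalently $e^{\alpha-1}\log D \ge (\alpha-1) + \log D + \log\log D$; monotonicity of $h$ on $[D,\infty)$ then forces $y \le y_0$.

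The crux is this last inequality. Writing $u := \log D \ge 1$, it reads $\kappa u \ge \log u + (\alpha-1)$, so I would analyze $\varphi(u) := \kappa u - \log u - (\alpha-1)$. Its unique critical point is $u = 1/\kappa = e-1 \ge 1$, and the entire design of $\alpha$ is that there $\varphi(e-1) = 1 - \log(e-1) - (\alpha-1) = 0$, because $\alpha - 1 = 1 - \log(e-1)$. Hence $\varphi \ge 0$ on $(0,\infty)$, the super-solution inequality holds (tightly, at $u=e-1$), and the bound follows. The only point needing care is which root of $y = D\log y$ the argument controls; the case split on $y \le D$ versus $y > D$ handles both roots at once, so no uniqueness assumption on ``the solution'' is actually required.
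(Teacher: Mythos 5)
Your proof is correct, but it takes a genuinely different route from the paper. The paper represents the solution explicitly via the Lambert $W$ function, writing $x = \exp(-\beta W(-\tfrac{1}{\beta C^{1/\beta}}))$, and then imports the lower bound $W(-e^{-t-1}) > -\log(t+1) - t - \alpha$ from an external reference (Theorem 3.2 of Alzahrani--Salem), which is where the constant $\alpha = 2-\log(e-1)$ comes from. You instead reduce to the scalar equation $y = D\log y$ with $y = x^{1/\beta}$, $D = \beta C^{1/\beta} \ge e$, and verify directly that $y_0 = e^{\alpha-1}D\log D$ is a super-solution of $h(y) = y - D\log y$ on $[D,\infty)$; the inequality $h(y_0)\ge 0$ collapses to $\kappa u \ge \log u + (\alpha-1)$ with $u = \log D$ and $\kappa = e^{\alpha-1}-1 = 1/(e-1)$, whose minimum over $u$ is exactly zero at $u = e-1$ by the choice of $\alpha$. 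Your argument is self-contained and elementary, it explains where $\alpha$ comes from (it is the constant making the super-solution inequality tight), and it handles the ambiguity in ``the solution'' explicitly via the case split $y\le D$ versus $y>D$ --- a point the paper glosses over by implicitly selecting a branch of $W$. What the paper's approach buys is brevity, at the cost of a black-box citation. All the steps in your reduction check out: solutions necessarily satisfy $x>1$, the hypothesis $C\ge(e/\beta)^\beta$ is exactly $D\ge e$, and $y_0^\beta$ equals the claimed bound.
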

\begin{proof}
Let $W(\cdot)$ denote the Lambert $W$ function.
It is simple to check that $x = \exp(-\beta W(-\frac{1}{\beta C^{1/\beta}}))$
satisfies $x = C \log^\beta(x)$.
From Theorem 3.2 of \citet{alzahrani18}, we have that for any $t > 0$:
\begin{align*}
	W(-e^{-t-1}) > - \log(t+1) - t - \alpha \:, \:\: \alpha = 2 - \log(e-1) \:.
\end{align*}
We now write:
\begin{align*}
	W\left(-\frac{1}{\beta C^{1/\beta}}\right) &= W\left(-e^{\log(\frac{1}{\beta C^{1/\beta}})}\right) \\
	&= W\left( -e^{ - \log(\beta C^{1/\beta})  }    \right) \\
	&= W( -e^{-t-1} ) \:, \:\: t = \log(\beta C^{1/\beta}) - 1 \\
	&> - \log(t+1) - t - \alpha \:.
\end{align*}
where the last inequality uses the result from Alzahrani and Salem 
and the assumption that $C \geq (e/\beta)^\beta$.
We now upper bound $x$:
\begin{align*}
	x &= \exp\left(-\beta W\left(-\frac{1}{\beta C^{1/\beta}}\right)\right) \\
	&\leq \exp( \beta \log(t+1) + \beta t + \alpha \beta ) \\
	&= \exp( \beta \log\log(\beta C^{1/\beta}) ) \exp( \beta \log(\beta C^{1/\beta})) \exp((\alpha-1)\beta) \\
	&= \exp((\alpha-1)\beta) \beta^\beta C \log^\beta(\beta C^{1/\beta}) \:.
\end{align*}
\end{proof}


\section{Experimental Evaluation Details}
\label{sec:app:experiments}

In this section we briefly describe the other algorithms we evaluate in Section~\ref{sec:experiments}, and
also describe how we tune the parameters of these algorithms
for the experiments.

Define the function $J(K;W)$ as:
\begin{subequations}
\begin{align}
	J(K;W) &:= \lim_{T \to \infty} \E\left[ \frac{1}{T} \sum_{t=1}^{T} x_t^\T S x_t + u_t^\T R u_t \right]  \\
	&~~~~~~~~\text{s.t.} \:\: x_{t+1} = A x_t + B u_t + w_t \:, \:\: u_t = K x_t \:, \:\: w_t \sim \calN(0, W) \:.
\end{align}
\label{eq:app:lqr}
\end{subequations}

Certainty equivalence (nominal) control uses data to estimate a model $(\Ah, \Bh) \approx (A, B)$ 
and then solve for the optimal controller to \eqref{eq:app:lqr} via the Riccati equations.
On the other hand,
both policy gradients and DFO are derivative-free random search algorithms on 
$J(K;W)$.
For policy gradients, one uses action-space perturbations to 
obtain an unbiased estimate of the gradient of $J(K; \sigma_w^2 I + \sigma_\eta^2 BB^\T)$.
For DFO, random finite differences are used to obtain an unbiased estimate
of the gradient of $J_{\sigma_\eta}(K) := \E_{\xi}[J(K+\sigma_\eta \xi; \sigma_w^2 I)]$,
where each entry of $\xi$ is drawn i.i.d.\ from $\calN(0, 1)$.
Below, we describe each method in more detail.

\paragraph{Certainty equivalence (nominal) control.}
The certainty equivalence (nominal) controller solves \eqref{eq:app:lqr}
by first constructing an estimate $(\Ah, \Bh) \approx (A, B)$ and then 
outputting the estimated controller $\Kh$ via:
\begin{align*}
	\Kh &= - (\Bh^\T \Ph \Bh + R)^{-1} \Bh^\T \Ph \Ah \:, \\
	\Ph &= \mathsf{dare}(\Ah, \Bh, S, R) \:.
\end{align*}
The estimates $(\Ah, \Bh)$ are constructed via least-squares.
In particular, $N$ trajectories each of length $T$
are collected $\{ x_t^{(i)} \}_{t=1, i=1}^{T, N}$ using the random input sequence
$u_t^{(i)} \sim \calN(0, \sigma_u^2 I)$, and $(\Ah, \Bh)$ are formed as the solution to:
\begin{align*}
	(\Ah, \Bh) = \arg\min_{(A,B)} \frac{1}{2} \sum_{i=1}^{N} \sum_{t=1}^{T-1} \norm{x_{t+1}^{(i)} - A x_t^{(i)} - B u_t^{(i)}}^2 \:. 
\end{align*}
For our experiments, we set $\sigma_u = 1$.

\paragraph{Policy gradients.}
The gradient estimator works as follows. A large horizon length $T$ is fixed.
A trajectory $\{x_t\}$ is rollout out for $T$ timesteps with the input sequence $u_t = K x_t + \eta_t$,
with $\eta_t \sim \calN(0, \sigma_\eta^2 I)$.
Let $\tau_{s:t} = (x_s, u_s, x_{s+1}, u_{s+1}, ..., x_{t}, u_{t})$ denote a
sub-trajectory, and let $c(\tau_{s:t})$ denote the LQR cost over this sub-trajectory,
i.e. $c(\tau_{s:t}) = \sum_{k=s}^{t} x_k^\T S x_k + u_k^\T R u_k$.
The policy gradient estimate is:
\begin{align*}
	\widehat{g} = \frac{1}{T} \sum_{t=1}^{T} \frac{c(\tau_{t:T})}{\sigma_\eta^2} \eta_t x_t^\T \:.
\end{align*}
Of course, one can use a baseline function $b(\tau_{1:{t-1}}, x_t)$ for variance reduction as follows:
\begin{align*}
	\widehat{g} = \frac{1}{T} \sum_{t=1}^{T} \frac{c(\tau_{t:T}) - b(\tau_{1:{t-1}}, x_t)}{\sigma_\eta^2} \eta_t x_t^\T \:.
\end{align*}

\paragraph{DFO.}
We use the two point estimator. As in policy gradients, we fix a horizon length $T$.
We first draw a random perturbation $\xi$.
Then, we rollout one trajectory $\{x_t\}_{t=1}^{T}$ with $u_t = (K + \sigma_\eta \xi) x_t$,
and we rollout another trajectory $\{x'_t\}_{t=1}^{T}$ with $u'_t = (K - \sigma_\eta \xi) x'_t$.
We then use the gradient estimator:
\begin{align*}
	\widehat{g} = \frac{\frac{1}{T}\sum_{t=1}^{T} c_t - \frac{1}{T}\sum_{t=1}^{T} c'_t}{2\sigma_\eta} \xi \:, \:\:
	c_t = x_t^\T S x_t + u_t^\T R u_t \:, \:\:
	c'_t = (x'_t)^\T S x'_t + (u'_t)^\T R u'_t \:. 
\end{align*}

\paragraph{MFLQ.}
We update the policy every $100$ iterations and do not execute a random exploratory action since we found that it negatively affected the performance of the algorithm in practice. In terms of the parameters described in Algorithm~1 of \citet{abbasi18} we execute v2 of the algorithm and set $T_s = \infty$ and $T_v = 100$. We also chose to use all data collected throughout an experiment when updating the policy.

\paragraph{Optimal.}
The optimal controller simply solves for the optimal controller to \ref{eq:app:lqr} given the true matrices $A$ and $B$. That is, it uses the controller
\begin{align*}
	K &= - (B^\T P B + R)^{-1} B^\T P A \:, \\
	P &= \mathsf{dare}(A, B, S, R) \:.
\end{align*}

\paragraph{Offline setup details.}
Recall that we use stochastic gradient descent with a constant step size $\mu$
as the optimizer for both policy gradients and DFO.
After every iteration, we project the iterate $K_t$ onto the set
$\{ K : \norm{K}_F \leq 5 \norm{K_\star}_F \}$, where $K_\star$ is
the optimal LQR controller (we assume the value $\norm{K_\star}_F$ is known for simplicity).
We tune the
parameters of each algorithm as follows.
We consider a grid of step sizes $\mu$ given by $[10^{-3}, 10^{-4}, 10^{-5}, 10^{-6}]$
and a grid of $\sigma_\eta$'s given by $[1, 10^{-1}, 10^{-2}, 10^{-3}]$.
We fix the rollout horizon length $T = 100$ and choose the pair of
$(\sigma_\eta, \mu)$ in the grid which yields the lowest cost after
$10^6$ timesteps. This resulted in the pair
$(\sigma_\eta, \mu) = (1, 10^{-5})$ for policy gradients
and 
$(\sigma_\eta, \mu) = (10^{-3}, 10^{-4})$ for DFO.
As mentioned above, we use the two point evaluation for derivative-free optimization,
so each iteration requires $2T$ timesteps.
For policy gradient, 
we evaluate two different baselines $b_t$.
One baseline, which we call the \emph{simple} baseline, 
uses the empirical average cost $b = \frac{1}{T} \sum_{t=1}^{T} c_t$
from the previous iteration as a constant baseline.
The second baseline, which we call the \emph{value function} baseline,
uses $b(x) = x^\T V(K) x$ with $V(K) = \dlyap(A + BK, S + K^\T R K)$ as the baseline.
We note that using this baseline requires exact knowledge of the dynamics $(A, B)$;
it can however be estimated from data at the expense of additional sample complexity
(c.f. Section~\ref{sec:results:lstdq}). For the purposes of this experiment, we 
simply assume the baseline is available to us.

\paragraph{Online setup details.}
In the online setting we warm-start every algorithm by first collecting $2000$ datapoints collected
by feeding the input $Kx_t + \eta_t$ to the system where $K$ is a stabilizing controller and $\eta_t$
is Gaussian distributed additive noise with standard deviation $1$. We then run each algorithm for $10,000$ iterations.
In the case of LSPI we set the initial number of policy iterations $N$ to be $3$ and subsequently increase it to $4$ at $2000$ iterations, $5$ at $4000$ iterations, and $6$ at $6000$ iterations. We also 
follow the experimental methodology of \citet{dean18} and 
set $T_i = 10(i + 1)$ and set $\sigma^2_{\eta, i} = 0.01\left(\frac{1}{i + 1}\right)^{2/3}$ where $i$ is the epoch number. Finally we repeat each experiment for $100$ trials.

\end{document}